\newtheorem{definition}{Definition}
\newtheorem{theorem}{Theorem}
\newtheorem{lemma}{Lemma}
\title{Imagination-Limited Q-Learning for Offline Reinforcement Learning}
\author{
Wenhui Liu$^1$
\and
Zhijian Wu$^1$\and
Jingchao Wang$^{1}$\and
Dingjiang Huang$^1$\footnote{Corresponding author}\and
Shuigeng Zhou$^2$\\
\affiliations
$^1$East China Normal University\\
$^2$Fudan University\\
\emails
\{whliu\_14, zjwu\_97, jcwang\}@stu.ecnu.edu.cn,
djhuang@dase.ecnu.edu.cn,
sgzhou@fudan.edu.cn
}
\begin{document}

\maketitle

\begin{abstract}
Offline reinforcement learning seeks to derive improved policies entirely from historical data but often struggles with over-optimistic value estimates for out-of-distribution (OOD) actions.
This issue is typically mitigated via policy constraint or conservative value regularization methods. However, these approaches may impose overly constraints or biased value estimates, potentially limiting performance improvements. To balance exploitation and restriction, we propose an Imagination-Limited Q-learning (ILQ) method, which aims to maintain the optimism that OOD actions deserve within appropriate limits. Specifically, we utilize the dynamics model to imagine OOD action-values, and then clip the imagined values with the maximum behavior values. Such design maintains reasonable evaluation of OOD actions to the furthest extent, while avoiding its over-optimism. Theoretically, we prove the convergence of the proposed ILQ under tabular Markov decision processes. Particularly, we demonstrate that the error bound between estimated values and optimality values of OOD state-actions possesses the same magnitude as that of in-distribution ones, thereby indicating that the bias in value estimates is effectively mitigated. Empirically, our method achieves state-of-the-art performance on a wide range of tasks in the D4RL benchmark.
\end{abstract}

\section{Introduction}

Offline Reinforcement Learning (RL)~\cite{lange2012batchRL,fujimoto2019offRL} is designed to learn optimal policies purely from a static dataset previously collected by an unknown policy (behavior policy). By eliminating the need for online interaction with environments, it offers dual benefits. On the one hand, it can mitigate the expensive costs~\cite{gu2017robotic} and potential risks \cite{sallab2017autodriving} associated with trial-and-error learning in real-world applications; on the other hand, it can be leveraged to enhance the generalization ability and scalability of RL models when the logged data is massive and diverse. However, the offline learning paradigm unavoidably incurs distributional shifts \cite{levine2020offlineRL} of state-action visitation between the learned policy and the behavior policy, which makes it difficult to correctly assess out-of-distribution (OOD) action-values. Especially, over-optimistic estimates may even invalidate the learned policy.

To address this challenge, two main classes of technical routes are commonly employed in the model-free approaches \cite{prudencio2023OfflineRL}. 1) Policy constraint: It usually explicitly restricts the gap between the learned and behavior policy. The batch-constrained Q-learning (BCQ) \cite{fujimoto2019offRL} was devised to restrict the action space via adding perturbations on a state-conditioned behavior model. Kumar et al. \cite{kumar2019BEAR} proposed BEAR to reduce maximum mean discrepancy between the learned policy and the behavior one. Subsequently, different methods corresponding to other metrics have been proposed, such as KL divergence for BRAC \cite{wu2019BRAC} and mean squared error for TD3-BC \cite{fujimoto2021TD3-BC,srinivasan2024TD3-BST}, which similarly seeks to steer the policy closer to actions in the dataset. However, these learned models are limited to the neighborhood of the behavior policy hindering their performance, especially when the dataset is collected by poor policies. 2) Value regularization: It aims to utilize value regularizations to suppress OOD action-values. The conservative Q-learning (CQL) \cite{kumar2020CQL} was designed to penalize the expectation of OOD action-values, thus mitigating optimistic estimates outside the dataset. Kostrikov et al. \cite{kostrikov2021IQL} introduced implicit Q-learning (IQL) to estimate value function through expectile regression to implicitly depress OOD action-values. 
The MCQ \cite{lyu2022MCQ} was proposed to regularize OOD action-values with the maximum behavior value.
And the OAC-BVR \cite{huang2024OAC-BVR} was developed to regard the difference between the Q-function and the behavior value as a regularization term.
While these methods effectively limit OOD optimism, they also introduce uncontrollable bias into Q-value estimates, as illustrated in Fig. \ref{fig:CQL_and_ILQ}\subref{fig:value_reg_illustration}. Especially, the Q-values under CQL exhibit noticeable bias of pessimism over all MuJoCo tasks, as shown in Fig. \ref{fig:CQL_and_ILQ}\subref{fig:Q_value_CQL_VS_ILQ}.

\begin{figure*}[!tb]
    \centering
    {
        \includegraphics[width=0.45\textwidth]{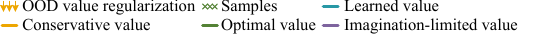}
    }
    {
        \includegraphics[width=0.45\textwidth]{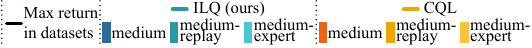}
    }
    \\
    [-2mm]
    \subfloat[ ]{
        \includegraphics[width=0.23\textwidth]{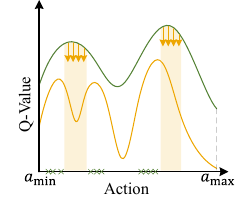}
        \label{fig:value_reg_illustration}
    }
    \subfloat[ ]{
        \includegraphics[width=0.23\textwidth]{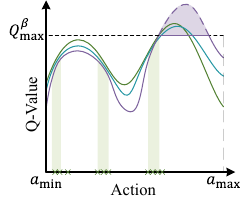}
        \label{fig:ILQ_illustration}
    }
    \subfloat[ ]{
        \includegraphics[width=0.22\textwidth]{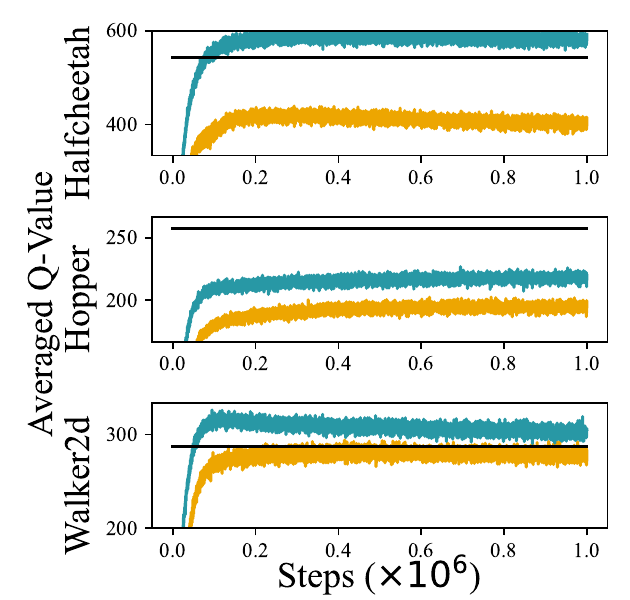}
        \label{fig:Q_value_CQL_VS_ILQ}
    }
    \subfloat[ ]{
        \includegraphics[width=0.22\textwidth]{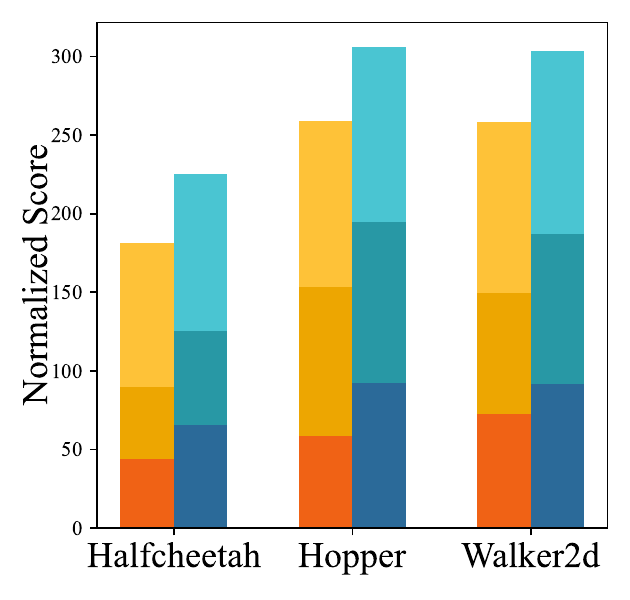}
        \label{fig:score_CQL_VS_ILQ}
    }
    \caption{(a) illustrates the fundamental principle of value regularization methods. While effectively suppressing OOD action-values, it may introduce uncontrolled bias in estimations. In contrast, instead of indiscriminately suppressing OOD action-values, ILQ, depicted in (b), envisions reasonable values (purple line) and then appropriately limits potential over-estimations using the maximum behavior value $Q^{\beta}_{\rm max}$ (black dashed line), resulting in more appropriate policy evaluation (cyan line). (c) demonstrates that Q-value estimations of CQL across MuJoCo ``-v2" tasks are notably compromised, falling well below maximum returns (black line) in datasets. Conversely, ILQ maintains reasonably optimistic Q-value estimations in anticipation of superior policies. Finally, (d) shows that ILQ's ultimate performance is significantly improved, particularly in medium tasks.}
    \label{fig:CQL_and_ILQ}
\end{figure*}

To mitigate value bias while maintaining appropriate restrictions on over-optimism,
we propose an Imagination-Limited Q-learning (ILQ) method. The insight of our method is straightforward: For in-sample state-action pairs, we adopt the standard Bellman backup based on in-sample transitions to estimate their values. For OOD state-action pairs, instead of blindly applying value regularizations to suppress their action-values, we envision \textit{what the values would be without any restrictions}. The one-step bootstrapped values under an imaged dynamics model would be reasonable estimations, ideally approximating the ground truth when the imaged model closely matches the environment. However, errors in the model fitting are inevitable, and optimistic estimates may still exist. 
Therefore, we need to further consider \textit{how to appropriately limit the imagination values}. We employ the maximum behavior value as the ceiling of the imagined one. Specifically, if the imagined value is less than the maximum value, it is maintained; otherwise, the maximum behavior value is applied. Figure \ref{fig:CQL_and_ILQ}\subref{fig:ILQ_illustration} illustrates the intuition behind our method. This design ensures a more reasonable evaluation with appropriate constraints on OOD actions, thereby avoiding unnecessary value suppression and improving the generalization ability of the learned policy.

We validate the effectiveness of ILQ both theoretically and empirically. We prove that the policy evaluation Bellman operator of ILQ is a contraction operator under tabular Markov Decision Processes (MDPs), ensuring convergence. Particularly, we analyze the action-value gap between the fixed point obtained by our method and that obtained by the Bellman optimality equation, demonstrating that the error bound of OOD action-values can reach the same order of magnitude as in-distribution ones. Empirically, our method maintains reasonably optimistic Q-values compared to conservative Q-learning (Fig. \ref{fig:CQL_and_ILQ}\subref{fig:Q_value_CQL_VS_ILQ}) and achieves state-of-the-art performance across a wide range of tasks in the standard benchmark.

\section{Background}

Reinforcement Learning (RL) is commonly modeled as a Markov Decision Process (MDP), characterized by a tuple $(\mathcal{S}, \mathcal{A}, r, P, \rho_0, \gamma)$ \cite{sutton2018RL}, where $\mathcal{S}$ is state space, $\mathcal{A}$ is action space, $r: \mathcal{S} \times \mathcal{A} \to [-r_{\rm max}, r_{\rm max}]$ is reward function, $P: \mathcal{S} \times \mathcal{A} \times \mathcal{S} \to [0, 1]$ is transition dynamics, $\rho_0$ is probability distribution of initial states, and $\gamma \in [0,1)$ is discount factor. The RL agent takes actions on the environment according to its policy, defined as $\pi: \mathcal{S} \times \mathcal{A} \to [0,1]$.

During the learning process, evaluating the expected return of a state-action pair $(s, a)$ under a policy $\pi$, called the action-value (or Q-value) $Q(s, a)$, is essential. 
Theoretically, it can be obtained by $\mathbb{E}_{\pi}[ \sum_{t=0}^{\infty} \gamma^t r(s_t, a_t) \mid  s_0=s, a_0=a ]$. In practice, it is usually approximated by minimizing the Bellman residual $\mathbb{E} [ (Q(s,a) - (\mathcal{T}Q)(s,a) )^2 ]$, where $\mathcal{T}$ is the Bellman optimality operator defined as
\begin{equation}\label{eq:Bellman_opt_operator}
    (\mathcal{T} Q) (s,a) := r(s,a) + \gamma \mathbb{E}_{s^\prime \sim P(\cdot \mid s,a)} \left[ \max_{a^\prime \sim \pi} Q(s^\prime, a^\prime) \right].
\end{equation}
Generally, a delayed approximator $Q^-$ is applied in the above target $\mathcal{T} Q$ for training stability \cite{mnih2015DQNNature}.

\subsection{Offline Reinforcement Learning}

In offline RL, the agent is no longer allowed to interact with the environment and learns policies exclusively from a limited static dataset $\mathcal{D}$, which is typically gathered from an unknown behavior policy $\beta$. The dataset $\mathcal{D}$ is usually represented by a set of transition tuples $\{ (s,a,r,s^\prime) \}$. 
The goal of the agent remains to maximize the expectation of cumulative rewards. 
Without online correction, erroneously optimistic estimates of OOD action-values are inevitable \cite{levine2020offlineRL}. These biases can cause the learned policy to favor incorrect OOD actions, potentially leading to it failure. 

The representative value regularization method is CQL \cite{kumar2020CQL}, which adds penalties for OOD action-values to the standard Q-value update objective, as follows:
\begin{IEEEeqnarray*}{rl}
    \alpha_{\rm CQL} \Bigl( \mathbb{E}_{s \sim \mathcal{D}, a \sim \mu(\cdot \mid s)} &[Q(s,a)] - \mathbb{E}_{s \sim \mathcal{D}, a \sim \beta(\cdot \mid s)} [Q(s,a)] \Bigr) \\
    & + \frac{1}{2} \mathbb{E} \Bigl[ \bigl( Q(s,a)- \mathcal{T}Q(s,a) \bigr)^2 \Bigr],
\end{IEEEeqnarray*}
where $\mu (\cdot | s)$ is a distribution to produce OOD actions, $\alpha_{\rm CQL}$ is a hyperparameter to adjust the degree of conservatism.

\section{Related Work}

\subsection{Model-free Offline RL}

Recent advancements in offline RL have focused on addressing the challenges posed by OOD actions. Importance sampling methods \cite{Nachum2019DualDICE} have been developed to correct evaluation under distributional shifts, but they often suffer from high variance. Policy constraint methods \cite{kumar2019BEAR,wu2021UWAC,fujimoto2021TD3-BC,li2023DOGE,chen2024DTQL} are employed to limit the deviation of learned policies; however, their performance tends to degrade when the behavior policy is suboptimal.

Our method aligns more closely with value regularization approaches. For instance, CQL \cite{kumar2020CQL} and CSVE \cite{chen2023CSVE} directly penalize OOD action-values, effectively controlling over-optimism but often resulting in overly pessimistic estimates. Methods like MCQ \cite{lyu2022MCQ} and OAC-BVR \cite{huang2024OAC-BVR} attempt to relax restrictions by assigning maximum behavior values or behavior values, respectively, to OOD action values. However, this introduces uncontrolled value bias for OOD actions. Although MCQ additionally employs policy constraint weighting to mitigate this bias in practical implementations, it offers limited theoretical guarantees for OOD action-value estimates. 
In contrast, our proposed method preserves more honest estimates of OOD action-values within the limitation of maximum behavior values, and offers theoretical guarantees for its value estimates. 

\subsection{Model-based Offline RL}

Model-based methods aim to enhance collected datasets by generating synthetic trajectories using learned dynamics models. Various strategies have been proposed to effectively leverage these synthetic data, including uncertainty quantification \cite{Ovadia2019ModelUncertainty,Kidambi2020MOReL,yu2020MOPO,diehl2021umbrella}, conservative value estimation \cite{yu2021combo}, representation balancing \cite{lee2021repbal_model}, and adversarial learning \cite{Bhardwaj2023ARMOR}. In contrast, our proposed ILQ avoids trajectories generation, relying solely on the dynamics model to produce one-step subsequent states and rewards of in-sample states for estimating imagined OOD action-values. While ILQ utilizes the dynamics model, it remains fundamentally a model-free learning framework and circumvents challenges of error accumulation associated with longer trajectories in model-based methods.

\section{Imagination-Limited Q-learning Method}

We start by elucidating our novel Imagination-Limited Bellman (ILB) operator in Subsection \ref{sec:ilq-approx-free}.
Subsequently, we elaborate on its practical implementation details and theoretical analysis in Subsection \ref{sec:ilq-diffusion} and \ref{sec:theory}, respectively. And the Imagination-Limited Q-learning (ILQ) algorithm is ultimately summarized in Subsection \ref{sec:ilq-practical}.

\subsection{Imagination-Limited Bellman Operator} \label{sec:ilq-approx-free}

In online RL, researchers typically use the Bellman optimality operator Eq. \eqref{eq:Bellman_opt_operator} to evaluate policies. However, in offline settings, the absence of online corrections makes policy evaluation highly susceptible to OOD over-optimism \cite{levine2020offlineRL} under the standard operator. Existing value regularization methods primarily focus on directly restricting the OOD action-values \cite{kumar2020CQL,lyu2022MCQ,chen2023CSVE,huang2024OAC-BVR}, which introduces uncontrollable bias in value estimates and lacks theoretical guarantees for OOD actions.

We argue that establishing reasonable estimates for out-of-distribution state-actions should take precedence, followed by the imposition of suitable restrictions, rather than directly employing value regularization. To achieve this goal, we introduce a novel Imagination-Limited Bellman operator, defined as follows.
\begin{definition}\label{def:operator}
The Imagination-Limited Bellman (ILB) operator is defined as 

\begin{IEEEeqnarray*}{rl} 
    \mathcal{T}_{\mathrm{ILB}} & Q(s,a) \\
    & = \begin{cases}
r(s,a) + \gamma \mathbb{E}_{s^\prime \sim P} \bigl[ \underset{{\tilde{a}^\prime \sim \pi }}{\max} Q(s^{\prime},\tilde{a}^\prime) \bigr], &{\text{if}}~\beta(a|s)>0 \\ 
\min \left \{ y_{{\rm img}}^{Q}, y_{{\rm lmt}}^{Q} \right \} + \delta, &{\text{otherwise.}} 
\end{cases} \IEEEeqnarraynumspace \IEEEyesnumber\label{eq:def-ILB-operator}
\end{IEEEeqnarray*}
where $\beta$ is the behavior policy, 
\begin{equation} \label{eq:def-img-value}
    y_{{\rm img}}^{Q} = \widehat{r} (s, a) 
    + \gamma \mathbb{E}_{ {\widehat{s}^{\prime} \sim \widehat{P}(\cdot \mid s, a)} } \left[ \max_{\tilde{a}^{\prime} \sim  \pi } Q (\widehat{s}^{\prime}, \tilde{a}^{\prime}) \right],
\end{equation}
and
\begin{equation} \label{eq:def-lmt-value}
    y_{{\rm lmt}}^{Q} = \max_{{\widehat{a} \in {\rm Supp}(\beta(\cdot \mid s))}} Q (s, \widehat{a})
\end{equation}
are the imagined value and its limitation, respectively. The $\widehat{P}$ is the empirical transition kernel, $\widehat{r}$ is the empirical reward function, $\delta$ is a hyperparameter with a small absolute value, and ${\rm Supp} (\cdot)$ means support-constrained on the dataset.
\end{definition}

Here is the insight behind the proposed ILB operator. For an in-sample state-action pair $(s, a)$, i.e., $\beta(a \mid s)>0$, we have its corresponding transition $(s, a, r, s^\prime)$ in the dataset, allowing us to apply the standard Bellman operator without any obstacles. However, for an out-of-sample state-action pair $(s, a^{\rm oos})$, the standard Bellman backup cannot be applied solely due to the absence of its successor state and reward. To address this, one could utilize empirical dynamics model to predict the next state $\widehat{s}^\prime$ and reward $\widehat{r}$ and obtain the imagination value $y_{\rm img}^Q$ as Eq. \eqref{eq:def-img-value}, which provides a relatively accurate approximation for an OOD state-action. Nevertheless, it may still result in optimistic estimates because of fitting errors. To tackle this issue, we use the maximum in-distribution action-value Eq. \eqref{eq:def-lmt-value} as the upper limit for the imagined value. This design offers dual benefits: First, it maximally maintains the imagined value, reducing estimation bias on OOD actions; second, the maximum behavior value ensures that there is always an in-distribution action-value greater than or equal to the OOD ones, encouraging the policy to more likely favor in-distribution actions during the actor improvement process.

We analyze the ILB operator's properties and demonstrate that the ILB operator exhibits the same $\gamma$-contraction property as the standard Bellman operator, ensuring convergence in policy evaluation. The proof is provided in the Appendix.

\begin{theorem}[\textbf{Convergence}]\label{thm:convergence}
    The ILB operator defined in Eq. \eqref{eq:def-ILB-operator} is a $\gamma$-contraction operator in the $\mathcal{L}_{\infty}$ norm, and Q-function iteration rule obeying the ILB operator can converge to a unique fixed point.
\end{theorem}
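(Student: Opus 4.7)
The plan is to apply the Banach fixed-point theorem: once $\mathcal{T}_{\mathrm{ILB}}$ is shown to be a $\gamma$-contraction in the $\mathcal{L}_{\infty}$ norm on the Banach space of bounded Q-functions on $\mathcal{S}\times\mathcal{A}$, the existence and uniqueness of a fixed point and the convergence of the iterates $Q^{k+1} = \mathcal{T}_{\mathrm{ILB}} Q^{k}$ follow immediately. So the technical content is the pointwise estimate $|\mathcal{T}_{\mathrm{ILB}} Q_1(s,a) - \mathcal{T}_{\mathrm{ILB}} Q_2(s,a)| \le \gamma \|Q_1 - Q_2\|_{\infty}$ for arbitrary bounded $Q_1, Q_2$, after which the supremum over $(s,a)$ yields the contraction.

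I would split according to the two cases in the definition. For in-distribution $(s,a)$ with $\beta(a\mid s)>0$, the operator coincides with the standard Bellman optimality operator, and the classical short argument applies: the pointwise maximum over $\tilde{a}'$ is $1$-Lipschitz in $Q$, the expectation over $s'$ is $1$-Lipschitz, and the multiplicative $\gamma$ out front delivers the desired bound. For out-of-distribution $(s,a)$, the additive $\delta$ cancels, and the elementary inequality $|\min\{a,b\} - \min\{c,d\}| \le \max\{|a-c|, |b-d|\}$ reduces the problem to bounding $|y^{Q_1}_{\mathrm{img}} - y^{Q_2}_{\mathrm{img}}|$ and $|y^{Q_1}_{\mathrm{lmt}} - y^{Q_2}_{\mathrm{lmt}}|$ separately. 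The imagination term is itself a Bellman backup under the empirical kernel $\widehat{P}$ and reward $\widehat{r}$, so the same argument used for the in-distribution case delivers $|y^{Q_1}_{\mathrm{img}} - y^{Q_2}_{\mathrm{img}}| \le \gamma \|Q_1 - Q_2\|_{\infty}$.

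The limitation term $y^{Q}_{\mathrm{lmt}} = \max_{\hat a \in \mathrm{Supp}(\beta(\cdot \mid s))} Q(s,\hat a)$ is where I expect the main difficulty. A direct use of $1$-Lipschitzness of the maximum only gives the non-expansive estimate $|y^{Q_1}_{\mathrm{lmt}} - y^{Q_2}_{\mathrm{lmt}}| \le \|Q_1 - Q_2\|_{\infty}$, missing the factor $\gamma$. I would resolve this by exploiting that every pair $(s,\hat a)$ contributing to the maximum is in-distribution and is therefore governed by the standard Bellman branch, which already carries its own $\gamma$; propagating this observation through the bound reinserts the missing factor and closes the contraction. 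Having established the $\gamma$-contraction, Banach's theorem yields uniqueness of the fixed point and geometric convergence of the iterates, completing the proof.
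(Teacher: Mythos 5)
Your overall decomposition is the same as the paper's: split on whether $\beta(a\mid s)>0$, cancel the $\delta$, and use $|\min\{a,b\}-\min\{c,d\}|\le\max\{|a-c|,|b-d|\}$ to reduce the out-of-distribution case to the two diagonal comparisons (the paper reaches the same reduction by enumerating which argument attains each min and bounding the cross terms by the diagonal ones). Your handling of the in-sample branch and of $\bigl|y^{Q_1}_{\rm img}-y^{Q_2}_{\rm img}\bigr|$ is correct and matches the paper. You have also correctly put your finger on the one step that does not go through: $y^{Q}_{\rm lmt}=\max_{\hat a\in{\rm Supp}(\beta(\cdot\mid s))}Q(s,\hat a)$ contains no discount factor, so the direct estimate gives only $\bigl|y^{Q_1}_{\rm lmt}-y^{Q_2}_{\rm lmt}\bigr|\le\|Q_1-Q_2\|_\infty$.

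However, your proposed repair does not work. A contraction bound must hold for \emph{arbitrary} bounded $Q_1,Q_2$, and the fact that $(s,\hat a)$ is in-distribution constrains how $\mathcal{T}_{\rm ILB}Q(s,\hat a)$ is computed, not the value $Q(s,\hat a)$ itself; for an arbitrary $Q$ that number is just a number and carries no $\gamma$. Concretely, take $Q_2=Q_1+c$ for a constant $c>0$: then $y^{Q_2}_{\rm lmt}=y^{Q_1}_{\rm lmt}+c$ while $y^{Q_2}_{\rm img}=y^{Q_1}_{\rm img}+\gamma c$, so at any out-of-support pair where the limitation strictly binds for both functions (exactly the regime the clipping is designed for) the images under $\mathcal{T}_{\rm ILB}$ differ by exactly $c=\|Q_1-Q_2\|_\infty$, defeating any factor $\gamma<1$. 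For what it is worth, the paper's own proof shares this gap: it asserts that the limitation-versus-limitation case follows ``analogous to'' the in-sample derivation, which it does not, since the in-sample derivation extracts its $\gamma$ from the discounted expectation and $y_{\rm lmt}$ has none. A genuine fix along the lines you gesture at would be to show that $\mathcal{T}_{\rm ILB}$ is non-expansive and that the two-fold composition $\mathcal{T}_{\rm ILB}^2$ is a $\gamma$-contraction---after one application every in-support value really is a discounted backup, so the limitation term inherits a $\gamma$ on the second pass---and then apply Banach's theorem to $\mathcal{T}_{\rm ILB}^2$; this still yields a unique fixed point and convergence of the iteration, but it proves a weaker statement than the theorem as written.
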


\subsection{Practical Implementation of Imagination and Limitation Value}

\subsubsection{Imagination Value}

We fit the environment dynamics to derive empirical transfer kernel $\widehat{P}$ and reward function $\widehat{r}$ in the simplest manner using
\begin{equation} \label{eq:opt_empirical_dynamics}
    \max_{\widehat{T}_{\psi}} \mathbb{E}_{(s,a,r,s^\prime) \sim \mathcal{D}} \left[ \log \widehat{T}_{\psi} (r,s^\prime \mid s,a) \right],
\end{equation}
where $\widehat{T}_{\psi}$ stands for both $\widehat{P}$ and $\widehat{r}$ for brevity, and is represented by a multivariate Gaussian distribution with parameters $\psi$ practically. We then obtain the imagined value $y_{\rm img}^Q$. 

\subsubsection{Limitation Value} \label{sec:ilq-diffusion}

In fact, the behavior policy $\beta(\cdot \mid s)$ in Eq. \eqref{eq:def-lmt-value} is unknown and needs to be empirically modeled.
In light of the expressiveness of diffusion models \cite{ho2020DDPM}, we fit the behavior policy using a conditional diffusion model. Specifically, it is constructed via a reverse diffusion chain, formulated as
\begin{equation} \label{eq:cond_diff_behavior}
    {\rm Diff}_{\omega}(a \mid s) := \mathcal{N}(a^K; 0, I) \prod^K_{k=1} p_{\omega}(a^{k-1} \mid a^k, s)
\end{equation}
where superscript $k$ denotes the diffusion timestep, $a:=a^0$ is the final sampled action, $a^{k},~ k=1, \cdots, K-1$, are latent variables, $a^K\sim \mathcal{N}(0,I)$ is Gaussian noise. Typically, $p_{\omega}(a^{k-1} \mid a^k, s)$ is modeled as a Gaussian distribution $\mathcal{N}\left( a^{k-1}; \mu_{\omega}(a^k,s,k), \Sigma_{\omega}(a^k,s,k) \right)$ with the covariance matrix $\Sigma_{\omega}(a^k,s,k) = \beta_k I$ and the mean defined as
\begin{equation}
    \mu_{\omega}(a^k,s,k) = \frac{1}{\sqrt{\alpha_k}} \left( a^k - \frac{\beta_k}{\sqrt{1-\bar{\alpha}_k}} \xi_{\omega}(a^k,s,k) \right),
\end{equation}
where $\beta_k$ is the variance schedule, $\alpha_k := 1-\beta_k$,  $\bar{\alpha}_k := \prod_{i=1}^k \alpha_i$, and $\xi_{\omega}(\cdot)$ is the noise prediction network with parameters $\omega$. The conditional diffusion model is optimized by maximizing the evidence lower bound, which can be simplified \cite{ho2020DDPM} to minimize the following objective
\begin{IEEEeqnarray}{C}\label{eq:opt_cond_diff}
    \min_{\omega} \mathbb{E}_{\substack{k \sim \mathcal{U}, \xi \sim \mathcal{N}(0,I)\\ (s,a)\sim \mathcal{D}}} \left \| \xi - \xi_{\omega} (\sqrt{\bar{\alpha}_k}a + \sqrt{1-\bar{\alpha}_k }\xi, s, k) \right \|^2 , \IEEEeqnarraynumspace
\end{IEEEeqnarray}
where $\mathcal{U}$ is an uniform distribution over $\{1, \cdots, K\}$. Similar diffusion behavior modeling methods are also applied in other works \cite{wang2022diffusionQL,hansen2023IDQL}.

Accordingly, we adopt the limitation value as shown in the following equation:
\begin{equation} \label{eq:cond-diff-lmt-value}
    y_{{\rm lmt}}^Q = \max_{\substack{\widehat{a}_m \sim {\rm Diff}_{\omega} \left( \cdot \mid s \right) \\ m = 1, \cdots, M}} Q \left( s, \widehat{a}_m \right),
\end{equation}
where $M$ is the number of sampled actions. Due to possible errors in the fitting process, this may result in a deviation between the estimated value Eq. \eqref{eq:cond-diff-lmt-value} and the true value Eq. \eqref{eq:def-lmt-value}. 
In order to offset this gap, we introduced the hyperparameter $\delta$ in definition Eq. \eqref{eq:def-ILB-operator}, typically set to a small absolute value. 

\subsection{Theoretical Analysis} \label{sec:theory}

We now theoretically discuss the action-value gap between the fixed point in Theorem \ref{thm:convergence} and the Bellman optimality value. Before proceeding further, we make some commonly used assumptions about the reward function \cite[Assumption 1]{huang2024OAC-BVR}.
\begin{enumerate}
    \item The reward function is bounded, i.e., $|r(s,a)| \leq r_{\max}$. Actually, this is consistent with what is required by its definition $r(s,a): \mathcal{S} \times \mathcal{A} \to [-r_{\rm max}, r_{\rm max}]$.
    \item Similar to the Lipschitz condition, i.e., $|r(s,\tilde{a}_1) - r(s,\tilde{a}_2)| \leq \ell \|\tilde{a}_1 - \tilde{a}_2\|_{\infty}$, $\forall s \in \mathcal{S}$ and $\forall \tilde{a}_1, \tilde{a}_2 \in \mathcal{A}$, where $\ell$ is a constant. This requires that the reward function satisfies Lipschitz continuity with respect to actions.
\end{enumerate}
In addition, the error bound assumption between the empirical models and the real ones are required, which is also utilized in both \cite{kumar2020CQL} and \cite{huang2024OAC-BVR}. Suppose the $\widehat{r}$ and $\widehat{P}$ are the empirical reward function and empirical transition dynamics, respectively, the following relationships hold with high probability $\geq 1-\zeta$, $\zeta \in (0,1)$, 
    \begin{IEEEeqnarray}{C}
        \Bigl\lVert \widehat{r}(s,a) - r(s,a) \Bigr\rVert_{1} \leq \nicefrac{\zeta_r}{\sqrt{D}}, \label{eq:emp_reward_assump}\\
        \Bigl\lVert \widehat{P}(\cdot \mid s,a) - P(\cdot \mid s,a) \Bigr\rVert_{1} \leq \nicefrac{\zeta_P}{\sqrt{D}}, \label{eq:emp_dynamics_assump}
    \end{IEEEeqnarray}
    where $D$ is the constant related to the dataset size, $\zeta_{r}$ and $\zeta_{P}$ are constants related to $\zeta$.

We begin by analyzing the Bellman optimality value gap between the learned policy and behavior policy.
\begin{theorem}\label{lem:optmality_Q_gap}
    Suppose $Q_{\beta^*}$ is the fixed point of the support-constrained Bellman optimality operator. The following gap can be obtained
    \begin{IEEEeqnarray}{C}
        \left \lvert Q_{\beta^*} (s, \pi(s)) - Q_{\beta^*} (s, \beta(s)) \right \rvert
        \leq \ell \epsilon_{\pi} + \gamma \frac{|\mathcal{S}|r_{\rm max}}{1-\gamma} \epsilon_P, \IEEEeqnarraynumspace
    \end{IEEEeqnarray}
    where $\epsilon_{\pi} := \max_{s} \left \lVert \pi(s) - \beta(s) \right \rVert_{\infty}$ and $\epsilon_{P} := \left \lVert P^{\pi} -P^{\beta} \right \rVert_{\infty}$.
\end{theorem}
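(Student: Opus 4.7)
The plan is to expand both $Q_{\beta^*}(s,\pi(s))$ and $Q_{\beta^*}(s,\beta(s))$ using the fixed-point equation for the support-constrained Bellman optimality operator and then split the resulting difference into two contributions: one coming from the reward function and one coming from the transition kernel. Concretely, writing $V_{\beta^*}(s'):=\max_{a'\in\mathrm{Supp}(\beta(\cdot\mid s'))}Q_{\beta^*}(s',a')$, the fixed-point identity gives
\begin{equation*}
Q_{\beta^*}(s,a)=r(s,a)+\gamma\sum_{s'}P(s'\mid s,a)V_{\beta^*}(s'),
\end{equation*}
so that the quantity to be bounded decomposes as $[r(s,\pi(s))-r(s,\beta(s))]+\gamma\sum_{s'}\bigl[P(s'\mid s,\pi(s))-P(s'\mid s,\beta(s))\bigr]V_{\beta^*}(s')$. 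I would then bound the two terms separately and apply the triangle inequality at the end.

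For the reward term I would directly invoke Assumption 2: since $\pi(s)$ and $\beta(s)$ are both actions in $\mathcal{A}$, the Lipschitz-like condition yields $|r(s,\pi(s))-r(s,\beta(s))|\le \ell\|\pi(s)-\beta(s)\|_{\infty}\le \ell\epsilon_{\pi}$, which produces the first term on the right-hand side of the claimed bound. For the transition term I would first use the boundedness of rewards together with the standard geometric-series argument to obtain $\|V_{\beta^*}\|_{\infty}\le r_{\max}/(1-\gamma)$, which follows by iterating the support-constrained Bellman equation and using $|r|\le r_{\max}$. Then, interpreting $P^{\pi}$ and $P^{\beta}$ as the state-to-state transition matrices induced by evaluating $P(\cdot\mid s,\cdot)$ at $\pi(s)$ and $\beta(s)$ respectively, I would estimate
\begin{equation*}
\Bigl|\sum_{s'}\bigl(P(s'\mid s,\pi(s))-P(s'\mid s,\beta(s))\bigr)V_{\beta^*}(s')\Bigr|\le |\mathcal{S}|\cdot\|P^{\pi}-P^{\beta}\|_{\infty}\cdot\|V_{\beta^*}\|_{\infty},
\end{equation*}
which combines to give $\gamma\,|\mathcal{S}|r_{\max}\epsilon_P/(1-\gamma)$, matching the second term.

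The main obstacle I expect is the careful handling of the norm conventions: $\epsilon_{P}$ is written as an $\infty$-norm of a matrix-valued object, and one must interpret it as the elementwise supremum over $(s,s')$ pairs so that the extra $|\mathcal{S}|$ factor in the final bound is justified when collapsing a sum $\sum_{s'}|\cdot|$ against $\|V_{\beta^*}\|_{\infty}$. A secondary subtlety is ensuring that $V_{\beta^*}$ is well defined and uniformly bounded, which requires invoking the convergence/contraction of the support-constrained Bellman optimality operator (a variant of the argument used in Theorem~\ref{thm:convergence}) so that the fixed point $Q_{\beta^*}$ exists and satisfies $\|Q_{\beta^*}\|_{\infty}\le r_{\max}/(1-\gamma)$. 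Once these normative conventions are fixed, assembling the reward bound and the transition bound via the triangle inequality finishes the proof.
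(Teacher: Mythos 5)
Your proposal is correct and follows essentially the same route as the paper's proof: expand both terms via the fixed-point identity of the support-constrained operator, bound the reward difference by the Lipschitz assumption to get $\ell\epsilon_{\pi}$, and bound the transition difference by combining $\lvert Q_{\beta^*}\rvert \le r_{\max}/(1-\gamma)$ with the elementwise-supremum reading of $\lVert P^{\pi}-P^{\beta}\rVert_{\infty}$, which is exactly where the paper's $|\mathcal{S}|$ factor also comes from. The subtleties you flag (the norm convention on $\epsilon_P$ and the geometric-series bound on the value function) are precisely the ones the paper handles in its Eq.~(27)-style bound $|Q(s,a)|\le r_{\max}/(1-\gamma)$, so no gap remains.
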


Accordingly, we could prove the error bound between the imagination value and Bellman optimality value.
\begin{theorem} \label{lem:img_optimality_Q_gap}
    Suppose $Q_{\beta^*}$ is the fixed point of support-constrained Bellman optimality operator. The gap between the imagination value $y_{\rm img}^{Q_{\beta^*}}$ and $Q_{\beta^*}$ has:
    \begin{equation}
        \begin{aligned}
            &\left \lvert y^{Q_{\beta^*}}_{\rm img} - Q_{\beta^*}(s,a) \right \rvert \\
            & \leq \frac{\zeta_r}{\sqrt{D}} + \gamma \ell \epsilon_{\pi} 
            + \gamma^2 \frac{|\mathcal{S}| r_{\rm max}}{1 - \gamma} \epsilon_P + \gamma \frac{\zeta_P}{\sqrt{D}} \frac{r_{\rm max}}{1-\gamma}.
        \end{aligned}
    \end{equation}
\end{theorem}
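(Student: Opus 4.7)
The plan is to bound $\left\lvert y^{Q_{\beta^*}}_{\rm img} - Q_{\beta^*}(s,a) \right\rvert$ via a triangle-inequality decomposition into three sources of error: the empirical reward error, the empirical transition error, and the action-distribution mismatch between $\pi$ and $\beta$. By the definition of the support-constrained Bellman optimality operator, $Q_{\beta^*}(s,a) = r(s,a) + \gamma\, \mathbb{E}_{s'\sim P}[\max_{a'\sim\beta} Q_{\beta^*}(s',a')]$, so subtracting from the definition \eqref{eq:def-img-value} of $y^{Q_{\beta^*}}_{\rm img}$ gives
\begin{equation*}
    y^{Q_{\beta^*}}_{\rm img} - Q_{\beta^*}(s,a) = \bigl(\widehat{r}(s,a) - r(s,a)\bigr) + \gamma\, \Delta,
\end{equation*}
where $\Delta := \mathbb{E}_{\widehat{s}'\sim\widehat{P}}[\max_{a'\sim\pi} Q_{\beta^*}(\widehat{s}',a')] - \mathbb{E}_{s'\sim P}[\max_{a'\sim\beta} Q_{\beta^*}(s',a')]$.

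First I would handle the reward term directly: by assumption \eqref{eq:emp_reward_assump}, $|\widehat{r}(s,a) - r(s,a)| \leq \zeta_r/\sqrt{D}$, yielding the first summand in the target bound. Next, to control $\Delta$, I insert the intermediate quantity $\mathbb{E}_{s'\sim P}[\max_{a'\sim\pi} Q_{\beta^*}(s',a')]$ and split
\begin{align*}
    |\Delta| \leq{} &\left\lvert \mathbb{E}_{\widehat{s}'\sim\widehat{P}}\bigl[V(\widehat{s}')\bigr] - \mathbb{E}_{s'\sim P}\bigl[V(s')\bigr] \right\rvert \\
    &{}+ \mathbb{E}_{s'\sim P}\left\lvert \max_{a'\sim\pi} Q_{\beta^*}(s',a') - \max_{a'\sim\beta} Q_{\beta^*}(s',a') \right\rvert,
\end{align*}
where $V(s') := \max_{a'\sim\pi} Q_{\beta^*}(s',a')$.

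For the first piece I would apply H\"older's inequality with the $L_1$ transition bound \eqref{eq:emp_dynamics_assump} and the standard uniform bound $\lVert Q_{\beta^*}\rVert_\infty \leq r_{\max}/(1-\gamma)$ (which follows from the geometric-series argument on the bounded rewards), giving $\frac{\zeta_P}{\sqrt{D}}\cdot\frac{r_{\max}}{1-\gamma}$. For the second piece, I note that the integrand is exactly $|Q_{\beta^*}(s',\pi(s')) - Q_{\beta^*}(s',\beta(s'))|$, so Theorem \ref{lem:optmality_Q_gap} applies pointwise at every $s'$ and the bound $\ell\epsilon_\pi + \gamma\frac{|\mathcal{S}|r_{\max}}{1-\gamma}\epsilon_P$ passes through the expectation. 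Multiplying $|\Delta|$ by $\gamma$ and recombining with the reward contribution yields precisely $\frac{\zeta_r}{\sqrt{D}} + \gamma\ell\epsilon_\pi + \gamma^2\frac{|\mathcal{S}|r_{\max}}{1-\gamma}\epsilon_P + \gamma\frac{\zeta_P}{\sqrt{D}}\frac{r_{\max}}{1-\gamma}$.

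The main obstacle will be bookkeeping in the three-way decomposition: one must be careful that the inserted intermediate term swaps only the transition (not the max-operator) in one piece, and only the max-operator (not the transition) in the other, so that the two pieces align cleanly with assumption \eqref{eq:emp_dynamics_assump} and Theorem \ref{lem:optmality_Q_gap} respectively. A secondary subtlety is verifying $\lVert Q_{\beta^*}\rVert_\infty \leq r_{\max}/(1-\gamma)$ under the support-constrained operator, but this is immediate because the support-restricted max is still upper-bounded by the unconstrained max, so the usual discount-series bound applies.
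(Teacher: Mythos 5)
Your proof is correct and follows essentially the same route as the paper's: fixed-point identity, split off the reward error via \eqref{eq:emp_reward_assump}, then a triangle-inequality insertion of an intermediate term separating the transition error (H\"older with \eqref{eq:emp_dynamics_assump} and $\lVert Q_{\beta^*}\rVert_\infty \le r_{\max}/(1-\gamma)$) from the $\pi$-versus-$\beta$ action mismatch (Theorem \ref{lem:optmality_Q_gap} applied pointwise). The only difference is that you insert $\mathbb{E}_{s'\sim P}[\max_{a'\sim\pi}Q_{\beta^*}(s',a')]$ whereas the paper inserts $\mathbb{E}_{\widehat{s}'\sim\widehat{P}}[\max_{a'\in{\rm Supp}(\beta)}Q_{\beta^*}(\widehat{s}',a')]$; since each piece is bounded uniformly and both $P$ and $\widehat{P}$ are probability kernels, the two orderings give identical constants.
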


Based on above theorems, we can estimate the action-value gap between the fixed point of the ILB operator and $Q_{\beta^*}$.

\begin{theorem}[\textbf{Action-value gap}]\label{thm:action-value-gap}
    Suppose $Q_{\rm ILB}$ and $Q_{\beta^*}$ denote the fixed point of the ILB operator and support-constrained Bellman optimality operator, separately. The action-value gap can be bounded as 
    \begin{equation}
        \begin{aligned}
        &\left\| Q_{\rm ILB} (s,a) - Q_{\beta^*} (s,a) \right\|_{\infty}\\
            &\leq \frac{1}{1-\gamma} \frac{\zeta_r}{\sqrt{D}} + \frac{\ell}{1-\gamma} \epsilon_{\pi} \\
            &\phantom{=\;} + \frac{\gamma |\mathcal{S}| r_{\rm max}}{(1-\gamma)^2} \epsilon_P + \frac{\gamma r_{\rm max}}{(1-\gamma)^2} \frac{\zeta_P}{\sqrt{D}} + \frac{1}{1-\gamma} \lvert \delta \rvert,
        \end{aligned}
    \end{equation}
    where $\zeta_r,~\zeta_P$ are defined in Eq. \eqref{eq:emp_reward_assump} and Eq. \eqref{eq:emp_dynamics_assump}, $\epsilon_r,~\epsilon_P$ are defined in Theorem \ref{lem:optmality_Q_gap}, $\delta$ is defined in the ILB operator.
\end{theorem}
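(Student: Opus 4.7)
The plan is to exploit the $\gamma$-contractivity of $\mathcal{T}_{\mathrm{ILB}}$ established in Theorem~\ref{thm:convergence} to reduce the global error to a one-step Bellman residual. Concretely, since both $Q_{\mathrm{ILB}}$ and $Q_{\beta^*}$ are fixed points of $\gamma$-contractions in $\mathcal{L}_\infty$, a standard telescoping argument gives
\begin{equation*}
\lVert Q_{\mathrm{ILB}} - Q_{\beta^*}\rVert_\infty \;\leq\; \frac{1}{1-\gamma}\,\bigl\lVert \mathcal{T}_{\mathrm{ILB}} Q_{\beta^*} - Q_{\beta^*}\bigr\rVert_\infty .
\end{equation*}
The task then reduces to bounding $\lvert \mathcal{T}_{\mathrm{ILB}} Q_{\beta^*}(s,a) - Q_{\beta^*}(s,a)\rvert$ pointwise, which I would handle by splitting on whether $(s,a)$ lies in the support of $\beta$ or not, and whose supremum over $(s,a)$ supplies each of the five terms in the target bound after division by $1-\gamma$.

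For the in-distribution branch ($\beta(a\mid s) > 0$), the ILB operator and the support-constrained Bellman optimality operator differ only through the choice of action at the successor: $\pi$ in the former versus the support-constrained greedy $\beta^*$ in the latter. I would write the residual as a discounted expectation of $Q_{\beta^*}(s',\pi(s')) - Q_{\beta^*}(s',\beta^*(s'))$, then sandwich this quantity by applying Theorem~\ref{lem:optmality_Q_gap} together with the argmax property of $\beta^*$, producing an $\ell\epsilon_\pi + \gamma\lvert\mathcal{S}\rvert r_{\max}/(1-\gamma)\,\epsilon_P$ contribution.

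For the OOD branch I would use $\mathcal{T}_{\mathrm{ILB}} Q_{\beta^*}(s,a) = \min\{y_{\mathrm{img}}^{Q_{\beta^*}}, y_{\mathrm{lmt}}^{Q_{\beta^*}}\} + \delta$ and split into two subcases depending on which argument of the min is active. When $y_{\mathrm{img}}^{Q_{\beta^*}}$ is the minimum, Theorem~\ref{lem:img_optimality_Q_gap} directly controls $\lvert y_{\mathrm{img}}^{Q_{\beta^*}} - Q_{\beta^*}(s,a)\rvert$, yielding the $\zeta_r/\sqrt{D}$ and $\zeta_P r_{\max}/((1-\gamma)\sqrt{D})$ terms. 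When $y_{\mathrm{lmt}}^{Q_{\beta^*}}$ is the minimum, the upper direction still follows from $y_{\mathrm{lmt}}^{Q_{\beta^*}} \leq y_{\mathrm{img}}^{Q_{\beta^*}}$ combined with Theorem~\ref{lem:img_optimality_Q_gap}; the opposite direction needs a lower bound on $y_{\mathrm{lmt}}^{Q_{\beta^*}} = \max_{\widehat{a}\in\mathrm{Supp}(\beta(\cdot\mid s))} Q_{\beta^*}(s,\widehat{a})$ in terms of $Q_{\beta^*}(s,a)$, which I would obtain by dominating the max by $Q_{\beta^*}(s,\beta(s))$ and then invoking Theorem~\ref{lem:optmality_Q_gap} to compare with $Q_{\beta^*}(s,\pi(s))$, routed through an OOD-to-$\pi$ step bounded again by the Lipschitz reward assumption. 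The additive $\delta$ contributes exactly $\lvert\delta\rvert$, which after the $1/(1-\gamma)$ amplification yields the last term of the stated bound.

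The main obstacle will be the OOD lower-bound direction just sketched: the $\min$ could in principle drag $\mathcal{T}_{\mathrm{ILB}} Q_{\beta^*}(s,a)$ arbitrarily far below $Q_{\beta^*}(s,a)$, so the argument critically relies on the fact that the subcase where $y_{\mathrm{lmt}}^{Q_{\beta^*}}$ is active automatically provides $y_{\mathrm{lmt}}^{Q_{\beta^*}} \leq y_{\mathrm{img}}^{Q_{\beta^*}} \leq Q_{\beta^*}(s,a) + (\text{Thm \ref{lem:img_optimality_Q_gap}})$ from above, while controlling it from below forces a careful interplay between the support-constrained greedy action, the behavior action $\beta(s)$, and the learned $\pi(s)$ via Theorem~\ref{lem:optmality_Q_gap}. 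Making sure the two sub-bounds match the coefficients claimed in Theorem~\ref{thm:action-value-gap}, rather than merely being of the right asymptotic order, is the delicate step; once this is settled, the rest is assembly by taking the supremum over $(s,a)$ and dividing by $1-\gamma$.
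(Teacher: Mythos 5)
Your proposal is correct and follows essentially the same route as the paper: reduce $\lVert Q_{\rm ILB}-Q_{\beta^*}\rVert_\infty$ to $\tfrac{1}{1-\gamma}\lVert\mathcal{T}_{\rm ILB}Q_{\beta^*}-\mathcal{T}_{\rm Supp}Q_{\beta^*}\rVert_\infty$ via the contraction/fixed-point argument, bound the in-support branch with Theorem~\ref{lem:optmality_Q_gap}, and handle the OOD branch by casing on which argument of the $\min$ is active, invoking Theorem~\ref{lem:img_optimality_Q_gap} for $y_{\rm img}$ and Theorem~\ref{lem:optmality_Q_gap} for $y_{\rm lmt}$. The only cosmetic difference is that the paper dispenses with your separate upper/lower treatment of the $y_{\rm lmt}$-active subcase by directly using $\bigl|\min\{y_{\rm img},y_{\rm lmt}\}-Q_{\beta^*}(s,a)\bigr|\le\max\bigl\{|y_{\rm img}-Q_{\beta^*}(s,a)|,\,|y_{\rm lmt}-Q_{\beta^*}(s,a)|\bigr\}$ and then taking coefficient-wise maxima, which resolves the matching-of-coefficients concern you flag at the end.
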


According to Theorem \ref{thm:action-value-gap}, we conclude that error bounds for in-sample and out-of-sample actions are of the same magnitude $\mathcal{O}(\nicefrac{r_{\rm max}}{(1-\gamma)^2})$. This result aligns with the conclusion of CQL \cite{kumar2020CQL} within the support region. All proofs are detailed in the Appendix.

\subsection{The ILQ Algorithm} \label{sec:ilq-practical}

In deep RL, the Q-function is commonly approximated by a neural network with parameters $\theta$, while the corresponding target network has parameters $\theta^-$. As described in the background, it can be optimized by minimizing the temporal difference (TD) loss $\mathbb{E}_{(s,a,r,s^\prime)}[(Q_{\theta} (s,a) - \mathcal{T}Q_{\theta^-} (s,a))^2]$. Intuitively, under the ILB operator we developed, the corresponding loss function can be constructed as 
\begin{equation} \label{eq:critic_loss_1}
    \mathbb{E}_{(s,a,r,s^\prime)} \Bigl[ \bigl( Q_{\theta} (s,a) - \mathcal{T}_{\rm ILB}Q_{\theta^-} (s,a) \bigr)^2 \Bigr].
\end{equation}
Since in-sample and out-of-sample state-action pairs have different TD targets, as defined by our ILB operator, a common way is to split the above loss function into a weighted sum of in-sample and out-of-sample components as follows
\begin{equation} \label{eq:critic_loss_2}
    \begin{aligned}
        & \mathcal{L}_{Q} (\theta) = \eta \mathbb{E}_{(s,a,r,s^\prime) \sim \mathcal{D}} \Bigl[ \bigl( Q_{\theta} (s,a) - \mathcal{T}_{\rm ILB}Q_{\theta^-} (s,a) \bigr)^2 \Bigr] \\
    & \phantom{+} + (1-\eta) \mathbb{E}_{\substack{s \sim \mathcal{D} \\ a^{\rm oos} \sim u(\cdot \mid s)}} \Bigl[ \bigl( Q_{\theta} (s,a^{\rm oos}) - \mathcal{T}_{\rm ILB}Q_{\theta^-} (s,a^{\rm oos}) \bigr)^2 \Bigr],
    \end{aligned}
\end{equation}
where $\eta$ is a trade-off factor, $(s,a^{\rm oos})$ refers to the out-of-sample state-action pair. The $u(\cdot \mid s)$ is any distribution that can produce out-of-sample actions. In practice, we directly treat policy $\pi$ as the sampling distribution $u$.

We also incorporate the double network \cite{hasselt2010doubleQ} to reduce overestimation, which is widely utilized in both online \cite{mnih2015DQNNature} and offline RL \cite{fujimoto2021TD3-BC}. 
Therefore, combining the definition of ILB operator Eq. \eqref{eq:def-ILB-operator}, we have the target, for in-sample transition $(s,a,r,s^\prime)$, as:
\begin{equation} \label{eq:in-sample-target}
    \mathcal{T}_{\rm ILB}Q_{\theta^-} (s,a) = r(s, a) + \gamma \min_{j=1,2} \mathbb{E}_{\tilde{a}^{\prime} \sim \pi_{\phi} (\cdot \mid s^{\prime})} \left[ Q_{\theta_j^-} (s^{\prime}, \tilde{a}^{\prime}) \right],
\end{equation}
where $\pi_{\phi}$ is the learned policy represented by a neural network with parameters $\phi$ and $Q_{\theta^-_j}$ is the $j$-th target network. Similarly, for out-of-sample $(s,a^{\rm oos})$, the target is formulated as:
\begin{equation}\label{eq:OOD-sample-target}
        \mathcal{T}_{\rm ILB}Q_{\theta^-} (s,a^{\rm oos}) = \min \left\{ y_{{\rm img}}^Q , y_{{\rm lmt}}^Q  \right\} + \delta,
\end{equation}
where
\begin{IEEEeqnarray}{C} \label{eq:OOD-sample-target-y2}
    y_{{\rm img}}^Q = \widehat{r} \left( s, a^{{\rm oos}} \right) 
    + \gamma \min_{j=1,2} \mathbb{E}_{ \substack{\widehat{s}^{\prime} \sim \widehat{P} \left( \cdot \mid s, a^{{\rm oos}} \right) \\ \tilde{a}^{\prime} \sim \pi_{\phi} (\cdot \mid \widehat{s}^{\prime})} } \left[ Q_{\theta_j^-} (\widehat{s}^{\prime}, \tilde{a}^{\prime}) \right], \IEEEeqnarraynumspace
\end{IEEEeqnarray}
\begin{IEEEeqnarray}{C} \label{eq:OOD-sample-target-y1}
    y_{{\rm lmt}}^Q = \min_{j=1,2} \max_{\substack{\widehat{a}_m \sim {\rm Diff}_{\omega} \left( \cdot \mid s \right) \\ m = 1, \cdots, M}} Q_{\theta_j^-} \left( s, \widehat{a}_m \right). \IEEEeqnarraynumspace
\end{IEEEeqnarray}
Here Eq. \eqref{eq:OOD-sample-target-y1} is obtained by coupling Eq. \eqref{eq:cond-diff-lmt-value}.

During policy improvement, we adopt the same objective as in vanilla SAC \cite{haarnoja2018SAC} to optimize the actor network without any complex design, as follows:
\begin{IEEEeqnarray}{C} \label{eq:opt-actor}
    \max_{\phi} \mathbb{E}_{s \sim \mathcal{D}, a \sim \pi_{\phi}(\cdot \mid s)} \left[ \min_{j=1,2} Q_{\theta_j} (s, a) - \alpha \log \pi_{\phi} (a \mid s) \right], \IEEEeqnarraynumspace
\end{IEEEeqnarray}
where $\alpha$ is a multiplier for the entropy.

Combining above steps, our method is derived, with its pseudo-code presented in Algorithm \ref{alg}.

\begin{algorithm}[!t]
\caption{Imagination-Limited Q-Learning (ILQ)} \label{alg}
\begin{algorithmic}[1]
\REQUIRE The offline dataset $\mathcal{D}$, number of iterations $N$, discount factor $\gamma$, target network update rate $\tau$, trade-off factor $\eta$, and offset parameter $\delta$. 
\STATE Initialize critic networks $Q_{\theta_1}$, $Q_{\theta_2}$, actor network $\pi_{\phi}$, target networks $Q_{\theta_1^-}$, $Q_{\theta_2^-}$ with $\theta_i^- \leftarrow \theta_i, i=\{1, 2\}$.
\STATE // Pre-train the dynamics model and behavior policy
\STATE Train the dynamics model $ \widehat{T}_{\psi} (s^{\prime}, r \mid s, a) $ via \eqref{eq:opt_empirical_dynamics}.
\STATE Train the diffusion model $\text{Diff}_{\omega} (\cdot \mid s)$ for modeling the behavior policy by optimizing \eqref{eq:opt_cond_diff}.
\STATE // Policy training
\FOR {step $n = 1$ to $N$}
\STATE Sample a mini-batch of transitions $\mathcal{B} =\{ (s, a, r, {s}^{\prime}) \}$ from dataset $\mathcal{D}$.
\STATE Compute target value for $(s,a)$ in $\mathcal{B}$ as \eqref{eq:in-sample-target}.
\STATE Sample OOD actions conditioned on sates in $\mathcal{B}$ via $\pi_{\phi}$, and calculate target value according to \eqref{eq:OOD-sample-target} via the pre-trained behavior policy and dynamics model.

\STATE Update parameters $\theta_i, i=1,2$ for each critic network via minimizing \eqref{eq:critic_loss_2}.

\STATE Update actor $\phi$ via \eqref{eq:opt-actor}.

\STATE Update the target networks
\[
\theta_i^- \leftarrow \tau \theta_i + (1-\tau) \theta_i^-, i=1,2.
\]

\ENDFOR
\end{algorithmic} 
\end{algorithm}

\section{Experiments}

In this section, we empirically validate the effectiveness of our method ILQ. 1) We demonstrate the superiority of ILQ over existing methods by comparing performance across a series of tasks. 2) We conduct sensitivity analyses on the hyperparameters involved in ILQ, confirming the stability of the proposed method. 3) We then perform ablation experiments on both imagination and limitation components to verify their impacts. 4) We also delve into the Q-value estimations to further validate the effectiveness of the two components designed; details are in the Appendix due to space constraints.

\subsection{Experimental Settings}
We evaluate ILQ on the D4RL \cite{fu2020D4RL} benchmark. The commonly used domain is Gym MuJoCo ``-v2", including halfcheetah, hopper, and walker2d tasks at four levels: random (r), medium (m), medium-replay (mr), and medium-expert (me). We also assess ILQ on Maze2D ``-v1" domain, which offers three layouts with two reward types, i.e., umaze (u), umaze-dense (ud), medium (m), medium-dense (md), large (l), and large-dense (ld). In addition, comparisons on several adroit ``-v0" tasks are conducted. Due to page limitations, descriptions of compared algorithms and extra experimental results have been relocated to the Appendix. Details of hyperparameters settings and implementation specifics are also provided in the Appendix to ensure reproducibility.

\subsection{Performance Comparison}

\begin{table*}[htb]
  \caption{Comparison of normalized average scores for ILQ and existing state-of-the-art methods on MuJoCo tasks over the final 10 evaluations. Experiments are conducted using $5$ different random seeds. The highest score is \textbf{bolded}.}
  \label{tab:score-mujoco}
  \small
  \centering
  \resizebox{\textwidth}{!}{
  \setlength{\tabcolsep}{3.5pt}
  \sisetup{text-series-to-math}
  \begin{tabular}{@{}l S@{} S@{} S@{} S@{} S@{} S@{} S@{} S@{} S@{} S@{} S@{} S@{} S@{} S[table-format = 2.2(1), separate-uncertainty] @{}}
    \toprule
    \multicolumn{1}{c}{Task Name} & \multicolumn{1}{c}{BC} & \multicolumn{1}{c}{BCQ} & \multicolumn{1}{c}{CQL} & \multicolumn{1}{c}{UWAC} & \multicolumn{1}{c}{One-step} & \multicolumn{1}{c}{TD3+BC} & \multicolumn{1}{c}{IQL} & \multicolumn{1}{c}{MCQ} & \multicolumn{1}{c}{CSVE} & \multicolumn{1}{c}{OAP} & \multicolumn{1}{c}{DTQL} & \multicolumn{1}{c}{OAC-BVR} & \multicolumn{1}{c}{TD3-BST} & \multicolumn{1}{c}{ILQ(Ours)} \\
    \midrule
    halfcheetah-r   & 2.2  & 2.2   & 17.5  & 2.3  & 3.7   & 11.0  & 13.1  & 28.5            & 26.7       & 24.0  &\mbox{-}        & 31.1  & \mbox{-}  & \bfseries 31.7(7) \\
    hopper-r        & 3.7  & 7.8   & 7.9   & 2.7  & 5.6   & 8.5   & 7.9   & \bfseries 31.8  & 27.0       & 8.8   &\mbox{-}        & 7.4   & \mbox{-}  & 31.6(2) \\
    walker2d-r      & 0.2  & 4.9   & 5.1   & 2.0  & 5.2   & 1.6   & 5.4   & 17.0            & 6.1        & 5.1   &\mbox{-}        & 9.8   & \mbox{-}   & \bfseries 21.6(1) \\
    halfcheetah-m   & 42.4 & 46.6  & 44.0  & 42.2 & 48.6  & 48.3  & 47.4  & 64.3            & 48.6       & 56.4  & 57.9           & 52.2  & 62.1  & \bfseries 65.7(5) \\
    hopper-m        & 53.4 & 59.4  & 58.5  & 50.9 & 56.7  & 59.3  & 66.3  & 78.4            & 99.4       & 82.0  & 99.6           & 95.0  &\bfseries 102.9  & 92.1(58) \\
    walker2d-m      & 66.9 & 71.8  & 72.5  & 75.4 & 80.3  & 83.7  & 78.3  & 91.0            & 82.5       & 85.6  & 89.4           & 86.0  & 90.7  & \bfseries 91.5(7) \\
    halfcheetah-mr  & 34.9 & 42.2  & 45.5  & 35.9 & 38.6  & 44.6  & 44.2  & 56.8            & 54.8       & 53.4  & 50.9           & 48.3  & 53.0  & \bfseries 59.6(10) \\
    hopper-mr       & 28.1 & 60.9  & 95.0  & 25.3 & 94.1  & 60.9  & 94.7  & 101.6           & 91.7       & 98.5  & 100.0           & 95.3  & 101.2  & \bfseries 102.7(3) \\
    walker2d-mr     & 19.2 & 57.0  & 77.2  & 23.6 & 49.3  & 81.8  & 73.9  & 91.3            & 78.5       & 84.3  & 88.5           & 77.3  & 90.4  & \bfseries 95.3(18) \\
    halfcheetah-me  & 60.4 & 95.4  & 91.6  & 42.7 & 91.7  & 90.7  & 86.7  & 87.5            & 93.1       & 83.4  & 92.7           & 93.1  &\bfseries 100.7  & \bfseries 100.0(4) \\
    hopper-me       & 51.5 & 106.9 & 105.4 & 44.9 & 83.1  & 98.0  & 91.5  & 111.2           & 95.2       & 85.9  & 109.3          & 96.5  & 110.3 & \bfseries 111.6(6) \\
    walker2d-me     & 96.7 & 107.7 & 108.8 & 96.5 & 112.9 & 110.1 & 109.6 & 114.2           & 109.0      & 111.1 & 110.0          & 112.0 & 109.4 & \bfseries 117.0(12) \\
    \midrule
    MuJoCo\ total   & 459.6 & 662.8 & 729.0 & 444.4 & 669.8 & 698.5 & 719.0 & 873.6 & 812.6 & 778.5 &\mbox{-} & 804.0 & \mbox{-} & \bfseries 920.4 \\
    \bottomrule
  \end{tabular}
  }
  \vskip -0.1in
\end{table*}

In comparing ILQ with state-of-the-art methods on MuJoCo tasks, as shown in Table \ref{tab:score-mujoco}, 
ILQ performs significantly better than policy constraint methods (BCQ \cite{fujimoto2019offRL}, UWAC \cite{wu2021UWAC}, One-step \cite{Brandfonbrener2021onestep}, TD3+BC \cite{fujimoto2021TD3-BC}, and OAP \cite{yang2023OAP}) in a wide range of random- and medium-level tasks. This is because policy constraint methods restrict the learned policy to be within a neighborhood of the behavior policy. Although TD3-BST \cite{srinivasan2024TD3-BST} introduces fine-grained weighting on the constraints of TD3+BC to enhance performance, it remains inferior to ILQ overall. The value regularization methods (CQL \cite{kumar2020CQL}, MCQ \cite{lyu2022MCQ}, CSVE \cite{chen2023CSVE}, OAC-BVR \cite{huang2024OAC-BVR}) show higher scores in average. ILQ continues to show performance beyond them in $\nicefrac{11}{12}$ tasks. DTQL \cite{chen2024DTQL} integrates implicit value regularization and policy constraints to enhance performance, but it still falls short compared to ILQ, especially in halfcheetah tasks.

According to Table \ref{tab:score-maze2d}, neither policy constraint approaches nor value regularization approaches performed well enough on Maze2D tasks. Although ROMI-BCQ \cite{wang2021ROMI} achieves advanced performance on maze2d-u by utilizing a reverse dynamics model, it performs mediocrely on other tasks. Diffuser \cite{janner2022Diffuser} and PlanCP \cite{sun2024PlanCP} leverage diffusion models to improve their planning capabilities in maze2d-u, while still lag behind ILQ on maze2d-m and maze2d-l, further demonstrating stitching abilities of ILQ.
The experimental results on Adroit tasks demonstrate that our method continues to outperform others, highlighting its applicability across different tasks.

\begin{table}[tb]
  \caption{Comparison of normalized scores on Maze2D and Aroit datasets. The scores are also averaged over the final 10 evaluations across 5 different random seeds. }
  \label{tab:score-maze2d}
  \small
  \centering
  \resizebox{\linewidth}{!}{
  \setlength{\tabcolsep}{3.5pt}
  \sisetup{text-series-to-math}
  \begin{tabular}{@{}l S@{} S@{} S@{} S@{} S@{} S@{} S@{} S[table-format = 2.2(1), separate-uncertainty, table-align-uncertainty = false] @{}}
    \toprule
    \multicolumn{1}{c}{Task Name} & \multicolumn{1}{c}{ROMI-BCQ} & \multicolumn{1}{c}{BEAR} & \multicolumn{1}{c}{CQL} & \multicolumn{1}{c}{IQL} & \multicolumn{1}{c}{MCQ} & \multicolumn{1}{c}{Diffuser} & \multicolumn{1}{c}{PlanCP} & \multicolumn{1}{c}{ILQ(Ours)} \\
    \midrule
    maze2d-u    & \bfseries 139.5 & 65.7  & 18.9 & 47.4       & 81.5       & 113.9      & 116.4      & 91.9(260) \\
    maze2d-ud   & 98.3            & 32.6  & 14.4 & 48.9 & 107.8      & \mbox{-} & \mbox{-} & \bfseries 116.2(154) \\
    maze2d-m    & 82.4            & 25.0  & 14.6 & 34.9       & 106.8      & 121.5      & 128.5      & \bfseries 163.6(314) \\
    maze2d-md   & 102.6           & 19.1  & 30.5 & 47.1 & 112.7      & \mbox{-} & \mbox{-} & \bfseries 137.8(92) \\
    maze2d-l    & 83.1            & 81.0  & 16.0 & 58.6       & 111.2 & 123.0      & 130.9      & \bfseries 198.5(238) \\
    maze2d-ld   & 124.0           & 133.8 & 46.9 & 75.4 & 118.5 & \mbox{-} & \mbox{-} & \bfseries 152.8(104) \\
    \midrule
    Maze2D\ total & 629.9 & 357.2 & 141.3 & 312.3 & 638.5 & \mbox{-} & \mbox{-} & \bfseries 860.8 \\
    \midrule
    \multicolumn{1}{c}{Task Name} & \multicolumn{1}{c}{BCQ} & \multicolumn{1}{c}{TD3+BC} & \multicolumn{1}{c}{CQL} & \multicolumn{1}{c}{IQL} & \multicolumn{1}{c}{MCQ} & \multicolumn{1}{c}{DQL} & \multicolumn{1}{c}{DTQL} & \multicolumn{1}{c}{ILQ(Ours)} \\
        \midrule
         pen-human & 68.9 & 64.8 & 35.2 & 71.5 & 68.5 & 72.8 & 64.1 & \bfseries 77.3(79)  \\
         pen-cloned & 44.4 & 49 & 27.2 & 37.3 & 49.4 & 57.3 & 81.3 & \bfseries 85.6(102) \\
         \midrule
    Adroit\ total & 113.3 & 113.8 & 62.4 & 108.8 & 117.9 & 130.1 & 145.4 & \bfseries 162.9 \\
        \bottomrule
  \end{tabular}
  }
  \vskip -0.15in
\end{table}

\subsection{Parameter Study}
\subsubsection{Offset parameter $\delta$}

\begin{figure}[!tb]
    \centering
    \subfloat[ ]{
        \includegraphics[width=0.18\textwidth]{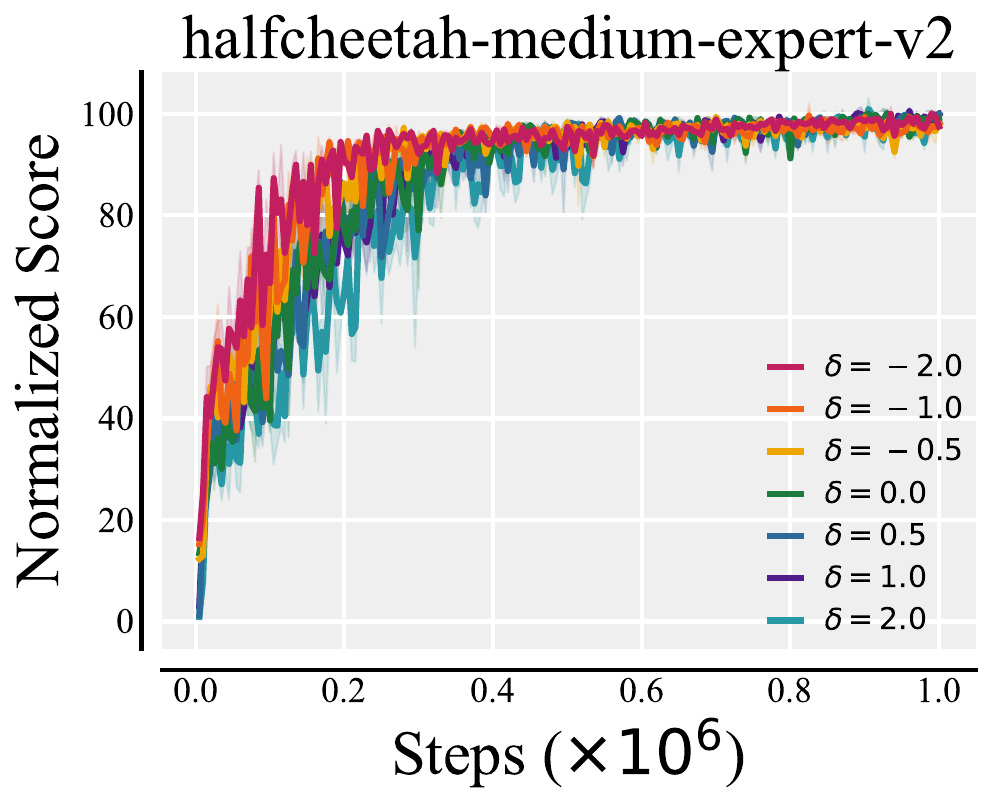}
        \label{fig:ILQ_offset_analysis_halfcheetah-medium-expert-v2}
    }
    \subfloat[ ]{
        \includegraphics[width=0.18\textwidth]{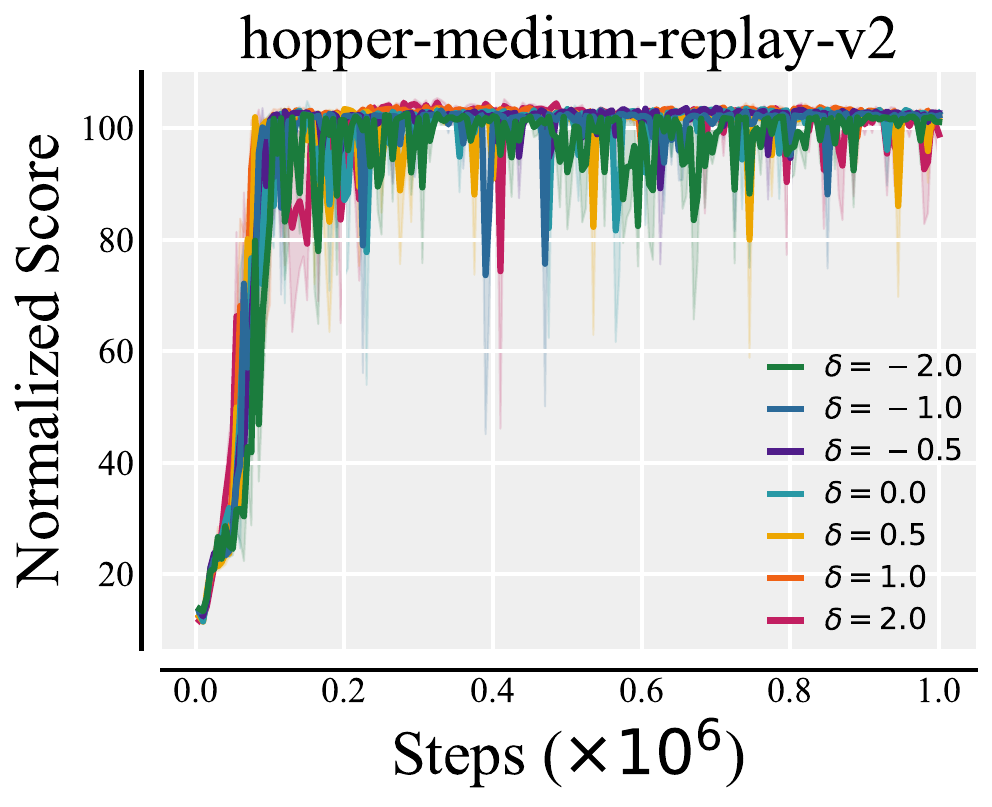}
        \label{fig:ILQ_offset_analysis_hopper-medium-replay-v2}
    }
    \\
    \subfloat[ ]{
        \includegraphics[width=0.18\textwidth]{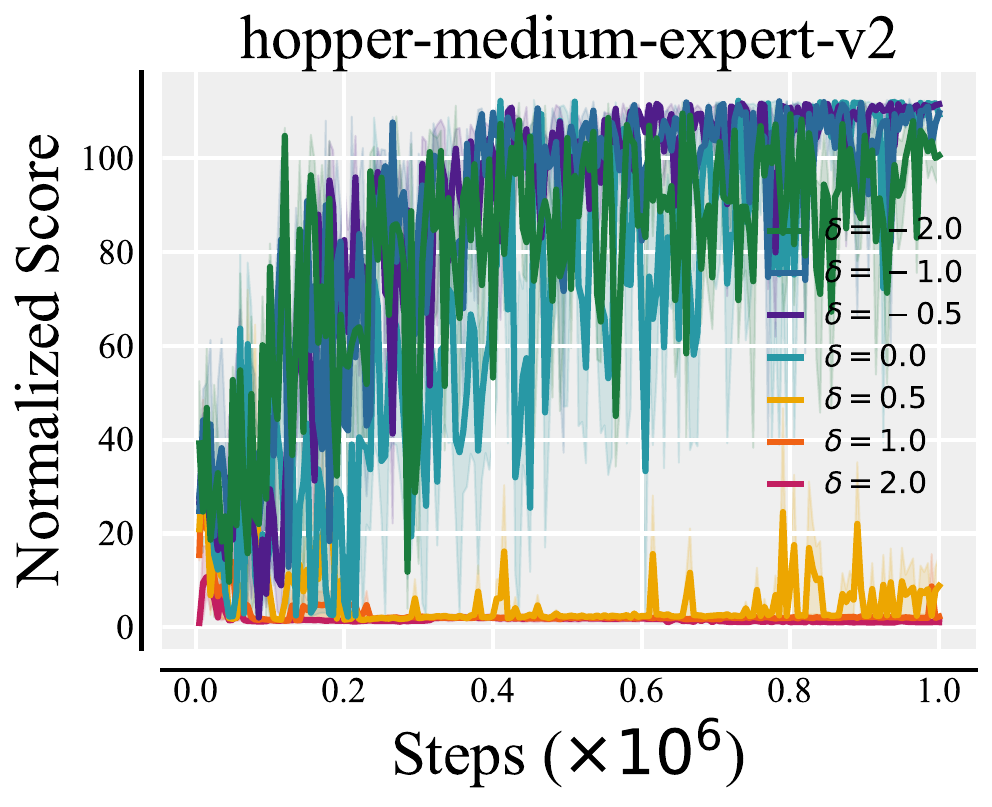}
        \label{fig:ILQ_offset_analysis_hopper-medium-expert-v2}
    }
    \subfloat[ ]{
        \includegraphics[width=0.18\textwidth]{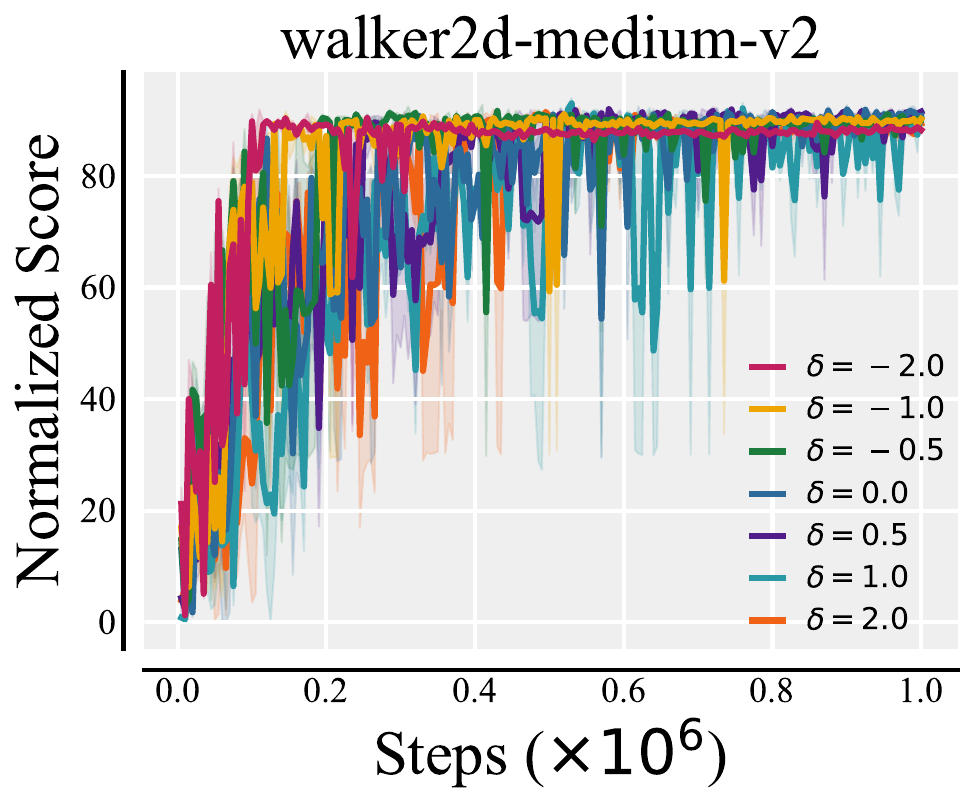}
        \label{fig:ILQ_offset_analysis_walker2d-medium-v2}
    }
    \caption{Performances of ILQ under different values of offset parameter $\delta$.}
    \label{fig:ILQ_offset_analysis}
    \vskip -0.15in
\end{figure}

To assess impacts of parameter $\delta$, we conduct sensitivity analyses by varying $\delta$ across $\{-2, -1, -0.5, 0, 0.5, 1, 2\}$. We evaluate the performance on halfcheetah-me, hopper-mr, hopper-me, and walker2d-m, where each experiment was run over 3 random seeds. 
Figure \ref{fig:ILQ_offset_analysis} shows that the score curve remains stable as $\delta$ changes. Overall, performance is slightly lower when $\delta$ is negative compared to when it is positive. Notably, a negative $\delta$ always ensures a safe learned policy. However, when $\delta$ is positive, meaning the limit exceeds the maximum behavior value a little bit, there may exists policy failure, as illustrated in Fig. \ref{fig:ILQ_offset_analysis}\subref{fig:ILQ_offset_analysis_hopper-medium-expert-v2}. These indeed validate our argument that overly restricting OOD values could inhibit potential performance gains, and taking the maximum behavior value ($\delta=0$) as a ceiling of the imagined value is consistently a solid choice.

\subsubsection{Trade-off Factor $\eta$}
To evaluate the impact of the trade-off factor $\eta$, we conduct sensitivity analyses by varying $\eta$ around its optimal value. 
According to results in Fig. \ref{fig:ILQ_eta_analysis}\subref{fig:ILQ_eta_analysis_halfcheetah-medium-v2} and \ref{fig:ILQ_eta_analysis}\subref{fig:ILQ_eta_analysis_walker2d-medium-replay-v2}, ILQ achieves robust performance for all variations of $\eta$ around $0.9$, making it a typically safe choice for medium and medium-replay tasks. This suggests that ILQ should place greater trust in in-sample value estimates when evaluating policies. In medium tasks, the Q-value of OOD actions generated by the learned policy rarely reach the maximum behavior value. Consequently, assigning too high a weight to these OOD action-values, i.e., a smaller $\eta$, may lead to misplaced trust in OOD actions and ultimately cause policy failure, as illustrated in Fig. \ref{fig:ILQ_eta_analysis}\subref{fig:ILQ_eta_analysis_walker2d-medium-replay-v2}. In medium-expert tasks, the in-sample data comprises a mixture of medium and expert levels, and the value of the OOD actions generated by the learned policy can approach the maximum behavior value. Therefore, the value estimate needs to be more balanced between in-sample and out-of-sample. In this case, the optimal value of $\eta$ is usually around $0.6$ for all medium-expert tasks. The overall results in Fig. \ref{fig:ILQ_eta_analysis} show that ILQ maintains stable performance when $\eta$ varies around its optimal parameter.

\begin{figure}[!tb]
    \centering
    \subfloat[ ]{
        \includegraphics[width=0.18\textwidth]{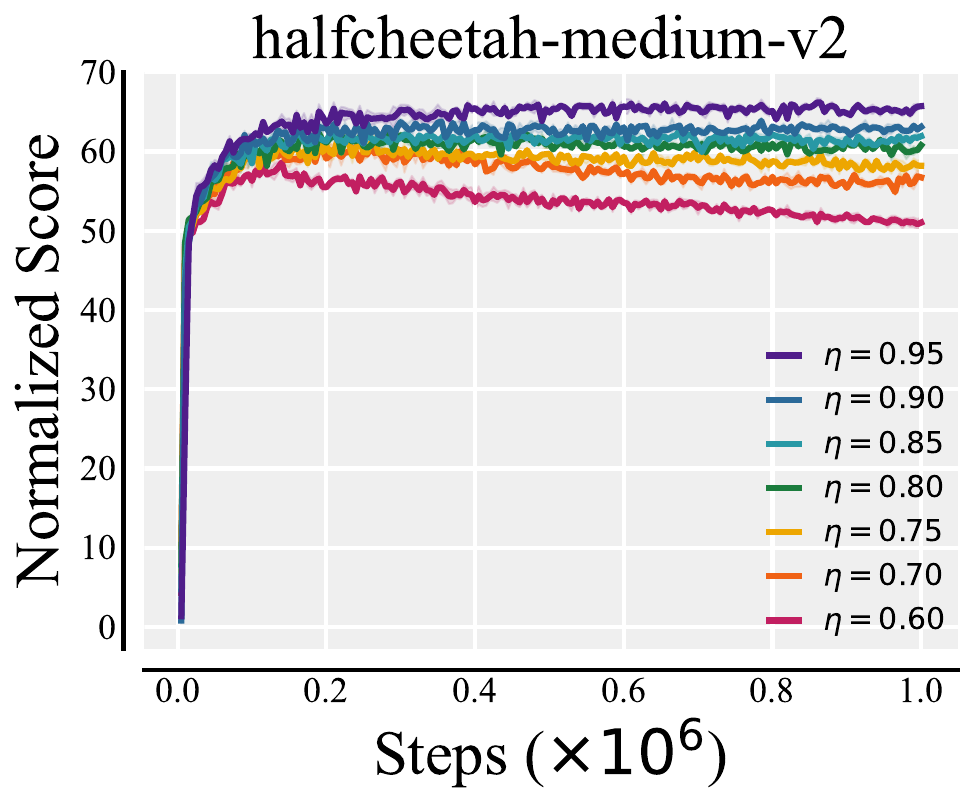}
        \label{fig:ILQ_eta_analysis_halfcheetah-medium-v2}
    }
    \subfloat[ ]{
        \includegraphics[width=0.18\textwidth]{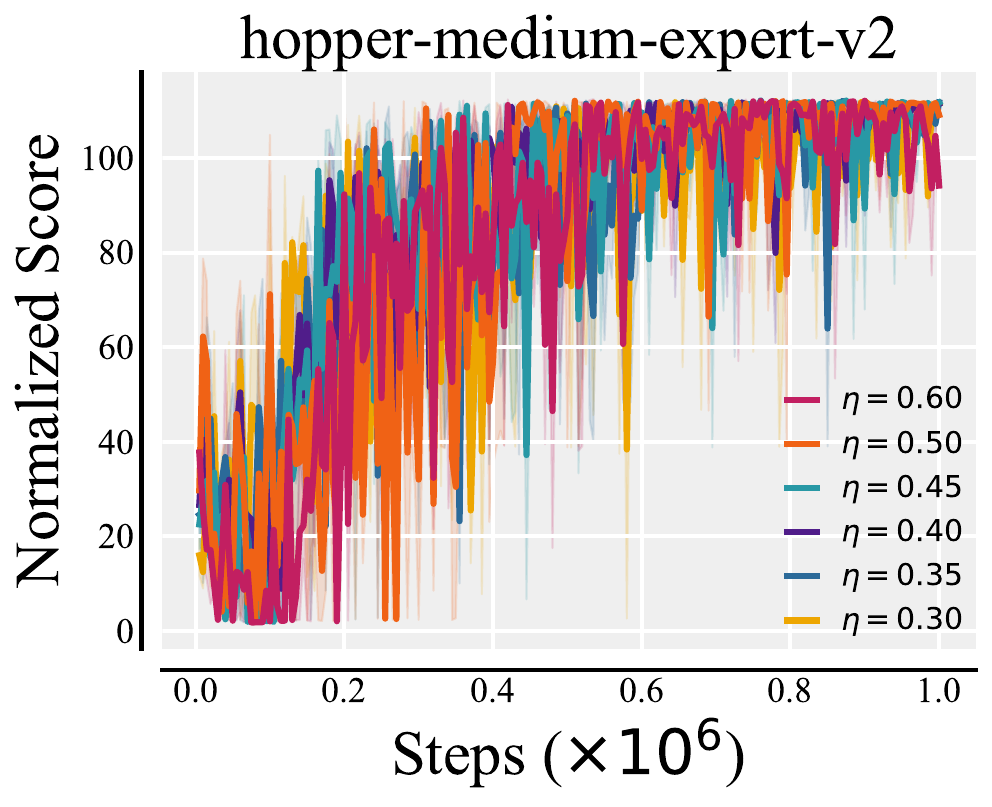}
        \label{fig:ILQ_eta_analysis_hopper-medium-expert-v2}
    }
    \\
    \subfloat[ ]{
        \includegraphics[width=0.18\textwidth]{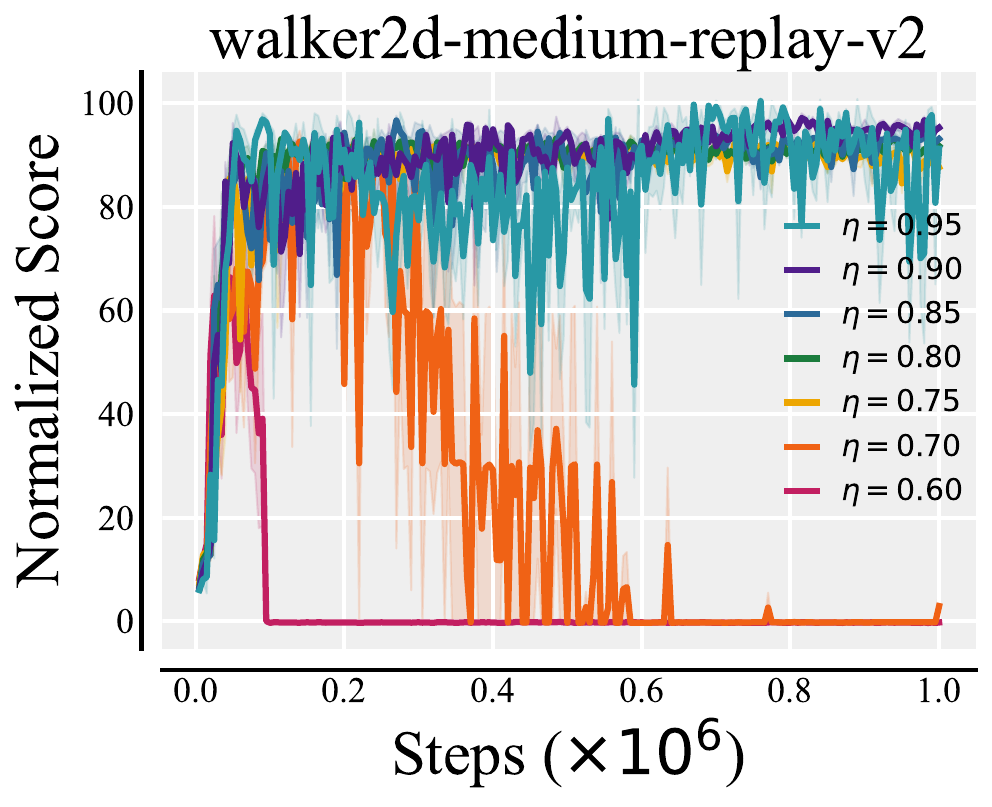}
        \label{fig:ILQ_eta_analysis_walker2d-medium-replay-v2}
    }
    \subfloat[ ]{
        \includegraphics[width=0.18\textwidth]{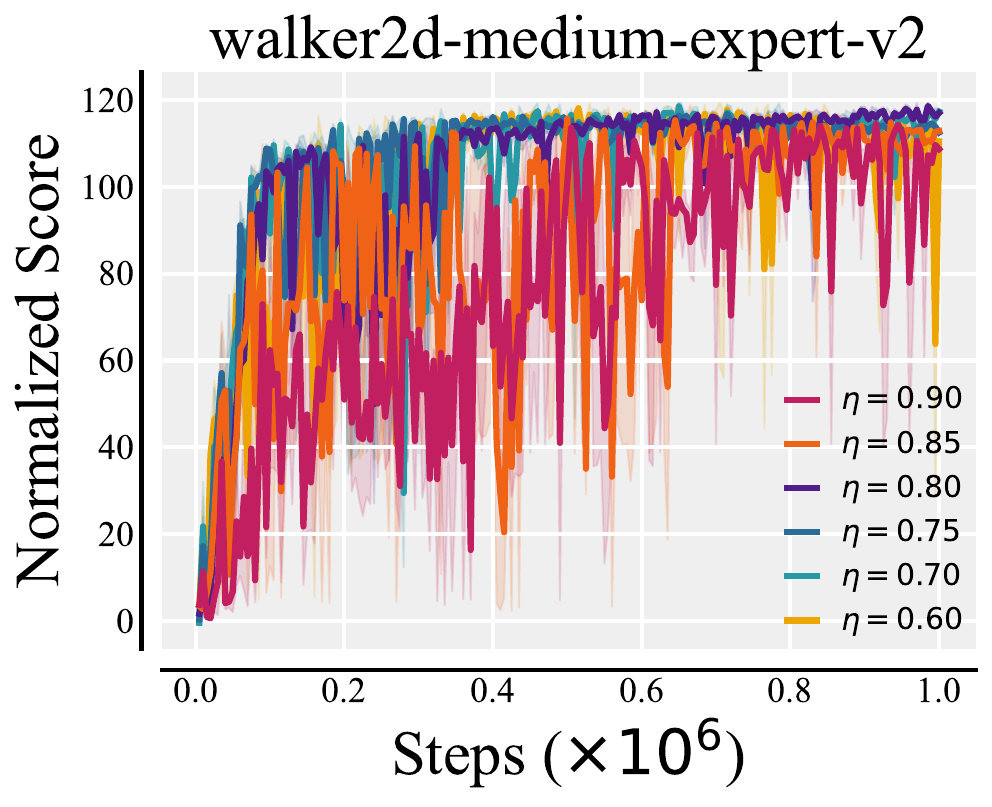}
        \label{fig:ILQ_eta_analysis_walker2d-medium-expert-v2}
    }
    \caption{Performances of ILQ under different values of trade-off factor $\eta$.}
    \label{fig:ILQ_eta_analysis}
    \vskip -0.15in
\end{figure}

\subsection{Ablation Study}
To understand the contribution of each component in our OOD target action-value Eq. \eqref{eq:OOD-sample-target}, we conduct an ablation study. This study evaluates the impact of removing either the imagination component or the limitation value from the target value. All experiments are run over 3 random seeds. More results can be found in the Appendix.

\subsubsection{Without Imagination}
In this part, we assess the performance of ILQ without the imagination component $y_{\rm img}^Q$, indicating solely the maximum behavior value is considered as the regularization target. The results are illustrated in Fig. \ref{fig:ILQ_wo_img}. As shown in Fig. \ref{fig:ILQ_wo_img}\subref{fig:ILQ_woimg_halfcheetah-medium-expert-v2} and \subref{fig:ILQ_woimg_walker2d-medium-v2}, the performance degrades significantly and the curve of normalized score exhibits a very oscillatory behavior. This is because directly using the maximum behavior value as the target values for OOD actions introduces uncontrollable bias and ultimately impairs policy improvement. This demonstrates the necessity of the imagination component for providing calibrated target values under the limitation.

\begin{figure}[!tb]
    \centering
    {
        \includegraphics[width=0.22\textwidth]{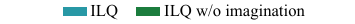}
    }
    \\
    \subfloat[ ]{
        \includegraphics[width=0.18\textwidth]{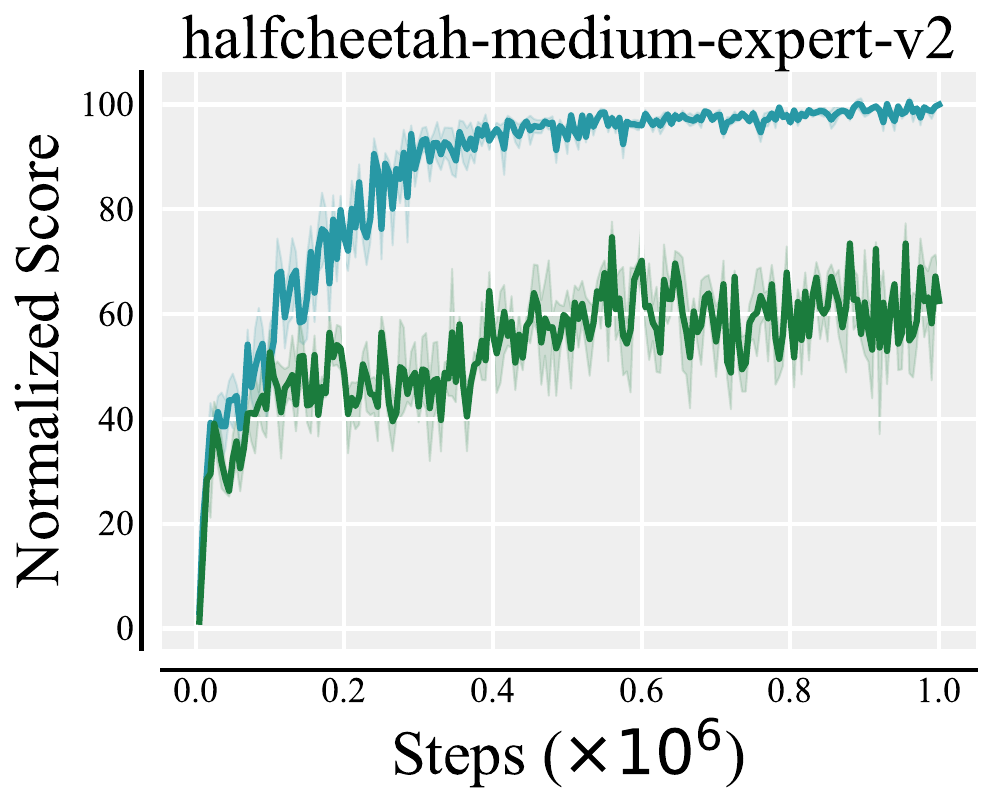}
        \label{fig:ILQ_woimg_halfcheetah-medium-expert-v2}
    }
    \subfloat[ ]{
        \includegraphics[width=0.18\textwidth]{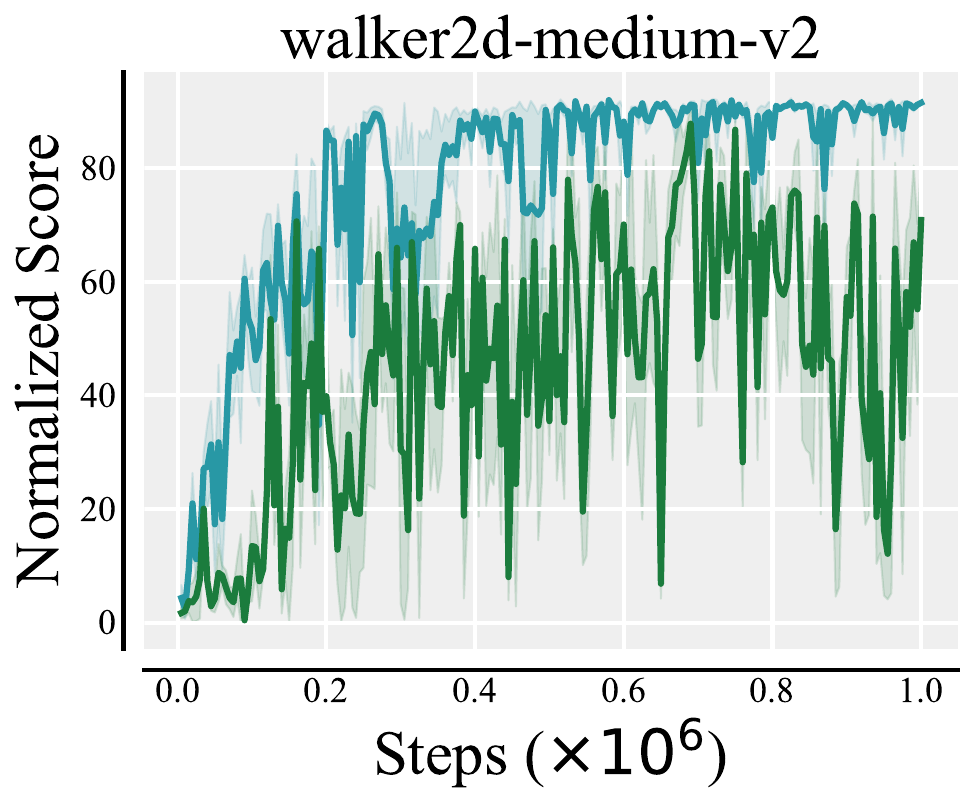}
        \label{fig:ILQ_woimg_walker2d-medium-v2}
    }
    \caption{Performance comparison of the ILQ algorithm with and without the imagined value $y_{\rm img}^Q$ in the target value. }
    \label{fig:ILQ_wo_img}
    \vskip -0.15in
\end{figure}

\subsubsection{Without Limitation}
Here we exclude the limitation component $y_{\rm lmt}^Q$ from the target value, relying solely on $y_{\rm img}^Q$ for learning. This approach is intended to evaluate the importance of the limitation component. In one specific case, Fig. \ref{fig:ILQ_wo_lmt}\subref{fig:ILQ_wolmt_halfcheetah-medium-v2}, performance increased when relying solely on the imagination value, indicating that the imagination component can sometimes provide highly reliable guidance. However, in most tasks, performance dropped significantly, with some policies in hopper-m task Fig. \ref{fig:ILQ_wo_lmt}\subref{fig:ILQ_wolmt_hopper-medium-v2} collapsing completely. This underscores the importance of the limitation component in preventing the incorrectly optimistic estimates. These studies suggest both of the two components play critical role on OOD estimates.

\begin{figure}[!tb]
    \centering
    {
        \includegraphics[width=0.22\textwidth]{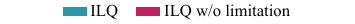}
    }
    \\
    \subfloat[ ]{
        \includegraphics[width=0.18\textwidth]{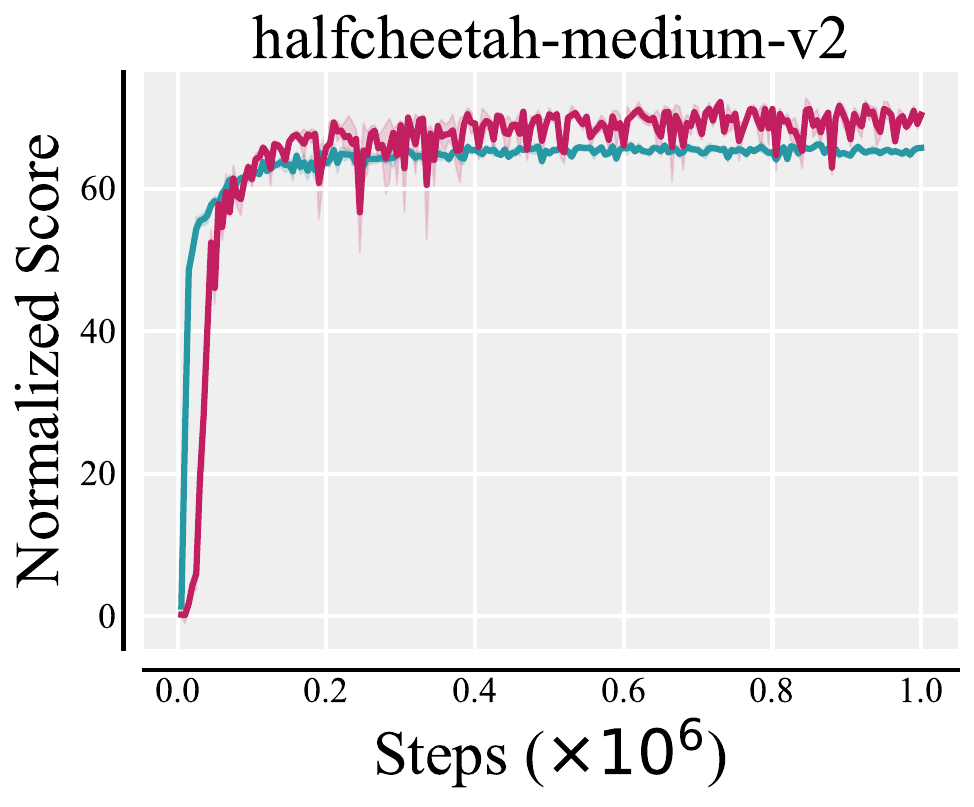}
        \label{fig:ILQ_wolmt_halfcheetah-medium-v2}
    }
    \subfloat[ ]{
        \includegraphics[width=0.18\textwidth]{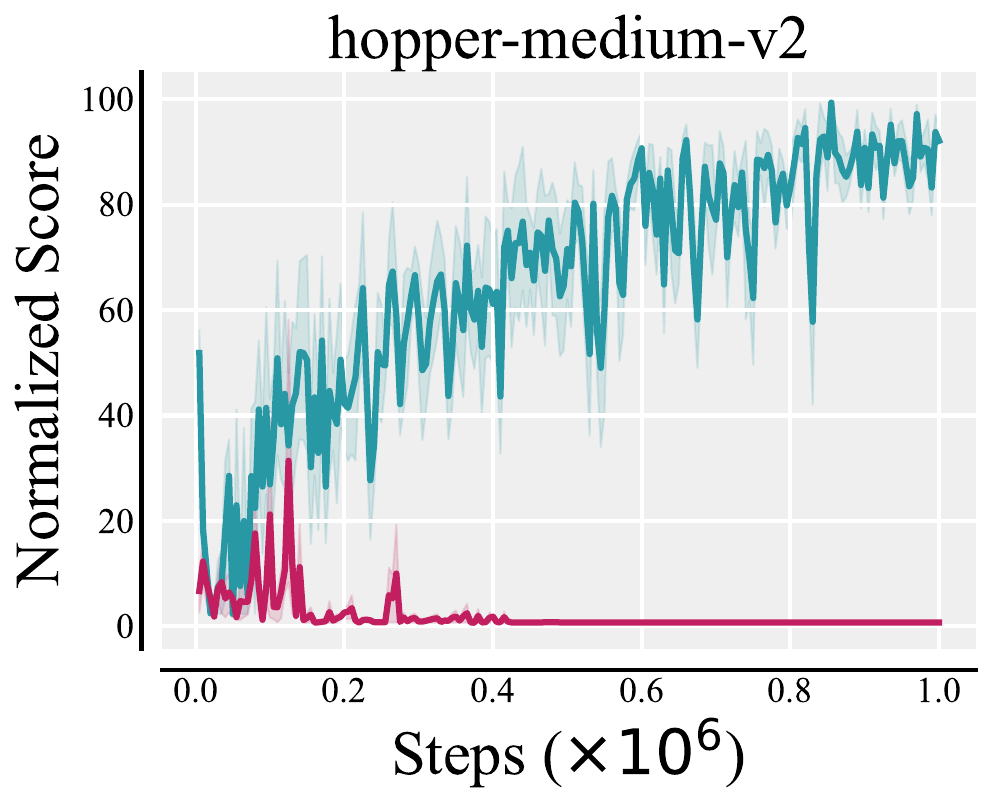}
        \label{fig:ILQ_wolmt_hopper-medium-v2}
    }
    \caption{Performance comparison of the ILQ algorithm with and without the limitation value $y_{\rm lmt}^Q$ in the target value.}
    \label{fig:ILQ_wo_lmt}
    \vskip -0.15in
\end{figure}

\section{Conclusion}
In conclusion, the Imagination-Limited Q-learning (ILQ) method effectively mitigates bias of value estimations by maintaining reasonable evaluations of OOD action-values within appropriate limits. Specifically, it utilizes a dynamics model to help generate imagined values and capping these with the maximum behavior values for OOD actions, while standard target values for in-distribution ones. Theoretical analysis confirms the convergence of ILQ and demonstrates that the error bound between estimated and optimal values for OOD actions is comparable to that for in-distribution actions, thereby enhancing performance improvements. Empirical results show that ILQ achieves state-of-the-art performances on a wide range of tasks in the D4RL benchmark. We hope this work can provide new insights into the value estimates for offline RL.

\section*{Acknowledgments}
This work was partially supported by the National Science Foundation of China (62072185 and U1711262).

\bibliographystyle{named}
\bibliography{ijcai25}

\clearpage

\appendix

\section{Appendix}
\newtheorem*{repeatdefinition}{Definition \ref{def:operator}}
\newtheorem*{repeattheorem1}{Theorem \ref{thm:convergence}}
\newtheorem*{repeattheorem2}{Theorem \ref{lem:optmality_Q_gap}}
\newtheorem*{repeattheorem3}{Theorem \ref{lem:img_optimality_Q_gap}}
\newtheorem*{repeattheorem4}{Theorem \ref{thm:action-value-gap}}

\subsection{Detailed Theoretical Analysis}

In this section, we will introduce theoretical properties of the ILQ method in detail. First, we investigate the convergence of value iterations using the ILB operator in tabular MDPs, as confirmed in Theorem \ref{thm:convergence}. Additionally, unlike value regularization methods, we do not intend for ILQ to be a pessimistic algorithm. Instead, it aims to retain reasonable estimates of OOD action-values under appropriate restrictions. Thus, in Theorem \ref{thm:action-value-gap}, we analyze the action-value gap between the fixed point of policy evaluation and the Bellman optimality value. 

Now we begin by presenting the analysis of convergence. To facilitate reading, the definition of our ILB operator is restated here. 

\begin{repeatdefinition}
The Imagination-Limited Bellman (ILB) operator is defined as 

\begin{IEEEeqnarray*}{rl} 
    \mathcal{T}_{\mathrm{ILB}} & Q(s,a) \\
    & = \begin{cases}
r(s,a) + \gamma \mathbb{E}_{s^\prime \sim P} \bigl[ \underset{{\tilde{a}^\prime \sim \pi }}{\max} Q(s^{\prime},\tilde{a}^\prime) \bigr], &{\text{if}}~\beta(a|s)>0 \\ 
\min \left \{ y_{{\rm img}}^{Q}, y_{{\rm lmt}}^{Q} \right \} + \delta, &{\text{otherwise.}} 
\end{cases} \IEEEeqnarraynumspace \IEEEyesnumber
\end{IEEEeqnarray*}
where $\beta$ is the behavior policy, 
\begin{equation} 
    y_{{\rm img}}^{Q} = \widehat{r} (s, a) 
    + \gamma \mathbb{E}_{ {\widehat{s}^{\prime} \sim \widehat{P}(\cdot \mid s, a)} } \left[ \max_{\tilde{a}^{\prime} \sim  \pi } Q (\widehat{s}^{\prime}, \tilde{a}^{\prime}) \right],
\end{equation}
and
\begin{equation}
    y_{{\rm lmt}}^{Q} = \max_{{\widehat{a} \in {\rm Supp}(\beta(\cdot \mid s))}} Q (s, \widehat{a})
\end{equation}
are the imagined value and its limitation, respectively. The $\widehat{P}$ is the empirical transition kernel, $\widehat{r}$ is the empirical reward function, $\delta$ is a hyperparameter with a small absolute value, and ${\rm Supp} (\cdot)$ means support-constrained on the dataset.
\end{repeatdefinition}

\begin{repeattheorem1}[\textbf{Convergence}]
    The ILB operator defined in \eqref{eq:def-ILB-operator} is a $\gamma$-contraction operator in the $\mathcal{L}_{\infty}$ norm, and Q-function iteration rule obeying the ILB operator can converge to a unique fixed point.
\end{repeattheorem1}

\begin{proof}
    Let $Q_1$ and $Q_2$ be two arbitrary Q-functions. To prove the $\gamma$-contraction property of the ILB operator, we have to demonstrate that the following inequality holds:
    \begin{equation}\label{eq:contraction-to-go}
        \begin{aligned}
            &\| \mathcal{T}_{\rm ILB}Q_1 - \mathcal{T}_{\rm ILB}Q_2 \|_\infty \\
            &= \max_{s,a} \left | \mathcal{T}_{\rm ILB}Q_1(s,a) - \mathcal{T}_{\rm ILB}Q_2(s,a) \right | \\
            &\leq \gamma \|Q_1 - Q_2\|_\infty.
        \end{aligned}
    \end{equation}
    
    Thus, we are required to carefully investigate $\left | \mathcal{T}_{\rm ILB}Q_1(s,a) - \mathcal{T}_{\rm ILB}Q_2(s,a) \right |$. We first consider the case of $a \in {\rm Supp}(\beta(\cdot \mid s))$. According to the definition of ILB operator \eqref{eq:def-ILB-operator}, one has
    \begin{equation}\label{eq:in-sample-contraction}
        \begin{aligned}
            &\left | \mathcal{T}_{\rm ILB}Q_1(s,a) - \mathcal{T}_{\rm ILB}Q_2(s,a) \right | \\
            &=\biggl \lvert  \left( r(s,a) + \gamma \mathbb{E}_{s^\prime \sim P}\left[\max_{\tilde{a}^\prime\sim \pi }Q_1(s^\prime,\tilde{a}^\prime)\right] \right) \\
            &\phantom{=\;}  - \left( r(s,a) + \gamma \mathbb{E}_{s^\prime \sim P}\left[\max_{\tilde{a}^\prime\sim \pi }Q_2(s^\prime,\tilde{a}^\prime)\right] \right) \biggr \rvert \\
            &= \gamma \biggl \lvert \mathbb{E}_{s^\prime \sim P}\left[ \max_{\tilde{a}^\prime \sim  \pi}Q_1(s^\prime,\tilde{a}^\prime) - \max_{\tilde{a}^\prime \sim  \pi}Q_2(s^\prime,\tilde{a}^\prime) \right]  \biggr \rvert \\
            &\le \gamma \mathbb{E}_{s^\prime \sim P}\left| \max_{\tilde{a}^\prime \sim  \pi}Q_1(s^\prime,\tilde{a}^\prime) - \max_{\tilde{a}^\prime \sim  \pi}Q_2(s^\prime,\tilde{a}^\prime)  \right| \\
            &\le \gamma \|Q_1 - Q_2\|_\infty.
        \end{aligned}
    \end{equation}
    Otherwise, when $a \notin {\rm Supp}(\beta(\cdot \mid s))$, we have the $\mathcal{T}_{\rm ILB}Q(s,a) = \min \left\{ y^{Q}_{\rm img}, y^{Q}_{\rm lmt} \right\}+\delta$. Therefore,
    \begin{IEEEeqnarray*}{rl} 
            &\left | \mathcal{T}_{\rm ILB}Q_1(s,a) - \mathcal{T}_{\rm ILB}Q_2(s,a) \right | \\
            &=\biggl \lvert \left( \min \left\{ y^{Q_1}_{\rm img}, y^{Q_1}_{\rm lmt} \right\} + \delta \right) - \left( \min \left\{ y^{Q_2}_{\rm img}, y^{Q_2}_{\rm lmt} \right\}  + \delta \right) \biggr \rvert \\
            &=\biggl \lvert \min \left\{ y^{Q_1}_{\rm img}, y^{Q_1}_{\rm lmt} \right\}  - \min \left\{ y^{Q_2}_{\rm img}, y^{Q_2}_{\rm lmt} \right\}  \biggr \rvert \IEEEeqnarraynumspace \IEEEyesnumber \label{eq:operator-diff-ood} \\
    \end{IEEEeqnarray*}
    
    There exist four possible cases for the inner part on the RHS of \eqref{eq:operator-diff-ood} above, including $ \left| y^{Q_1}_{\rm lmt} - y^{Q_2}_{\rm lmt} \right|$, $ \left| y^{Q_1}_{\rm img} - y^{Q_2}_{\rm img} \right|$, $ \left| y^{Q_1}_{\rm img} - y^{Q_2}_{\rm lmt} \right|$ and $ \left| y^{Q_1}_{\rm lmt} - y^{Q_2}_{\rm img} \right|$. For the simplest case $ \left| y^{Q_1}_{\rm lmt} - y^{Q_2}_{\rm lmt} \right|$, one can, analogous to the derivation process in \eqref{eq:in-sample-contraction}, easily verify that the $\gamma$-contraction inequality holds. 
    
    For the second case, we have
    \begin{IEEEeqnarray*}{rl}
            &\left | y^{Q_1}_{\rm img} - y^{Q_2}_{\rm img} \right | \\
            &=\Biggl \lvert \left( \widehat{r}(s,a) + \gamma \mathbb{E}_{\widehat{s}^\prime \sim \widehat{P}(\cdot | s,a)}\left[\max_{\tilde{a}^\prime\sim \pi}Q_1(\widehat{s}^\prime,\tilde{a}^\prime)\right] \right) \\
            &\phantom{=\;} - \left( \widehat{r}(s,a) + \gamma \mathbb{E}_{\widehat{s}^\prime \sim \widehat{P}(\cdot | s,a)}\left[\max_{\tilde{a}^\prime\sim \pi}Q_2(\widehat{s}^\prime,\tilde{a}^\prime)\right] \right) \Biggr \rvert \\
            &= \gamma \left| \sum_{\widehat{s}^\prime} \widehat{P}(\widehat{s}^\prime | s,a) \left[ \max_{\tilde{a}^\prime \sim  \pi}Q_1(\widehat{s}^\prime,\tilde{a}^\prime) - \max_{\tilde{a}^\prime \sim  \pi}Q_2(\widehat{s}^\prime,\tilde{a}^\prime) \right]  \right| \\
            &\le \gamma \sum_{\widehat{s}^\prime} \widehat{P}(\widehat{s}^\prime | s,a)\left| \max_{\tilde{a}^\prime \sim  \pi}Q_1(\widehat{s}^\prime,\tilde{a}^\prime) - \max_{\tilde{a}^\prime \sim  \pi}Q_2(\widehat{s}^\prime,\tilde{a}^\prime)  \right| \\
            &\le \gamma \sum_{\widehat{s}^\prime} \widehat{P}(\widehat{s}^\prime | s,a) \|Q_1 - Q_2\|_\infty \\
            &= \gamma \|Q_1 - Q_2\|_\infty.
    \end{IEEEeqnarray*}
    Now we consider two cross term cases. Without loss of generality, we only proof $\left | y^{Q_1}_{\rm img} - y^{Q_2}_{\rm lmt} \right |$ holds the contraction inequality. In this situation, we have $y^{Q_1}_{\rm img} \leq y^{Q_1}_{\rm lmt}$ and $y^{Q_2}_{\rm lmt} \leq y^{Q_2}_{\rm img}$. Therefore, 
    \begin{IEEEeqnarray*}{rl} 
    \bigl | y^{Q_1}_{\rm img} & -\; y^{Q_2}_{\rm lmt} \bigr | \\
    & = \begin{cases}
    y^{Q_1}_{\rm img} - y^{Q_2}_{\rm lmt} \leq y^{Q_1}_{\rm lmt} - y^{Q_2}_{\rm lmt}, &{\text{if}}~y^{Q_1}_{\rm img} > y^{Q_2}_{\rm lmt} \\ 
    y^{Q_2}_{\rm lmt} - y^{Q_1}_{\rm img} \leq y^{Q_2}_{\rm img} - y^{Q_1}_{\rm img}, &{\text{otherwise.}} 
    \end{cases} \IEEEeqnarraynumspace \IEEEyesnumber\label{eq:ood-ILB-contraction-case3}
    \end{IEEEeqnarray*}
    It can be rewritten as
    \begin{equation} \label{eq:ood-ILB-contraction-case3-v2}
        \left | y^{Q_1}_{\rm img} - y^{Q_2}_{\rm lmt} \right | \leq \max \biggl \{ \left| y^{Q_1}_{\rm lmt} - y^{Q_2}_{\rm lmt} \right|, \left| y^{Q_2}_{\rm img} - y^{Q_1}_{\rm img} \right| \biggr\}.
    \end{equation}
    This means the third case can be bounded by either the first case or the second case. Hence, it also satisfies the $\gamma$-contraction inequality. 
    
    By combining these together, the \eqref{eq:contraction-to-go} is obtained, i.e., the ILB operator is a contraction operator over space $\mathcal{S} \times \mathcal{A}$ with $\mathcal{L}_\infty$ norm when $\gamma < 1$. According to the Banach fixed-point theorem (contraction mapping theorem), the ILB operator converges to a unique fixed point.
\end{proof}

This shows that the convergence of the proposed ILB as a policy evaluation operator is guaranteed. In practice, the $\gamma$ can be utilized to adjust both the convergence speed and the long-term influence of rewards. Nevertheless, it is typically fixed to $0.99$ in almost all algorithms.

In the next step, we will analyze the error bound between the converged Q-value and the optimal Q-value. To accomplish this, we will first introduce the support-constrained Bellman optimality operator.

\begin{lemma}
    The support-constrained Bellman optimality operator 
    \begin{equation*}
        \mathcal{T}_{\rm Supp} Q(s,a) := r(s,a) + \gamma \mathbb{E}_{s^{\prime} \sim P} \left[ \max_{a^{\prime} \in {\rm Supp} (\beta(\cdot|s^{\prime}))} Q(s^{\prime}, a^{\prime}) \right]
    \end{equation*}
    is also a $\gamma$-contraction operator and has a fixed point.
\end{lemma}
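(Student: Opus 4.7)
The plan is to mirror the argument already used for the ILB operator in Theorem~\ref{thm:convergence}, since the support-constrained Bellman optimality operator differs from the standard Bellman optimality operator only in that the inner maximization is over the restricted action set ${\rm Supp}(\beta(\cdot\mid s^{\prime}))$ instead of all of $\mathcal{A}$. The restriction of the max to a subset does not affect the Lipschitz-type bound that drives contraction, so the proof should go through verbatim with this minor change.

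Concretely, I would take two arbitrary Q-functions $Q_1, Q_2$ and examine
\[
\bigl|\mathcal{T}_{\rm Supp} Q_1(s,a) - \mathcal{T}_{\rm Supp} Q_2(s,a)\bigr|.
\]
The reward terms $r(s,a)$ cancel, leaving
\[
\gamma\,\Bigl|\mathbb{E}_{s^{\prime}\sim P}\bigl[\max_{a^\prime\in {\rm Supp}(\beta(\cdot\mid s^\prime))} Q_1(s^\prime,a^\prime) - \max_{a^\prime\in {\rm Supp}(\beta(\cdot\mid s^\prime))} Q_2(s^\prime,a^\prime)\bigr]\Bigr|.
\]
Pulling the absolute value inside the expectation via Jensen's inequality, then applying the standard inequality $|\max_x f(x)-\max_x g(x)|\le \max_x |f(x)-g(x)|$ (valid over any fixed domain, in particular ${\rm Supp}(\beta(\cdot\mid s^\prime))$), the inner quantity is bounded by $\max_{a^\prime}|Q_1(s^\prime,a^\prime)-Q_2(s^\prime,a^\prime)|\le \|Q_1-Q_2\|_\infty$. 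Taking the supremum over $(s,a)$ then gives $\|\mathcal{T}_{\rm Supp} Q_1-\mathcal{T}_{\rm Supp} Q_2\|_\infty\le \gamma\|Q_1-Q_2\|_\infty$.

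With $\gamma<1$, this establishes that $\mathcal{T}_{\rm Supp}$ is a $\gamma$-contraction on the Banach space of bounded Q-functions under $\mathcal{L}_\infty$. A direct application of the Banach fixed-point theorem then yields existence and uniqueness of a fixed point, which I will denote $Q_{\beta^{*}}$ consistent with the notation used later in Theorems~\ref{lem:optmality_Q_gap}--\ref{thm:action-value-gap}.

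I do not anticipate a substantive obstacle here: the only subtlety is confirming that the key inequality $|\max f - \max g|\le \max|f-g|$ still applies when the maximum is taken over a state-dependent support set rather than all of $\mathcal{A}$. This is immediate because the inequality holds over any common domain and the support ${\rm Supp}(\beta(\cdot\mid s^\prime))$ is the same on both sides for each $s^\prime$. Beyond that, the proof is a pure transcription of the in-sample case treated in equation~\eqref{eq:in-sample-contraction} of Theorem~\ref{thm:convergence}.
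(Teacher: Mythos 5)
Your proof is correct and follows essentially the same route as the paper, which simply remarks that the result follows by a derivation analogous to the in-sample case of the ILB contraction proof (Eq.~\eqref{eq:in-sample-contraction}); you have just written out that derivation explicitly, including the key observation that $|\max f - \max g| \le \max |f-g|$ holds over the state-dependent support set. No gap.
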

\begin{proof}
    This result can be demonstrated by a derivation similar to \eqref{eq:in-sample-contraction}.
\end{proof}
For clarity and ease of reference, the assumptions are also restated here. We make some commonly used assumptions about the reward function \cite[Assumption 1]{huang2024OAC-BVR}.
\begin{enumerate}
    \item The reward function is bounded, i.e., $|r(s,a)| \leq r_{\max}$. Actually, this is consistent with what is required by its definition $r(s,a): \mathcal{S} \times \mathcal{A} \to [-r_{\rm max}, r_{\rm max}]$.
    \item Similar to the Lipschitz condition, i.e., $|r(s,\tilde{a}_1) - r(s,\tilde{a}_2)| \leq \ell \|\tilde{a}_1 - \tilde{a}_2\|_{\infty}$, $\forall s \in \mathcal{S}$ and $\forall \tilde{a}_1, \tilde{a}_2 \in \mathcal{A}$, where $\ell$ is a constant. This requires that the reward function satisfies Lipschitz continuity with respect to actions.
\end{enumerate}
And the error bound assumption between the empirical models and the real ones are required, which is also utilized in both \cite{kumar2020CQL} and \cite{huang2024OAC-BVR}. Suppose the $\widehat{r}$ and $\widehat{P}$ are the empirical reward function and empirical transition dynamics, respectively, the following relationships
    \begin{IEEEeqnarray}{C}
        \Bigl\lVert \widehat{r}(s,a) - r(s,a) \Bigr\rVert_{1} \leq \nicefrac{\zeta_r}{\sqrt{D}}, \\
        \Bigl\lVert \widehat{P}(\cdot \mid s,a) - P(\cdot \mid s,a) \Bigr\rVert_{1} \leq \nicefrac{\zeta_P}{\sqrt{D}}, 
    \end{IEEEeqnarray}
    hold with high probability $\geq 1-\zeta$, $\zeta \in (0,1)$, where $D$ is the constant related to the dataset size, $\zeta_{r}$ and $\zeta_{P}$ are constants related to $\zeta$.

\begin{repeattheorem2}
    Suppose $Q_{\beta^*}$ is the fixed point of the support-constrained Bellman optimality operator. The following gap can be obtained
    \begin{IEEEeqnarray}{C}
        \left \lvert Q_{\beta^*} (s, \pi(s)) - Q_{\beta^*} (s, \beta(s)) \right \rvert
        \leq \ell \epsilon_{\pi} + \gamma \frac{|\mathcal{S}|r_{\rm max}}{1-\gamma} \epsilon_P, \IEEEeqnarraynumspace
    \end{IEEEeqnarray}
    where $\epsilon_{\pi} := \max_{s} \left \lVert \pi(s) - \beta(s) \right \rVert_{\infty}$ and $\epsilon_{P} := \left \lVert P^{\pi} -P^{\beta} \right \rVert_{\infty}$.
\end{repeattheorem2}

\begin{proof}
    Since $Q_{\beta^*}$ is the fixed point of the $\mathcal{T}_{\rm Supp} Q$, the following equation holds
    \begin{equation}\label{eq:opt-Bellman-identity}
        Q_{\beta^*} (s, a) = \mathcal{T}_{\rm Supp} Q_{\beta^*} (s, a) 
    \end{equation}
    for all $(s,a)$ in the state-action space. This yields
    \begin{IEEEeqnarray*}{rl}
            &\lvert Q_{\beta^*} (s, \pi(s))  - Q_{\beta^*} (s, \beta(s)) \rvert \\
            & = \Biggl \lvert r(s,\pi(s)) + \gamma \mathbb{E}_{s^{\prime} \sim P(\cdot \mid s, \pi(s)) } \left[ \max_{a^{\prime} \in {\rm Supp} (\beta(\cdot|s^{\prime}))} Q_{\beta^*}(s^{\prime}, a^{\prime}) \right] \\
            & \phantom{=\;} -  r(s,\beta(s)) - \gamma \mathbb{E}_{s^{\prime} \sim P(\cdot \mid s, \beta(s))} \left[ \max_{a^{\prime} \in {\rm Supp} (\beta(\cdot|s^{\prime}))} Q_{\beta^*}(s^{\prime}, a^{\prime}) \right] \Biggr \rvert \\
            & \leq \left \lvert r(s,\pi(s)) - r(s,\beta(s)) \right \rvert \\
            & \phantom{=\;} + \gamma \Biggl \lvert \mathbb{E}_{s^{\prime} \sim P(\cdot \mid s, \pi(s)) } \left[ \max_{a^{\prime} \in {\rm Supp} (\beta(\cdot|s^{\prime}))} Q_{\beta^*}(s^{\prime}, a^{\prime}) \right] \\
            & \phantom{=\;} - \mathbb{E}_{s^{\prime} \sim P(\cdot \mid s, \beta(s))} \left[ \max_{a^{\prime} \in {\rm Supp} (\beta(\cdot|s^{\prime}))} Q_{\beta^*}(s^{\prime}, a^{\prime}) \right] \Biggr \rvert \\
            & \leq \left \lvert r(s,\pi(s)) - r(s,\beta(s)) \right \rvert  \\
            &\phantom{=\;} + \gamma \sum_{s^\prime \in \mathcal{S}} \Biggl \{ \Bigl \lvert P(s^\prime \mid s, \pi(s)) - P(s^\prime \mid s, \beta(s)) \Bigr \rvert \\
            & \phantom{=\;} \cdot \left \lvert  \max_{a^{\prime} \in {\rm Supp} (\beta(\cdot|s^{\prime}))} Q_{\beta^*}(s^{\prime}, a^{\prime}) \right \rvert \Biggr \} \\
            & \leq \ell \max_{s} \left \lVert \pi(s)-\beta(s) \right \rVert_{\infty} + \gamma \frac{|\mathcal{S}| r_{\rm max}}{1 - \gamma} \left \lVert P^{\pi} -P^{\beta} \right \rVert_{\infty} \IEEEyesnumber \label{eq:last-proof-in-lem-Q-gap} \\
            & = \ell \epsilon_{\pi} + \gamma \frac{|\mathcal{S}| r_{\rm max}}{1 - \gamma} \epsilon_P.
    \end{IEEEeqnarray*}
    The first term in the last inequality \eqref{eq:last-proof-in-lem-Q-gap} holds on the basis of the assumption of Lipschitz condition. The second term holds based on the boundedness of rewards. Actually, for any Q-function, we have
    \begin{equation}\label{eq:Q-bound}
        |Q(s,a)| = \left| \mathbb{E} \sum_{t=1}^{\infty} \gamma^t r_t \right|
        \leq \mathbb{E} \sum_{t=1}^{\infty} \gamma^t |r_t| \leq \frac{r_{\rm max}}{1-\gamma}.
    \end{equation}
    Clearly, it still holds for $Q_{\beta^*}$. We thus obtain the final inequality.
\end{proof}

\begin{repeattheorem3} 
    Suppose $Q_{\beta^*}$ is the fixed point of support-constrained Bellman optimality operator. The gap between the imagination value $y_{\rm img}^{Q_{\beta^*}}$ and $Q_{\beta^*}$ has:
    \begin{equation}
        \begin{aligned}
            &\left \lvert y^{Q_{\beta^*}}_{\rm img} - Q_{\beta^*}(s,a) \right \rvert \\
            & \leq \frac{\zeta_r}{\sqrt{D}} + \gamma \ell \epsilon_{\pi} 
            + \gamma^2 \frac{|\mathcal{S}| r_{\rm max}}{1 - \gamma} \epsilon_P + \gamma \frac{\zeta_P}{\sqrt{D}} \frac{r_{\rm max}}{1-\gamma}.
        \end{aligned}
    \end{equation}
\end{repeattheorem3}

\begin{proof}
    By applying the definition of $y$ and \eqref{eq:opt-Bellman-identity}, we obtain
    \begin{IEEEeqnarray*}{rl}
        &\left \lvert y_{\rm img}^{Q_{\beta^*}} - {Q}_{\beta^*} (s,a) \right \rvert \\
            & = \left \lvert y_{\rm img}^{Q_{\beta^*}} - \mathcal{T}_{\rm Supp} {Q}_{\beta^*} (s,a) \right \rvert\\
            &= \Biggl \lvert \widehat{r} (s,a) + \gamma \mathbb{E}_{ {\widehat{s}^{\prime} \sim \widehat{P}(\cdot \mid s, a)} } \left[ \max_{\tilde{a}^{\prime} \sim  \pi} Q_{\beta^*} (\widehat{s}^{\prime}, \tilde{a}^{\prime}) \right] \\
            &\phantom{=\;} - r(s,a) - \gamma \mathbb{E}_{s^{\prime} \sim P(\cdot \mid s, a)} \left[ \max_{a^{\prime} \in {\rm Supp} (\beta(\cdot|s^{\prime}))} Q_{\beta^*} (s^{\prime}, a^{\prime}) \right] \Biggr \rvert \\
            &\leq |\widehat{r}(s,a) - r(s,a)| + \gamma \Biggl \lvert \mathbb{E}_{ {\widehat{s}^{\prime} \sim \widehat{P}(\cdot \mid s, a)} } \left[ \max_{\tilde{a}^{\prime} \sim  \pi} Q_{\beta^*} (\widehat{s}^{\prime}, \tilde{a}^{\prime}) \right] \\
            &\phantom{=\;} -  \mathbb{E}_{s^{\prime} \sim P(\cdot \mid s, a)} \left[ \max_{a^{\prime} \in {\rm Supp} (\beta(\cdot|s^{\prime}))} Q_{\beta^*} (s^{\prime}, a^{\prime}) \right] \Biggr \rvert \\
            &\leq \frac{\zeta_r}{\sqrt{D}} + \gamma \Biggl \lvert \mathbb{E}_{ {\widehat{s}^{\prime} \sim \widehat{P}(\cdot \mid s, a)} } \left[ \max_{\tilde{a}^{\prime} \sim  \pi} Q_{\beta^*} (\widehat{s}^{\prime}, \tilde{a}^{\prime}) \right] \\
            &\phantom{=\;}  -  \mathbb{E}_{s^{\prime} \sim P(\cdot \mid s, a)} \left[ \max_{a^{\prime} \in {\rm Supp} (\beta(\cdot|s^{\prime}))} Q_{\beta^*} (s^{\prime}, a^{\prime}) \right] \Biggr \rvert \\
    \end{IEEEeqnarray*}
    The last inequality is derived based on the concentration assumption \eqref{eq:emp_reward_assump} of the empirical reward model. Now, using the triangle inequality, we can infer that
    \begin{IEEEeqnarray*}{rl}
        &\left \lvert y_{\rm img}^{Q_{\beta^*}} - {Q}_{\beta^*} (s,a) \right \rvert \\
            &\leq \frac{\zeta_r}{\sqrt{D}} + \gamma \Biggl \lvert \mathbb{E}_{ {\widehat{s}^{\prime} \sim \widehat{P}(\cdot \mid s, a)} } \left[ \max_{\tilde{a}^{\prime} \sim  \pi} Q_{\beta^*} (\widehat{s}^{\prime}, \tilde{a}^{\prime}) \right] \\
            &\phantom{=\;} -  \mathbb{E}_{{\widehat{s}^{\prime} \sim \widehat{P}(\cdot \mid s, a)}} \left[ \max_{a^{\prime} \in {\rm Supp} (\beta(\cdot|\widehat{s}^{\prime}))} Q_{\beta^*} (\widehat{s}^{\prime}, a^{\prime}) \right] \Biggr \rvert \\
            &\phantom{=\;} + \gamma \Biggl \lvert \mathbb{E}_{{\widehat{s}^{\prime} \sim \widehat{P}(\cdot \mid s, a)}} \left[ \max_{a^{\prime} \in {\rm Supp} (\beta(\cdot|\widehat{s}^{\prime}))} Q_{\beta^*} (\widehat{s}^{\prime}, a^{\prime}) \right] \\
            &\phantom{=\;} -  \mathbb{E}_{s^{\prime} \sim P(\cdot \mid s, a)} \left[ \max_{a^{\prime} \in {\rm Supp} (\beta(\cdot|s^{\prime}))} Q_{\beta^*} (s^{\prime}, a^{\prime}) \right] \Biggr \rvert\\
            &\leq \frac{\zeta_r}{\sqrt{D}} + \gamma \mathbb{E}_{ {\widehat{s}^{\prime} \sim \widehat{P}(\cdot \mid s, a)} } \Biggl \lvert  \max_{\tilde{a}^{\prime} \sim  \pi} Q_{\beta^*} (\widehat{s}^{\prime}, \tilde{a}^{\prime})  \\
            &\phantom{=\;} -  \max_{a^{\prime} \in {\rm Supp} (\beta(\cdot|\widehat{s}^{\prime}))} Q_{\beta^*} (\widehat{s}^{\prime}, a^{\prime})  \Biggr \rvert \\
            &\phantom{=\;} + \gamma \Biggl \lvert \mathbb{E}_{{\widehat{s}^{\prime} \sim \widehat{P}(\cdot \mid s, a)}} \left[ \max_{a^{\prime} \in {\rm Supp} (\beta(\cdot|\widehat{s}^{\prime}))} Q_{\beta^*} (\widehat{s}^{\prime}, a^{\prime}) \right] \\
            &\phantom{=\;} -  \mathbb{E}_{s^{\prime} \sim P(\cdot \mid s, a)} \left[ \max_{a^{\prime} \in {\rm Supp} (\beta(\cdot|s^{\prime}))} Q_{\beta^*} (s^{\prime}, a^{\prime}) \right] \Biggr \rvert \\
            &\leq \frac{\zeta_r}{\sqrt{D}} + \gamma \sum_{\widehat{s}^{\prime}} \widehat{P}(\widehat{s}^{\prime} | s,a) \biggl( \ell \left \lVert \pi(\widehat{s}^{\prime})-\beta(\widehat{s}^{\prime}) \right \rVert_{\infty} \\
            &\phantom{\leq {\zeta}_{r} + \gamma \sum_{\widehat{s}^{\prime}} \widehat{P}(\widehat{s}^{\prime} | s,a)}
            + \gamma \frac{|\mathcal{S}| r_{\rm max}}{1 - \gamma} \left \lVert P^{\pi} -P^{\beta} \right \rVert_{\infty} \biggr) \\
            &\phantom{=\;} + \gamma \sum_{s^{\prime}} \biggl( \left\lvert\widehat{P}(s^{\prime}|s,a)-P(s^{\prime}|s,a) \right\rvert \\
            &\phantom{\leq \frac{\zeta_r}{\sqrt{D}} + \gamma \sum_{\widehat{s}^{\prime}} \widehat{P}(\widehat{s}^{\prime} | s,a)}
            \cdot \left\lvert \max_{a^{\prime} \in {\rm Supp} (\beta(\cdot|s^{\prime}))} Q_{\beta^*} (s^{\prime}, a^{\prime}) \right\rvert \biggr) \\
            &\leq \frac{\zeta_r}{\sqrt{D}} + \gamma \ell \epsilon_{\pi}
            + \gamma^2 \frac{|\mathcal{S}| r_{\rm max}}{1 - \gamma} \epsilon_P + \gamma \frac{\zeta_P}{\sqrt{D}} \frac{r_{\rm max}}{1-\gamma}.
    \end{IEEEeqnarray*}
    The second term and third term of the last inequality are obtained by a derivation process similar to the proof of \eqref{eq:last-proof-in-lem-Q-gap}. The last term of the last inequality is built on the error bound assumption of the empirical dynamics model and \eqref{eq:Q-bound}.
\end{proof}

Based on these theorems, we now estimate the action-value gap between the fixed point of the ILB operator and $Q_{\beta^*}$.

\begin{repeattheorem4}[\textbf{Action-value gap}]
    Suppose $Q_{\rm ILB}$ and $Q_{\beta^*}$ denote the fixed point of the ILB operator and support-constrained Bellman optimality operator, separately. The action-value gap can be bounded as 
    \begin{equation}
        \begin{aligned}
        &\left\| Q_{\rm ILB} (s,a) - Q_{\beta^*} (s,a) \right\|_{\infty}\\
            &\leq \frac{1}{1-\gamma} \frac{\zeta_r}{\sqrt{D}} + \frac{\ell}{1-\gamma} \epsilon_{\pi} \\
            &\phantom{=\;} + \frac{\gamma |\mathcal{S}| r_{\rm max}}{(1-\gamma)^2} \epsilon_P + \frac{\gamma r_{\rm max}}{(1-\gamma)^2} \frac{\zeta_P}{\sqrt{D}} + \frac{1}{1-\gamma} \lvert \delta \rvert,
        \end{aligned}
    \end{equation}
    where $\zeta_r,~\zeta_P$ are defined in \eqref{eq:emp_reward_assump} and \eqref{eq:emp_dynamics_assump}, $\epsilon_r,~\epsilon_P$ are defined in Theorem \ref{lem:optmality_Q_gap}, $\delta$ is defined in the ILB operator.
\end{repeattheorem4}

\begin{proof}
    If $a \in {\rm Supp} (\beta(\cdot | s))$, we have
    \begin{equation}
        \mathcal{T}_{\rm ILB} Q_{\beta^*} (s,a)
        = r(s,a) + \gamma \mathbb{E}_{s^\prime \sim P} \left[\max_{\tilde{a}^\prime \sim  \pi} Q_{\beta^*}(s^\prime,\tilde{a}^\prime)\right]
    \end{equation}
    Hence,
    \begin{equation}\label{eq:in-sample-sec-term-estimate}
        \begin{aligned}
            &\lvert \mathcal{T}_{\rm ILB} Q_{\beta^*} (s,a) - \mathcal{T}_{\rm Supp} Q_{\beta^*} (s,a) \rvert \\
            &= \Biggl \lvert \gamma \mathbb{E}_{s^\prime \sim P} \left[\max_{\tilde{a}^\prime \sim  \pi} Q_{\beta^*}(s^\prime,\tilde{a}^\prime)\right]  \\
            &\phantom{=\;}  - \gamma \mathbb{E}_{s^{\prime} \sim P} \left[ \max_{a^{\prime} \in {\rm Supp} (\beta(\cdot|s^{\prime}))} Q_{\beta^*} (s^{\prime}, a^{\prime}) \right] \Biggr \rvert \\
            &\leq \gamma \mathbb{E}_{s^\prime \sim P} \left \lVert Q_{\beta^*} (s^\prime, \pi(s^\prime)) - Q_{\beta^*} (s^\prime, \beta(s^\prime)) \right \rVert_{\infty} \\
            &\leq \gamma \ell \epsilon_{\pi} + \gamma^2 \frac{|\mathcal{S}| r_{\rm max}}{1 - \gamma} \epsilon_P.
        \end{aligned}
    \end{equation}
    The last inequality is obtained by utilizing Theorem \ref{lem:optmality_Q_gap}. Now we can estimate the error bound in the support region of $\beta$.
    
    \begin{IEEEeqnarray*}{rl}
            &\left | {Q}_{\rm ILB} (s,a) - {Q}_{\beta^*} (s,a) \right | \\
            &= \left | \mathcal{T}_{\rm ILB} {Q}_{\rm ILB} (s,a) - \mathcal{T}_{\rm Supp} {Q}_{\beta^*} (s,a) \right | \\
            &= \lvert \mathcal{T}_{\rm ILB} {Q}_{\rm ILB} (s,a) - \mathcal{T}_{\rm ILB} {Q}_{\beta^*} (s,a) \\
            &\phantom{=\;}  + \mathcal{T}_{\rm ILB} {Q}_{\beta^*} (s,a) - \mathcal{T}_{\rm Supp} {Q}_{\beta^*} (s,a) \rvert \\
            &\leq \left | \mathcal{T}_{\rm ILB} {Q}_{\rm ILB} (s,a) - \mathcal{T}_{\rm ILB} {Q}_{\beta^*} (s,a) \right | \\
            &\phantom{=\;} + \left | \mathcal{T}_{\rm ILB} {Q}_{\beta^*} (s,a) - \mathcal{T}_{\rm Supp} {Q}_{\beta^*} (s,a) \right | \\
            &\leq \gamma \left| Q_{\rm ILB} (s,a) - Q_{\beta^*} (s,a) \right| + \gamma \ell \epsilon_{\pi} + \gamma^2 \frac{|\mathcal{S}| r_{\rm max}}{1 - \gamma} \epsilon_P.\IEEEyesnumber\IEEEeqnarraynumspace\label{eq:in-sample-triangle}
    \end{IEEEeqnarray*}
    We use the contraction property of ILB operator to get the first term in the last inequality, and derive the second term by applying equation \eqref{eq:in-sample-sec-term-estimate} directly. By transposing terms, we can get
    \begin{equation}\label{eq:in-sample-gap}
    |Q_{\rm ILB} (s,a) - Q_{\beta^*} (s,a) | \leq \frac{\gamma}{1-\gamma} \ell \epsilon_{\pi} + \gamma^2 \frac{|\mathcal{S}| r_{\rm max}}{(1 - \gamma)^2} \epsilon_P.
    \end{equation}
    At last, we consider the error bound in the case of $a \notin {\rm Supp}(\beta(\cdot|s))$. Similarly, from the fixed point property and $\gamma$-contraction inequality, it follows that
    \begin{equation}\label{eq:out-of-sample-triangle}
        \begin{aligned}
            &\left | {Q}_{\rm ILB} (s,a) - {Q}_{\beta^*} (s,a) \right |\\
            &\leq \left | \mathcal{T}_{\rm ILB} {Q}_{\rm ILB} (s,a) - \mathcal{T}_{\rm ILB} {Q}_{\beta^*} (s,a) \right | \\
            &\phantom{=\;} + \left | \mathcal{T}_{\rm ILB} {Q}_{\beta^*} (s,a) - \mathcal{T}_{\rm Supp} {Q}_{\beta^*} (s,a) \right | \\
            &\leq \gamma \left| Q_{\rm ILB} (s,a) - Q_{\beta^*} (s,a) \right| \\
            &\phantom{=\;} + \left | \mathcal{T}_{\rm ILB} {Q}_{\beta^*} (s,a) - \mathcal{T}_{\rm Supp} {Q}_{\beta^*} (s,a) \right |.
        \end{aligned}
    \end{equation}
    For the second term on the RHS above, we now have
    \begin{equation}
        \begin{aligned}
            &\left | \mathcal{T}_{\rm ILB} {Q}_{\beta^*} (s,a) - \mathcal{T}_{\rm Supp} {Q}_{\beta^*} (s,a) \right |\\
            &= \biggl | \min \left\{ y_{\rm img}^{Q_{\beta^*}}, y_{\rm lmt}^{Q_{\beta^*}} \right\} + \delta - \mathcal{T}_{\rm Supp} {Q}_{\beta^*} (s,a) \biggr | \\
            &= \max \biggl \{ \left| y_{\rm img}^{Q_{\beta^*}} + \delta - \mathcal{T}_{\rm Supp} {Q}_{\beta^*} (s,a) \right|, \\
            &\phantom{=\;} \left| y_{\rm lmt}^{Q_{\beta^*}} + \delta - \mathcal{T}_{\rm Supp} {Q}_{\beta^*} (s,a) \right| \biggr \}.
        \end{aligned}
    \end{equation}
    The first case can be estimated by the Theorem \ref{lem:img_optimality_Q_gap}, and we can see that
    \begin{equation}\label{eq:img_gap}
        \begin{aligned}
            &\left| y_{\rm img}^{Q_{\beta^*}} + \delta - \mathcal{T}_{\rm Supp} {Q}_{\beta^*} (s, a) \right| \\
            &\leq \frac{\zeta_r}{\sqrt{D}} + \gamma \ell \epsilon_{\pi} + \gamma^2 \frac{|\mathcal{S}| r_{\rm max}}{1 - \gamma} \epsilon_P + \gamma \frac{\zeta_P}{\sqrt{D}} \frac{r_{\rm max}}{1-\gamma} + \lvert \delta \rvert.
        \end{aligned}
    \end{equation}

    For the second case, by the Theorem \ref{lem:optmality_Q_gap}, we have
    \begin{equation}\label{eq:lmt_gap}
        \begin{aligned}
            &\left| y_{\rm lmt}^{Q_{\beta^*}} + \delta - \mathcal{T}_{\rm Supp} {Q}_{\beta^*} (s, a) \right| \\
            &= \left| \max_{\widehat{a} \in {\rm Supp} (\beta(\cdot \mid s))} Q_{\beta^*} (s, \widehat{a}) + \delta - {Q}_{\beta^*} (s, a) \right| \\
            &\leq \left| \max_{\widehat{a} \in {\rm Supp} (\beta(\cdot \mid s))} Q_{\beta^*} (s, \widehat{a}) - {Q}_{\beta^*} (s, a) \right| + \lvert \delta \rvert \\
            &\leq \ell \epsilon_{\pi} + \gamma \frac{|\mathcal{S}|r_{\rm max}}{1-\gamma} \epsilon_P + \lvert \delta \rvert.
        \end{aligned}
    \end{equation}
    Combining \eqref{eq:out-of-sample-triangle} to \eqref{eq:lmt_gap} together, we have
    \begin{equation}\label{eq:out-of-sample-gap}
        \begin{aligned}
            &\left | {Q}_{\rm ILB} (s,a) - {Q}_{\beta^*} (s,a) \right | \\
            &\leq \frac{1}{1-\gamma} \left | \mathcal{T}_{\rm ILB} {Q}_{\beta^*} (s,a) - \mathcal{T}_{\rm Supp} {Q}_{\beta^*} (s,a) \right | \\
            &\leq \max \Bigl \{ \frac{1}{1-\gamma} \frac{\zeta_r}{\sqrt{D}} + \frac{\gamma \ell}{1-\gamma} \epsilon_{\pi} \\
            &\phantom{=\;} + \frac{\gamma^2 |\mathcal{S}| r_{\rm max}}{(1-\gamma)^2} \epsilon_P +  \frac{\gamma r_{\rm max}}{(1-\gamma)^2} \frac{\zeta_P}{\sqrt{D}} + \frac{1}{1-\gamma} \lvert \delta \rvert , \\
            &\phantom{=\;} \frac{\ell}{1-\gamma} \epsilon_{\pi} + \frac{\gamma |\mathcal{S}| r_{\rm max}}{(1-\gamma)^2} \epsilon_P + \frac{1}{1-\gamma} \lvert \delta \rvert \Bigr \} \\
            &\leq \frac{1}{1-\gamma} \frac{\zeta_r}{\sqrt{D}} + \frac{\ell}{1-\gamma} \epsilon_{\pi} \\
            &\phantom{=\;} + \frac{\gamma |\mathcal{S}| r_{\rm max}}{(1-\gamma)^2} \epsilon_P + \frac{\gamma r_{\rm max}}{(1-\gamma)^2} \frac{\zeta_P}{\sqrt{D}} + \frac{1}{1-\gamma} \lvert \delta \rvert.
        \end{aligned}
    \end{equation}
    Taking together \eqref{eq:in-sample-gap} and \eqref{eq:out-of-sample-gap}, the theorem is proved.
\end{proof}

According to \eqref{eq:in-sample-gap} and \eqref{eq:out-of-sample-gap}, we conclude that the error bounds for in-sample and out-of-sample actions are of the same magnitude $\mathcal{O}(\nicefrac{r_{\rm max}}{(1-\gamma)^2})$. This result aligns with the conclusion of CQL \cite{kumar2020CQL} within the support region. Notably, the theoretical optimal value of delta is $0$, based on the assumption of no error in the maximum behavior value. In practice, the optimal value may fluctuate around $0$. Nevertheless, $\delta = 0$ consistently provides good performance across all tasks in experiments.

\subsection{Experimental Settings}
\subsubsection{Evaluation Metric}
The standard performance indicator is the normalized score, defined as 
\begin{equation*}
    \text{normalized score} = 100 \times \frac{\text{learned score} - \text{random score}}{\text{expert score} - \text{random score}},
\end{equation*}
where the learned score is obtained by the test method, the expert score and random score are two constants taken from the D4RL \cite{fu2020D4RL} benchmark.

\subsubsection{Competitors}
In the MuJoCo tasks, we compare our method with prior state-of-the-art methods, including BCQ \cite{fujimoto2019offRL}, CQL \cite{kumar2020CQL}, UWAC \cite{wu2021UWAC}, One-step \cite{Brandfonbrener2021onestep}, TD3+BC \cite{fujimoto2021TD3-BC}, IQL \cite{kostrikov2021IQL}, MCQ \cite{lyu2022MCQ}, CSVE \cite{chen2023CSVE}, OAP \cite{yang2023OAP}, DTQL \cite{chen2024DTQL}, OAC-BVR \cite{huang2024OAC-BVR}, and TD3-BST \cite{srinivasan2024TD3-BST}. BC stands for behavior cloning, with results sourced from OAC-BVR. The CQL results are from IQL, while the performances of BCQ and UWAC are derived from the reproduction experiments of MCQ, as their original experiments were conducted on ``-v0" datasets. Results for other algorithms are taken from their respective original papers. 

We also conduct evaluation on Maze2d ``-v1" tasks to further examine the effectiveness of ILQ. Here, we compare our method with ROMI-BCQ \cite{wang2021ROMI}, BEAR \cite{kumar2019BEAR}, CQL \cite{kumar2020CQL}, IQL \cite{kostrikov2021IQL}, MCQ \cite{lyu2022MCQ}, Diffuser \cite{janner2022Diffuser}, and PlanCP \cite{sun2024PlanCP}. As mentioned above, the results of BCQ and CQL are reported from IQL and MCQ, respectively. The performances of other methods are obtained from their original papers. 

To further evaluate the proposed ILQ, we conduct additional comparisons on Adroit tasks. The results for TD3+BC \cite{fujimoto2021TD3-BC} are taken from MCQ \cite{lyu2022MCQ}, as the original paper does not include experiments on Adroit domain. The performance for BCQ \cite{fujimoto2019offRL}, CQL \cite{kumar2020CQL}, and IQL \cite{kostrikov2021IQL} are sourced from DTQL \cite{chen2024DTQL}, while the results for other methods are derived from their respective original reports.

\subsubsection{Parameter Settings} The basic hyperparameters of ILQ are described in Table \ref{tab:basic_hypara}. 
\begin{table}[h]
    \caption{Basic Hyperparameters of ILQ}
    \label{tab:basic_hypara}
    \centering
    \begin{tabular}{@{}c@{~}l@{}}
        \toprule
        \makecell[c]{Hyperparameters} & \makecell[c]{Value} \\
        \midrule
         Actor Architechture & input-256-256-256-output \\
         Critic Architechture & input-256-256-256-1 \\
         Optimizer & Adam \cite{kingma2014Adam} \\
         Batch size & $256$ \\
         \makecell[c]{(Critic, Actor)\\ Learning rate} & \makecell[l]{$(3\times 10^{-4}, 1\times 10^{-4})$ for hopper-r,\\ hopper-mr, walker2d-mr, adroit tasks\\ $(5\times 10^{-4}, 3\times 10^{-4})$ for others}\\
         Entropy & True for all except adroit tasks\\
         Training steps & $10^6$ \\
         Behavior training steps & $3\times10^5$ \\
         Dynamics training epochs & $40$ \\
         Discount factor $\gamma$ & $0.99$ \\
         Target update rate $\tau$ & $0.005$ \\
         Sampling Number $M$ & $10$ \\
        \bottomrule
    \end{tabular}
\end{table}
In addition, hyperparameters of the behavior policy model follow the settings in Diffusion-QL \cite{wang2022diffusionQL}. Thus, a $3$-layer MLPs with $256$ hidden units, $5$ diffusion time steps, and corresponding variance schedule \cite{song2021SDE} are implemented for the diffusion model. For the dynamics model, we follow the implementation of MOPO \cite{yu2020MOPO} with $4$-layers MLPs with $200$ hidden units. We only utilize its reward penalty coefficient 2 for hopper-m, walker2d-mr, and 1 for hopper-r, hopper-mr and walker2d-m tasks. Both of behavior policy and dynamics model are optimized by Adam \cite{kingma2014Adam} with learning rate $3\times 10^{-4}$ and $1\times 10^{-3}$, respectively. In addition, we use a cosine learning schedule for adroit tasks. The main hyperparameters $\eta$ and $\delta$ associated with MuJoCo ``-v2" are listed in Table \ref{tab:main_hypara_mujoco}, and the main hyperparameters associated with Maze2D ``-v1" are listed in Table \ref{tab:main_hypara_maze2d} and Adroit ``-v0" are listed in Table \ref{tab:main_hypara_adroit}. All experiments were conducted on the device with $4 \times$ Tesla V100 GPUs. Our code required for conducting all experiments will be made publicly available upon acceptance.

\begin{table}[ht]
    \caption{Main Hyperparameters on MuJoCo Datasets}
    \label{tab:main_hypara_mujoco}
    \centering
    \begin{tabular}{lcc}
        \toprule
        \makecell[c]{Task} & \makecell[c]{Trade-off Factor $\eta$} & \makecell[c]{Offset $\delta$} \\
        \midrule
         halfcheetah-r & 0.95 & 2 \\
         hopper-r & 0.9 & 1 \\
         walker2d-r & 0.7 & 1 \\
         halfcheetah-m & 0.95 & 1 \\
         hopper-m & 0.95 & -2 \\
         walker2d-m & 0.9 & 0.5 \\
         halfcheetah-mr & 0.95 & 2 \\
         hopper-mr & 0.8 & -0.5 \\
         walker2d-mr & 0.9 & 1 \\
         halfcheetah-me & 0.6 & 1 \\
         hopper-me & 0.4 & -0.5 \\
         walker2d-me & 0.8 & 1 \\
        \bottomrule
    \end{tabular}
\end{table}

\begin{table}[ht]
    \caption{Main Hyperparameters on Maze2D Datasets}
    \label{tab:main_hypara_maze2d}
    \centering
    \begin{tabular}{lcc}
        \toprule
        \makecell[c]{Task} & \makecell[c]{Trade-off Factor $\eta$} & \makecell[c]{Offset $\delta$} \\
        \midrule
         maze2d-u & 0.95 & -0.5 \\
         maze2d-ud & 0.95 & 0 \\
         maze2d-m & 0.95 & 0 \\
         maze2d-md & 0.95 & 0 \\
         maze2d-l & 0.95 & 0 \\
         maze2d-ld & 0.95 & 0 \\
        \bottomrule
    \end{tabular}
\end{table}

\begin{table}[ht]
    \caption{Main Hyperparameters on Adroit Datasets}
    \label{tab:main_hypara_adroit}
    \centering
    \begin{tabular}{lcc}
        \toprule
        \makecell[c]{Task} & \makecell[c]{Trade-off Factor $\eta$} & \makecell[c]{Offset $\delta$} \\
        \midrule
         pen-human & 0.8 & -1 \\
         pen-cloned & 0.8 & 0 \\
        \bottomrule
    \end{tabular}
\end{table}

\subsection{More Experimental Results}

\subsubsection{Score Curve Results}
The score curves for MuJoCo tasks are typically of primary interest, which are illustrated in Fig. \ref{fig:ILQ_score_mujoco}

\begin{figure*}[!tb]
    \centering
    \subfloat[ ]{
        \includegraphics[width=0.2\textwidth]{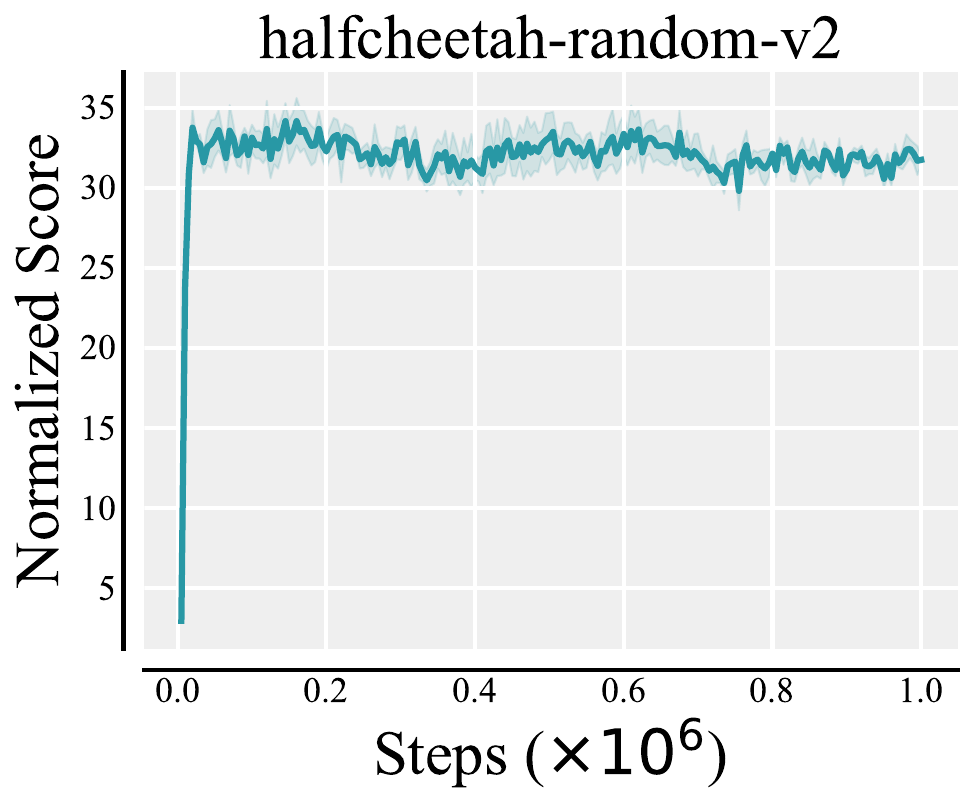}
        \label{fig:ILQ_score_halfcheetah-random-v2}
    }
    \subfloat[ ]{
        \includegraphics[width=0.2\textwidth]{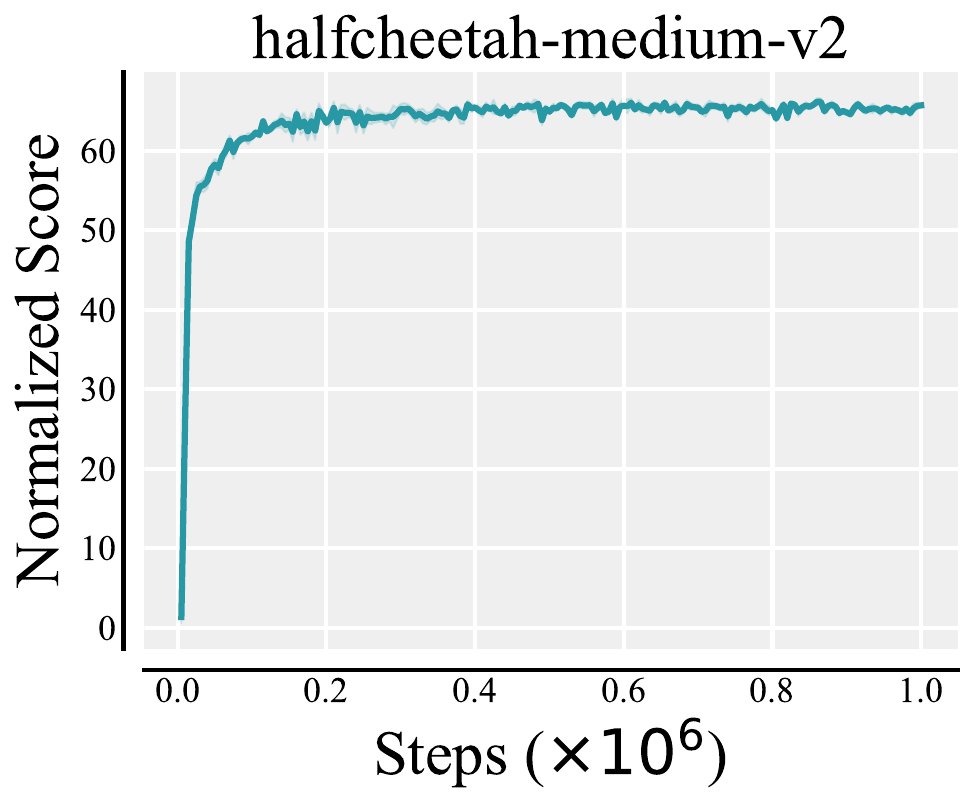}
        \label{fig:ILQ_score_halfcheetah-medium-v2}
    }
    \subfloat[ ]{
        \includegraphics[width=0.2\textwidth]{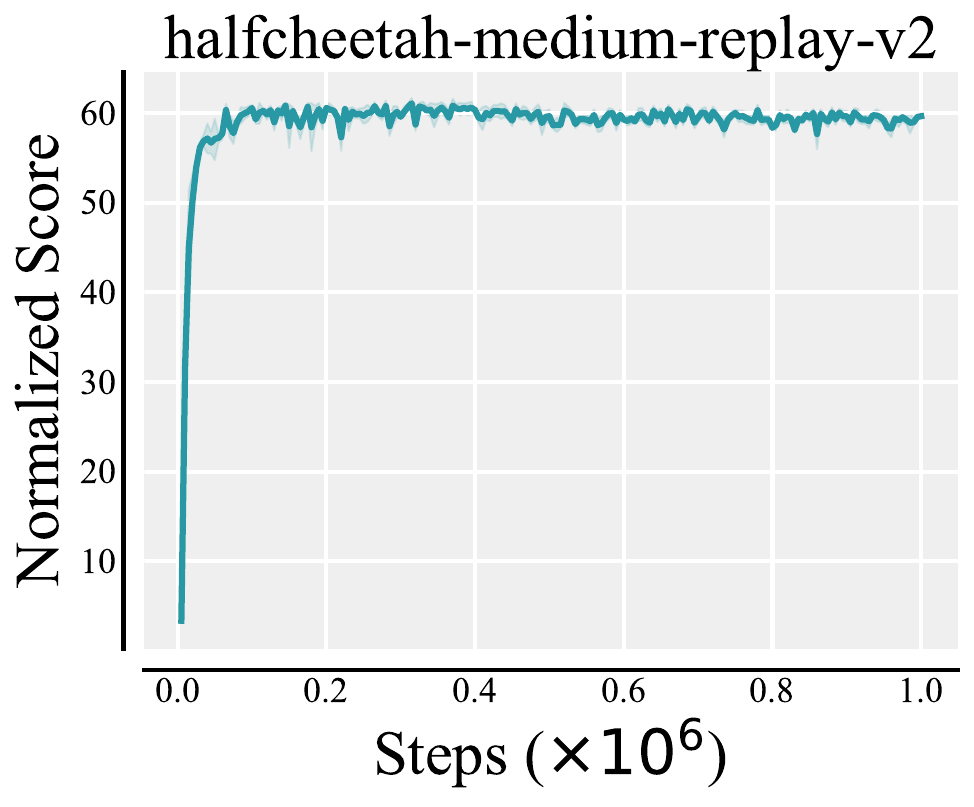}
        \label{fig:ILQ_score_halfcheetah-medium-replay-v2}
    }
    \subfloat[ ]{
        \includegraphics[width=0.2\textwidth]{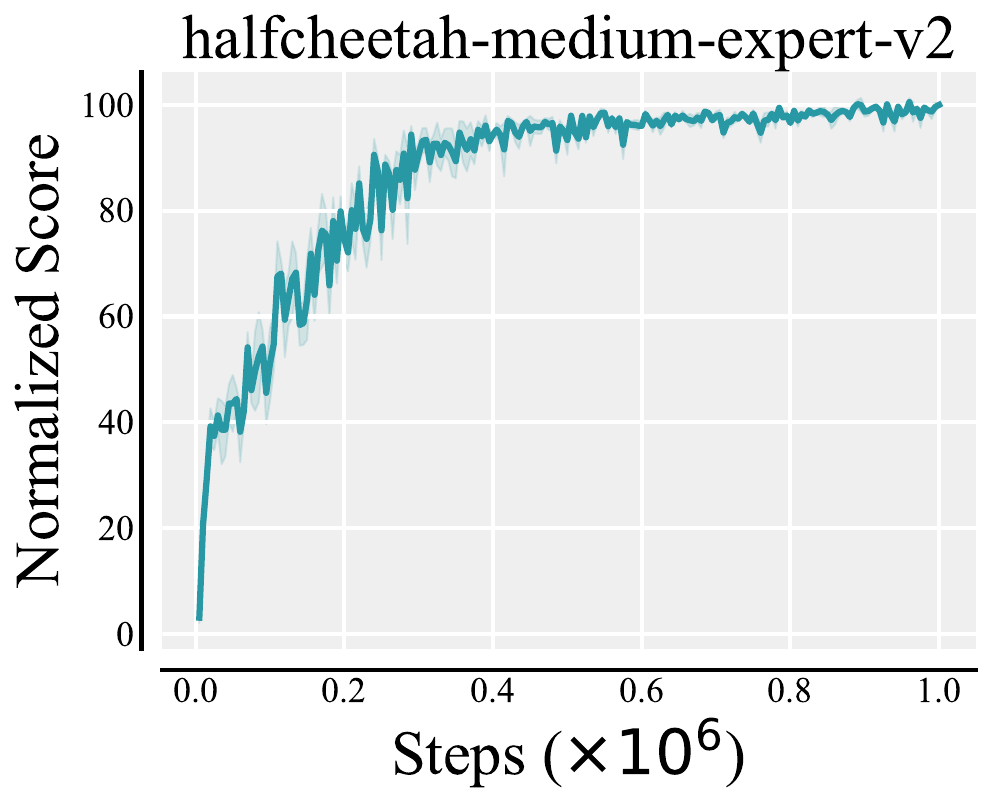}
        \label{fig:ILQ_score_halfcheetah-medium-expert-v2}
    }
    \\
    \subfloat[ ]{
        \includegraphics[width=0.2\textwidth]{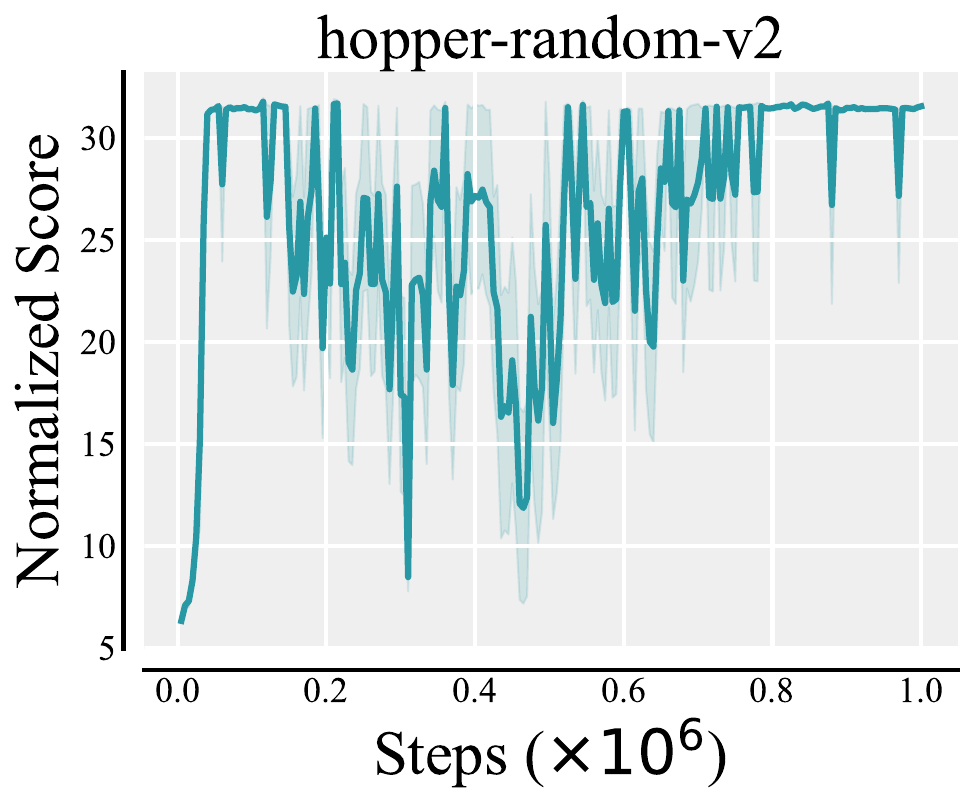}
        \label{fig:ILQ_score_hopper-random-v2}
    }
    \subfloat[ ]{
        \includegraphics[width=0.2\textwidth]{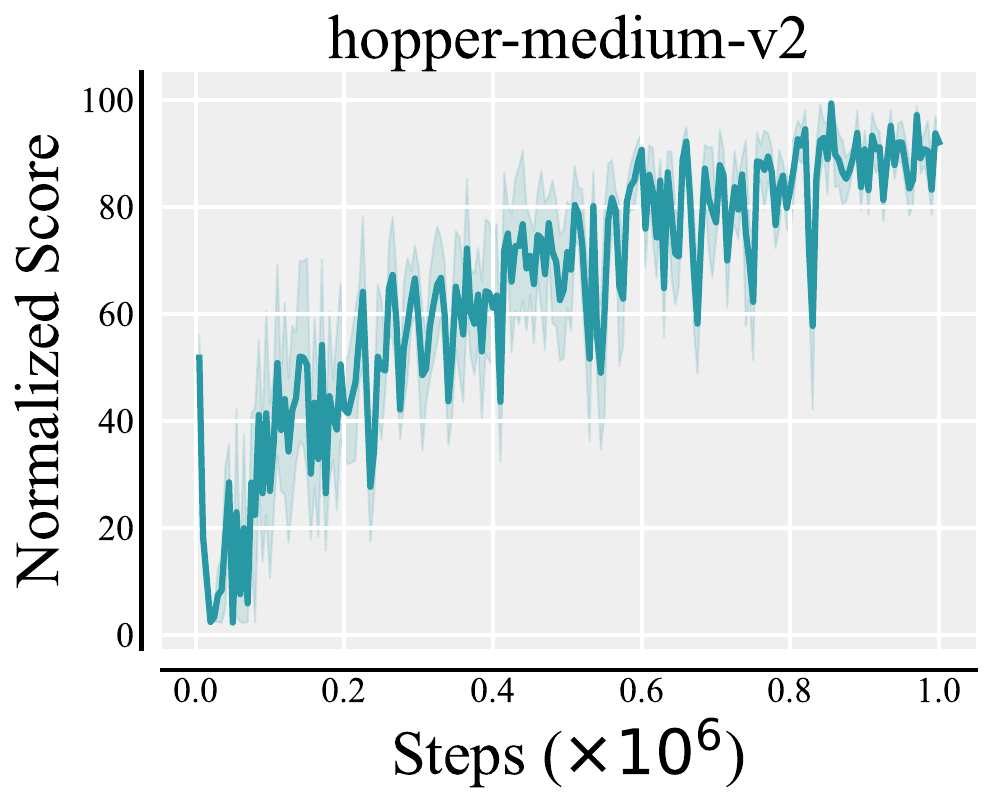}
        \label{fig:ILQ_score_hopper-medium-v2}
    }
    \subfloat[ ]{
        \includegraphics[width=0.2\textwidth]{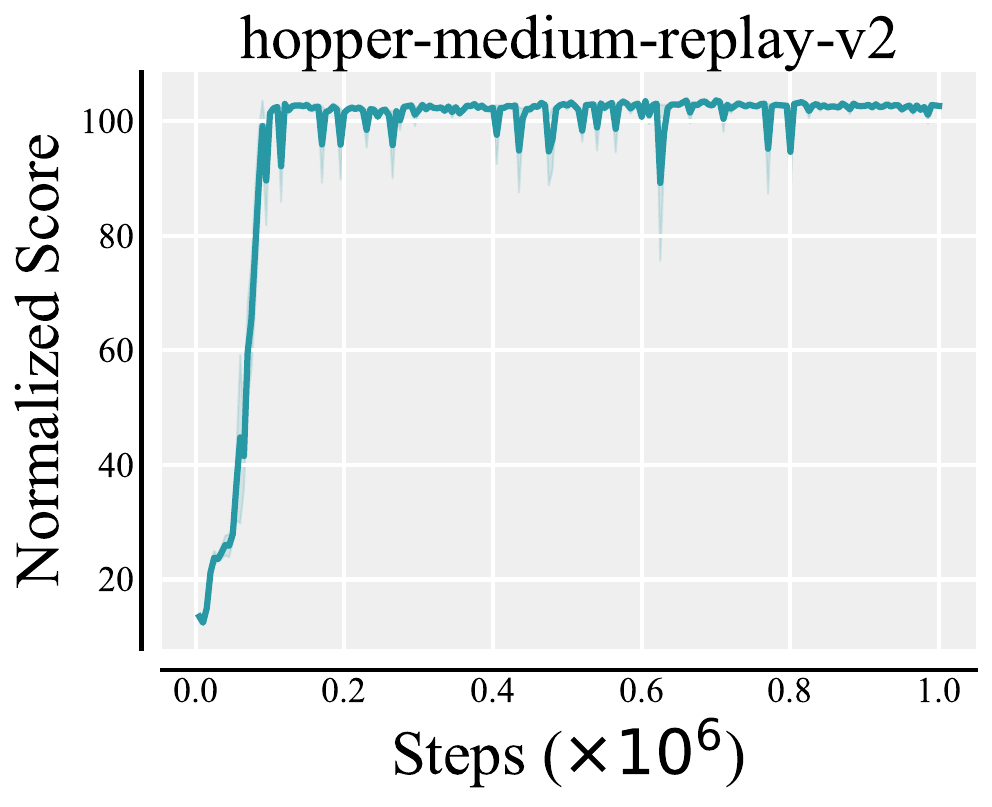}
        \label{fig:ILQ_score_hopper-medium-replay-v2}
    }
    \subfloat[ ]{
        \includegraphics[width=0.2\textwidth]{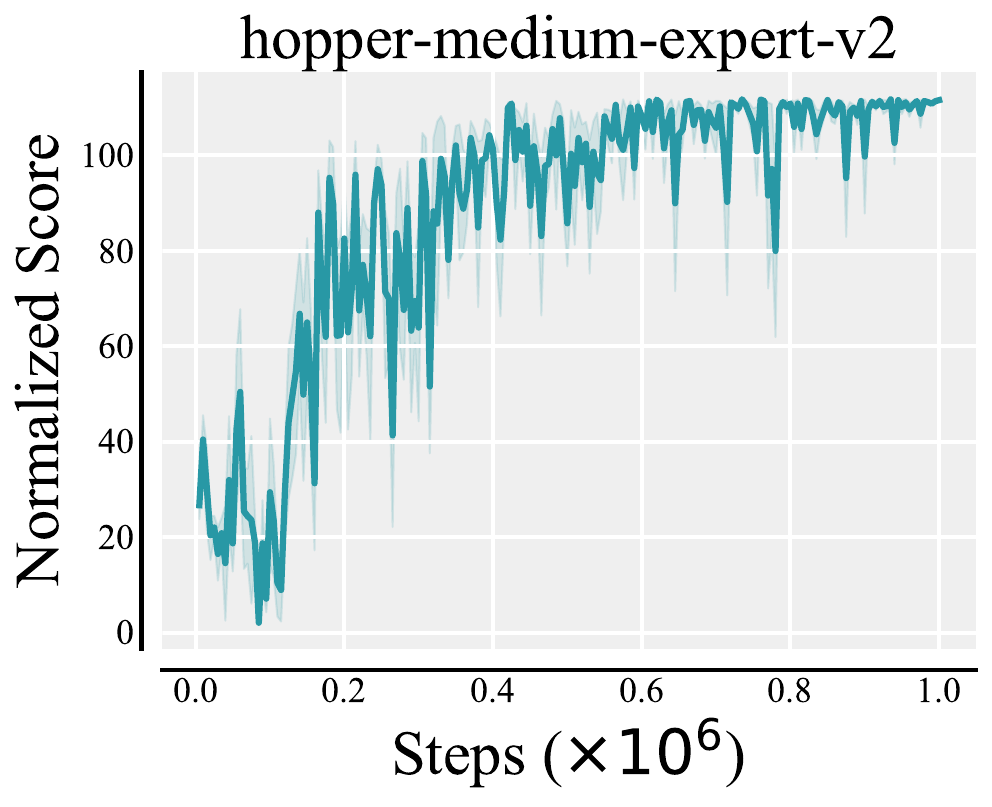}
        \label{fig:ILQ_score_hopper-medium-expert-v2}
    }
    \\
    \subfloat[ ]{
        \includegraphics[width=0.2\textwidth]{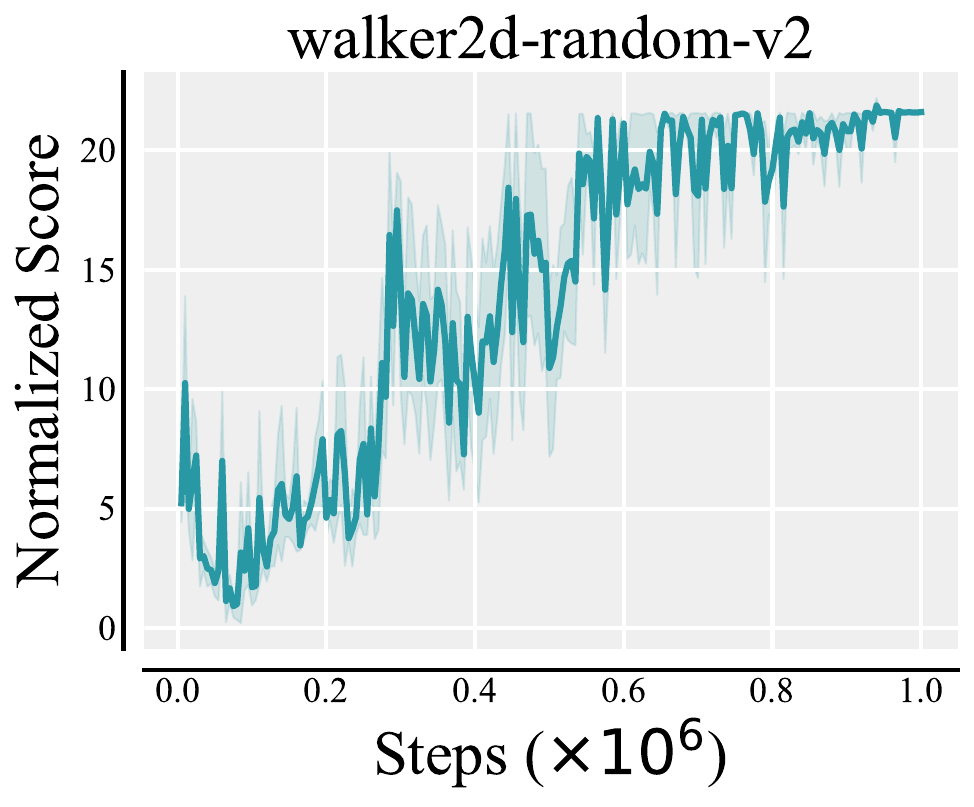}
        \label{fig:ILQ_score_walker2d-random-v2}
    }
    \subfloat[ ]{
        \includegraphics[width=0.2\textwidth]{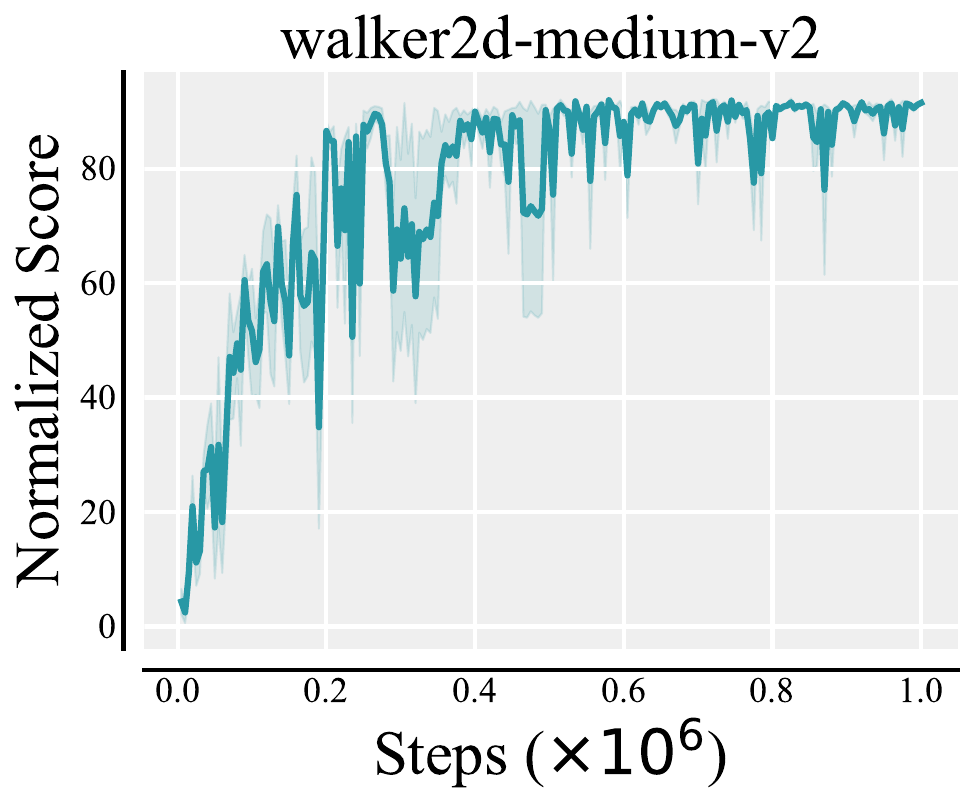}
        \label{fig:ILQ_score_walker2d-medium-v2}
    }
    \subfloat[ ]{
        \includegraphics[width=0.2\textwidth]{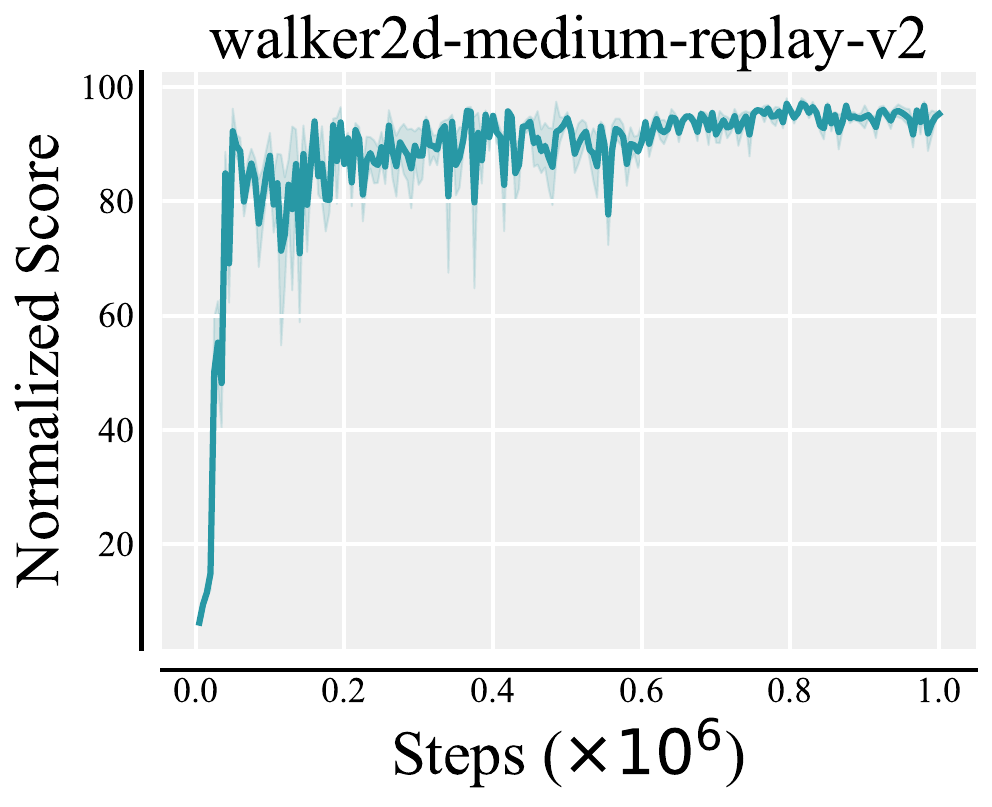}
        \label{fig:ILQ_score_walker2d-medium-replay-v2}
    }
    \subfloat[ ]{
        \includegraphics[width=0.2\textwidth]{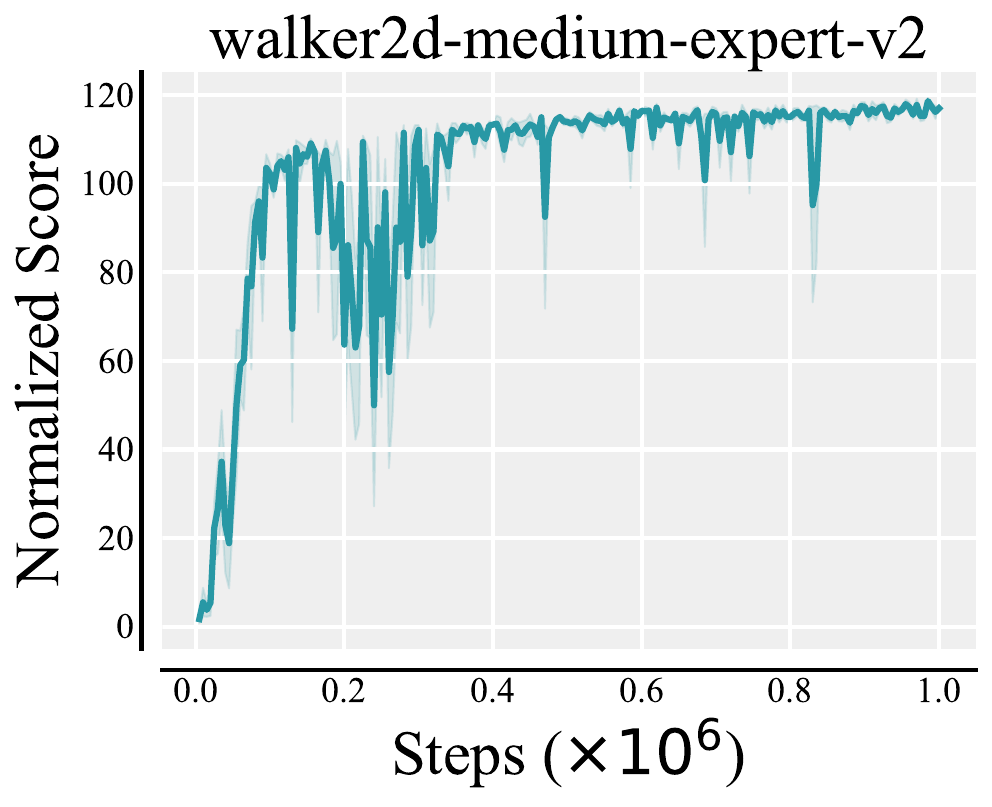}
        \label{fig:ILQ_score_walker2d-medium-expert-v2}
    }
    \caption{Normailzed score curves of ILQ on MuJoCo ``-v2". The results are averaged over 5 different random seeds. Shaded areas indicate standard deviation.}
    \label{fig:ILQ_score_mujoco}
\end{figure*}

\subsection{More Sensitive Analyses}
\subsubsection{Sensitive Analysis of Sampling Number $M$}
We did not finetune the sampling number $M$, which was set to $10$ in all experiments. According to the practical implementation of ILB operator, $M$ implicitly influences the estimation of the maximum behavior value. To assess its impact on performance, we conduct extra sensitivity analyses on the halfcheetah-m, hopper-mr, and walker2d-me tasks. The experimental results indicate that performance on these tasks remains stable when $M$ is set to $5$, $10$, and $15$, respectively, as illustrated in the bar charts in Fig. \ref{fig:N-action-analysis}.

\begin{figure}
    \centering
    \includegraphics[width=0.45\textwidth]{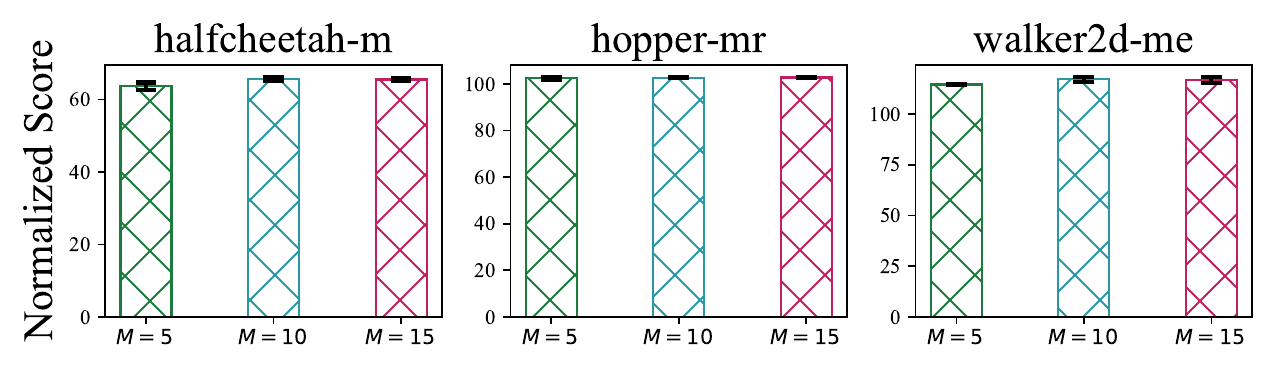}
    \caption{Performances of ILQ under different sampling number $M$.}
    \label{fig:N-action-analysis}
\end{figure}

\subsubsection{Sensitive Analysis of Dynamics Model Accuracy} Our analysis includes: (1) Training epochs: The Gaussian-fitted dynamics model shows stable performance across different epoch settings (Table \ref{tab:sens_epoch}), indicating robustness to this hyperparameter. (2) Training data quantity: We also conducted experiments using 20\% reduced training data for the dynamics model. While Table \ref{tab:sens_data} shows a slight performance decrease on most of tasks when using only 80\% data, the method maintains reasonable effectiveness.

\begin{table}[!ht]
  \caption{Changing training epochs of dynamics model. `ha'=halfcheetah, `ho'=hopper, `wa'=walker2d, `m'=medium, `mr'=medium-replay. Epochs=40 is the default setting.}
  \label{tab:sens_epoch}
  \small
  \centering
  \resizebox{\linewidth}{!}{
  \begin{tabular}{ccccccc}
    \toprule
    Epochs & ha-m & ho-m & wa-m & ha-mr & ho-mr & wa-mr\\
    \midrule
    35    & $64.4_{\pm 0.6}$ & $93.5_{\pm 5.9}$ & $92.0_{\pm 1.1}$ & $58.2_{\pm 1.1}$ & $102.9_{\pm 0.5}$ & $91.5_{\pm 6.0}$ \\
    40    & $65.7_{\pm 0.5}$ & $92.1_{\pm 5.8}$ & $91.5_{\pm 0.7}$ & $59.6_{\pm 1.0}$ & $102.7_{\pm 0.3}$ & $95.3_{\pm 1.8}$ \\
    45    & $65.0_{\pm 0.4}$ & $93.9_{\pm 6.0}$ & $90.0_{\pm 0.5}$ & $58.1_{\pm 0.3}$ & $102.4_{\pm 0.5}$ & $93.3_{\pm 0.7}$ \\
    50    & $64.1_{\pm 0.2}$ & $90.9_{\pm 9.1}$ & $89.4_{\pm 0.8}$ & $59.2_{\pm 0.6}$ & $103.1_{\pm 0.7}$ & $97.9_{\pm 0.7}$ \\
        \bottomrule
  \end{tabular}
  }
\end{table}

\begin{table}[!ht]
  \caption{Reducing training data for dynamics model.}
  \label{tab:sens_data}
  \small
  \centering
  \resizebox{\linewidth}{!}{
  \begin{tabular}{ccccccc}
    \toprule
    Data ratio & ha-m & ho-m & wa-m & ha-mr & ho-mr & wa-mr\\
    \midrule
    All  & $65.7_{\pm 0.5}$ & $92.1_{\pm 5.8}$ & $91.5_{\pm 0.7}$ & $59.6_{\pm 1.0}$ & $102.7_{\pm 0.3}$ & $95.3_{\pm 1.8}$ \\
    $80\%$    & $64.7_{\pm 0.0}$ & $98.4_{\pm 4.9}$ & $89.8_{\pm 3.3}$ & $58.9_{\pm 1.6}$ & $102.6_{\pm 0.2}$ & $88.0_{\pm 5.5}$ \\
        \bottomrule
  \end{tabular}
  }
\end{table}

\subsection{More Ablation Studies}

We conduct additional ablation studies to assess the effectiveness of both the imagination and limitation components. The results without the imagination component are shown in Fig. \ref{fig:ILQ_wo_img-app}. While competitive performance is achieved on some tasks, such as in Fig. \ref{fig:ILQ_wo_img-app}\subref{fig:ILQ_woimg_halfcheetah-medium-v2} and \subref{fig:ILQ_woimg_hopper-medium-v2}, performance significantly drops on other tasks. As illustrated in Fig. \ref{fig:ILQ_wo_img-app}\subref{fig:ILQ_woimg_halfcheetah-medium-expert-v2-app}, \subref{fig:ILQ_woimg_hopper-medium-expert-v2}, \subref{fig:ILQ_woimg_walker2d-medium-expert-v2}, and \subref{fig:ILQ_woimg_walker2d-medium-v2-app}, performance deteriorates sharply, with the normalized score curves exhibiting highly oscillatory behavior. As discussed previously, this is due to the use of the maximum behavior value as the target for OOD actions, which introduces uncontrollable bias and ultimately hampers policy improvement. These results highlight the critical role of the imagination component in providing calibrated target values within proper constraints.

In further studies on the limitation component, slightly better performance is observed when the imagination component is exclusively used, as shown in Fig. \ref{fig:ILQ_wo_lmt-app}\subref{fig:ILQ_wolmt_halfcheetah-medium-v2-app}. This suggests that, in certain cases, the imagination component can offer reliable guidance. However, as seen in Fig. \ref{fig:ILQ_wo_lmt-app}\subref{fig:ILQ_wolmt_halfcheetah-medium-expert-v2}, \subref{fig:ILQ_wolmt_walker2d-medium-expert-v2}, and \subref{fig:ILQ_wolmt_walker2d-medium-v2}, performance drops significantly, and in some cases, policies completely collapse, as evidenced in the hopper-medium-expert and hopper-medium tasks (Fig. \ref{fig:ILQ_wo_lmt-app}\subref{fig:ILQ_wolmt_hopper-medium-expert-v2} and \ref{fig:ILQ_wo_lmt-app}\subref{fig:ILQ_wolmt_hopper-medium-v2-app}). This underscores the importance of the limitation component in preventing overly optimistic estimates. These comprehensive studies demonstrate that both components are essential for accurate OOD action-value estimation.

\begin{figure}[!tb]
    \centering
    {
        \includegraphics[width=0.25\textwidth]{figures/woimg_legend.pdf}
    }
    \\
    \subfloat[ ]{
        \includegraphics[width=0.22\textwidth]{figures/ILQ_woimg_halfcheetah-medium-expert-v2.pdf}
        \label{fig:ILQ_woimg_halfcheetah-medium-expert-v2-app}
    }
    \subfloat[ ]{
        \includegraphics[width=0.22\textwidth]{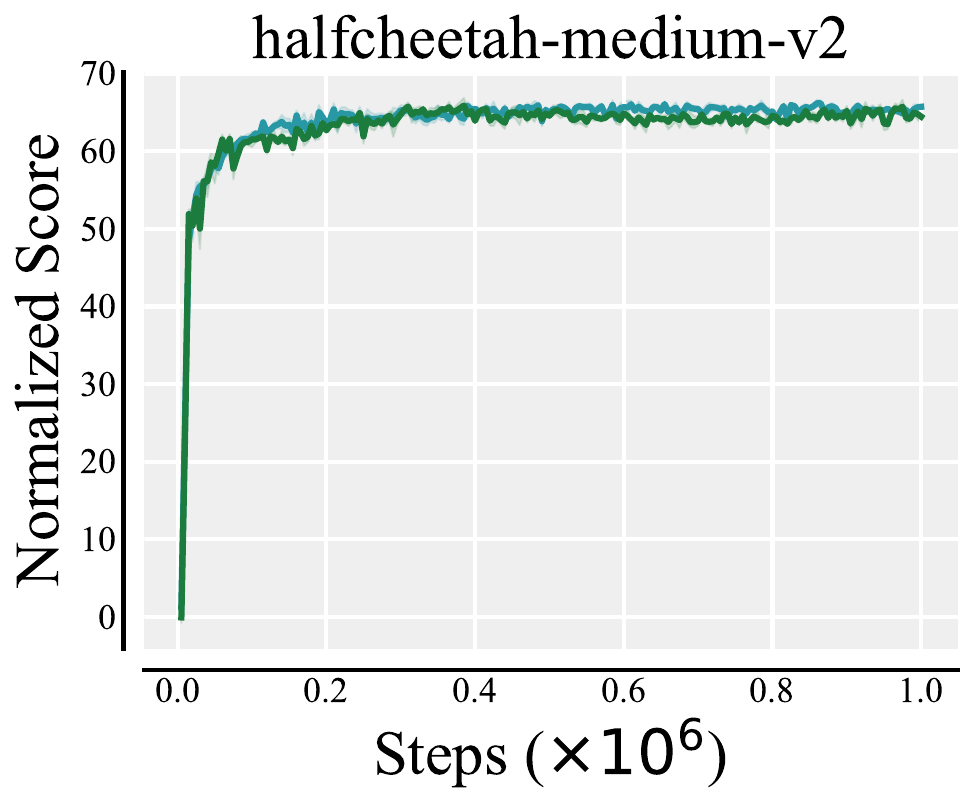}
        \label{fig:ILQ_woimg_halfcheetah-medium-v2}
    }
    \\
    \subfloat[ ]{
        \includegraphics[width=0.22\textwidth]{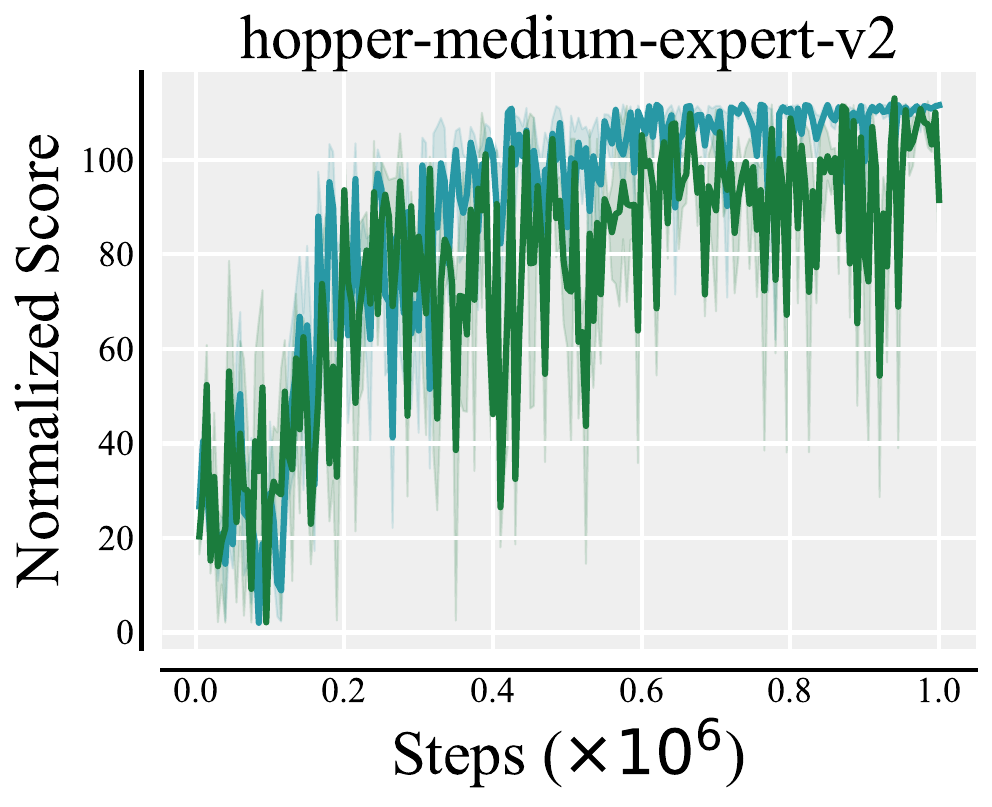}
        \label{fig:ILQ_woimg_hopper-medium-expert-v2}
    }
    \subfloat[ ]{
        \includegraphics[width=0.22\textwidth]{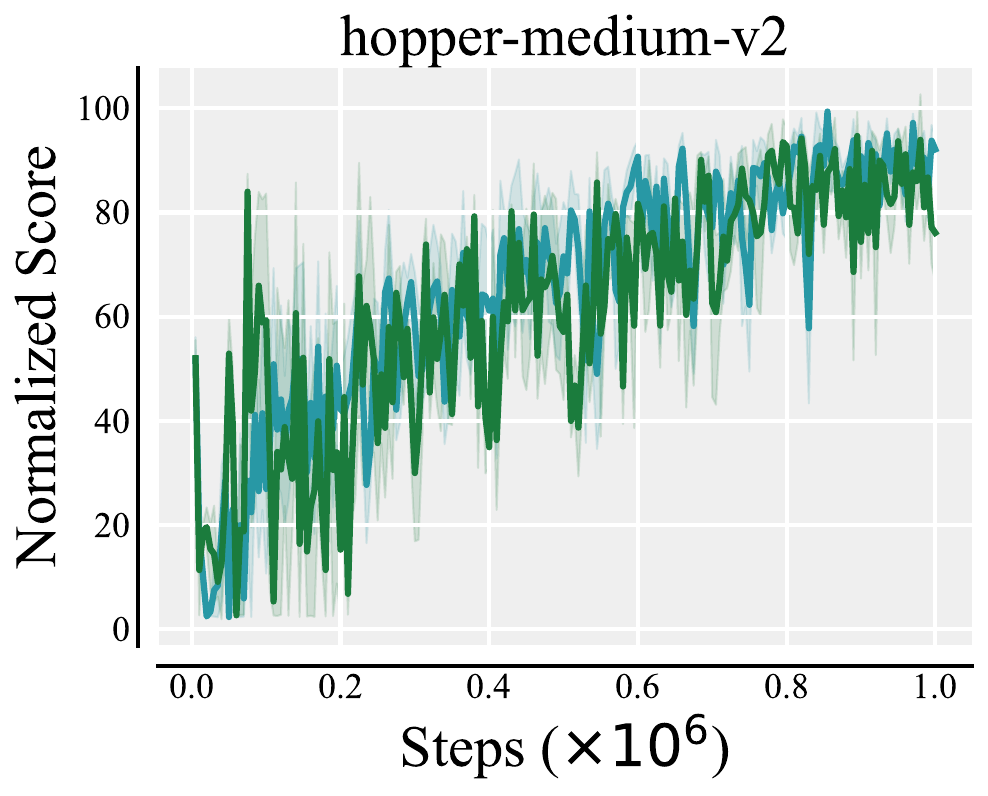}
        \label{fig:ILQ_woimg_hopper-medium-v2}
    }
    \\
    \subfloat[ ]{
        \includegraphics[width=0.22\textwidth]{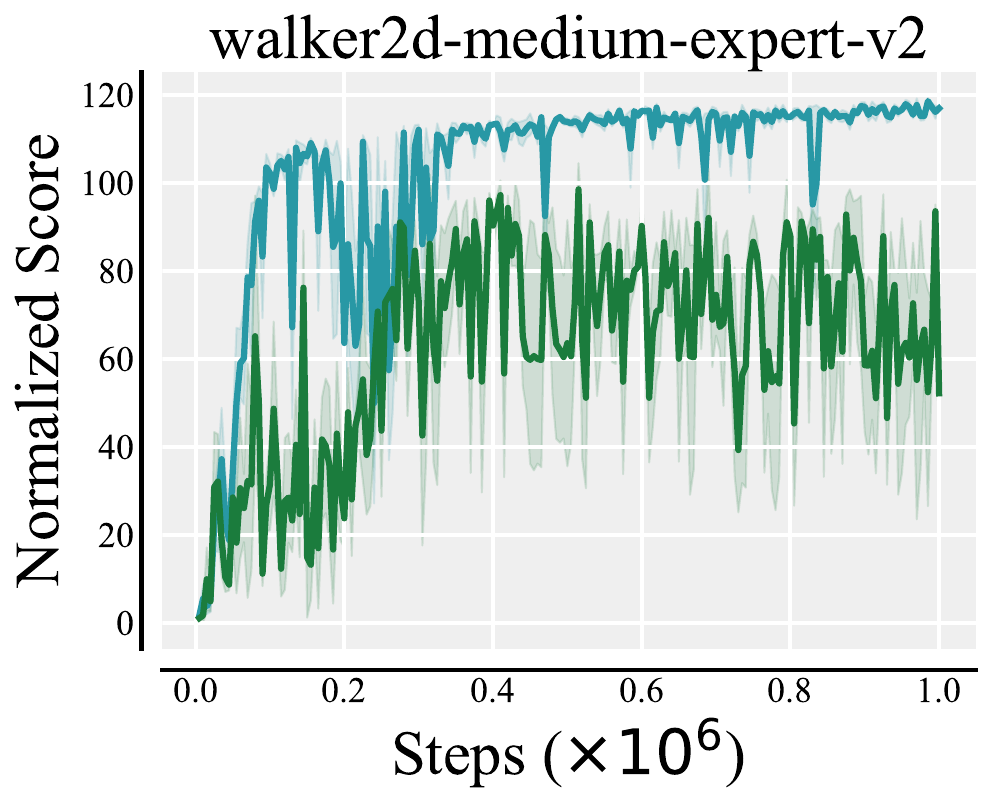}
        \label{fig:ILQ_woimg_walker2d-medium-expert-v2}
    }
    \subfloat[ ]{
        \includegraphics[width=0.22\textwidth]{figures/ILQ_woimg_walker2d-medium-v2.pdf}
        \label{fig:ILQ_woimg_walker2d-medium-v2-app}
    }
    \caption{Performance comparison of the ILQ algorithm with and without the imagined value $y_{\rm img}^Q$ in the target value. }
    \label{fig:ILQ_wo_img-app}
\end{figure}

\begin{figure}[!tb]
    \centering
    {
        \includegraphics[width=0.25\textwidth]{figures/wolmt_legend.pdf}
    }
    \\
    \subfloat[ ]{
        \includegraphics[width=0.22\textwidth]{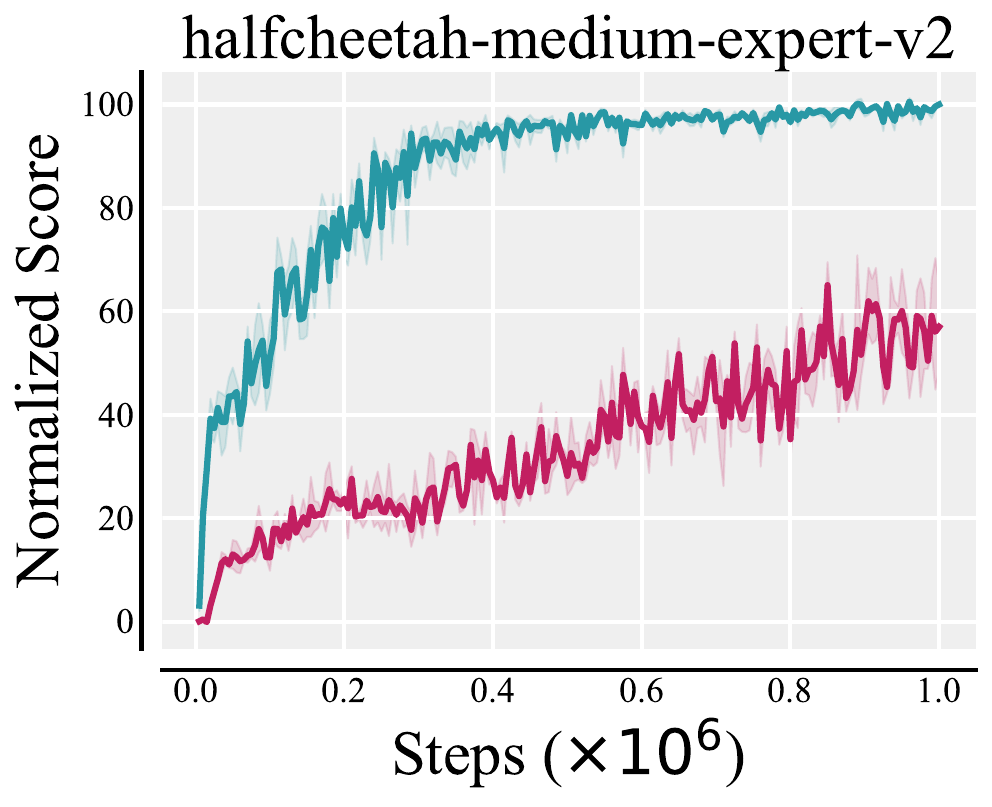}
        \label{fig:ILQ_wolmt_halfcheetah-medium-expert-v2}
    }
    \subfloat[ ]{
        \includegraphics[width=0.22\textwidth]{figures/ILQ_wolmt_halfcheetah-medium-v2.pdf}
        \label{fig:ILQ_wolmt_halfcheetah-medium-v2-app}
    }
    \\
    \subfloat[ ]{
        \includegraphics[width=0.22\textwidth]{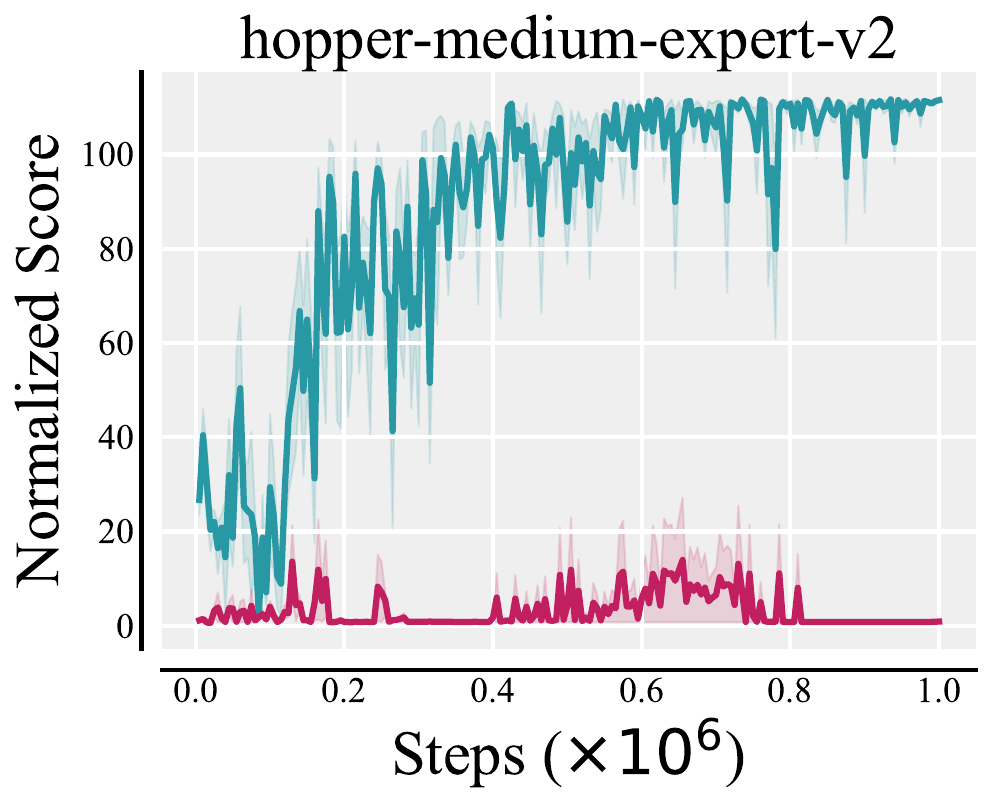}
        \label{fig:ILQ_wolmt_hopper-medium-expert-v2}
    }
    \subfloat[ ]{
        \includegraphics[width=0.22\textwidth]{figures/ILQ_wolmt_hopper-medium-v2.pdf}
        \label{fig:ILQ_wolmt_hopper-medium-v2-app}
    }
    \\
    \subfloat[ ]{
        \includegraphics[width=0.22\textwidth]{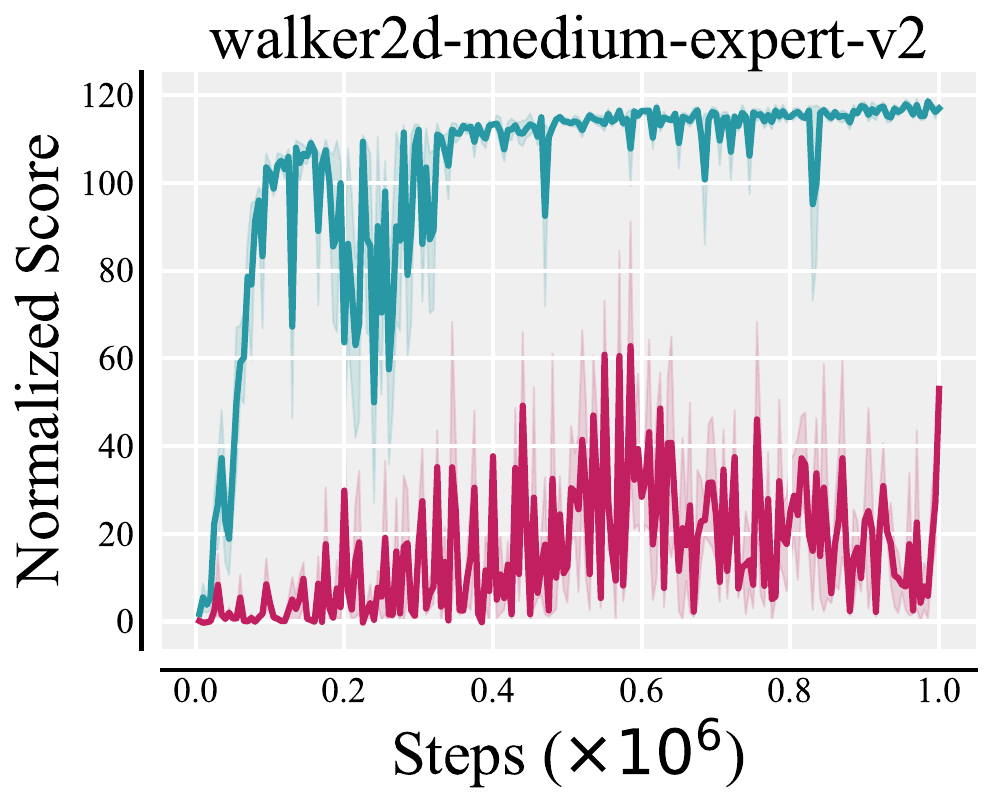}
        \label{fig:ILQ_wolmt_walker2d-medium-expert-v2}
    }
    \subfloat[ ]{
        \includegraphics[width=0.22\textwidth]{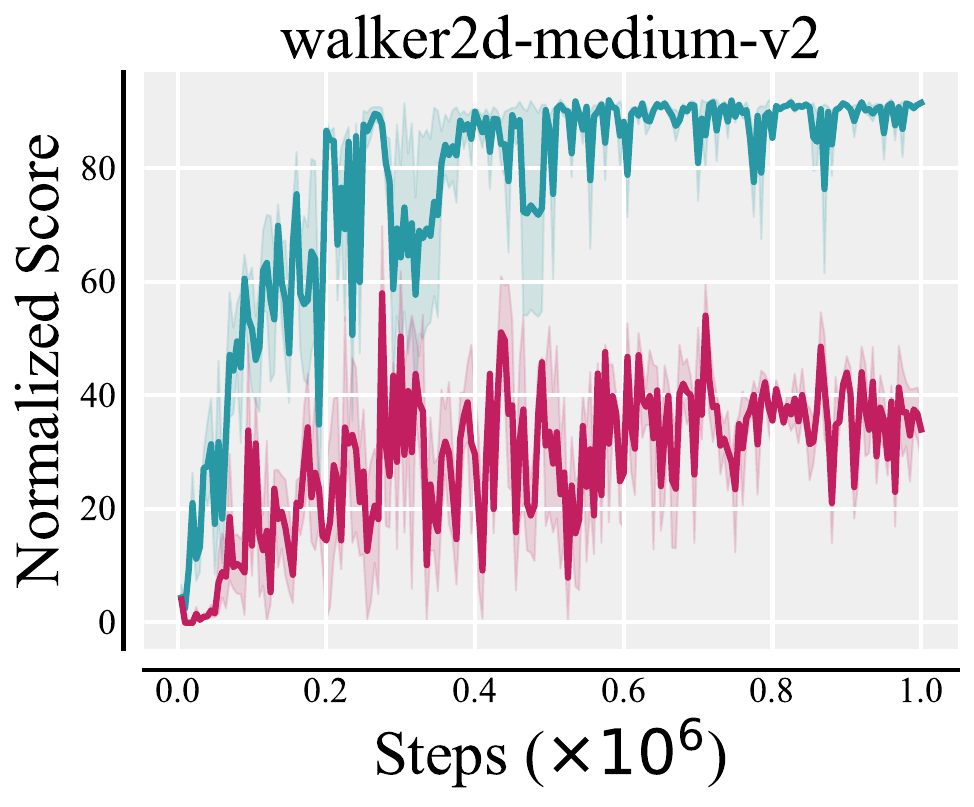}
        \label{fig:ILQ_wolmt_walker2d-medium-v2}
    }
    \caption{Performance comparison of the ILQ algorithm with and without the limitation value $y_{\rm lmt}^Q$ in the target value.}
    \label{fig:ILQ_wo_lmt-app}
\end{figure}

\subsection{Further Verification in Q-value}
ILQ estimates OOD Q-values by preserving the imagined values as much as possible while adhering to the maximum behavior value constraint. This approach ensures appropriately optimistic estimates, as shown in the section of Introduction, thus avoiding the deliberate pessimism of value regularization methods.

To further understand the interaction between the imagined value and the maximum behavioral value, we examined the difference between them, i.e., $y_{\rm img}^Q - y_{\rm lmt}^Q$. Figure \ref{fig:ILQ_deep_demo_in_Q}\subref{fig:Q_deviation} illustrates how the range (cyan area) of this difference evolves during training. For the upper boundary curve (green), where the imagined value exceeds the limiting value, we retain the limiting value. Conversely, for the lower boundary curve (red), where the limiting value is higher than the imagined value, we retain the imagined value. This suggests a mutually constraining relationship between the two components. As seen in Fig. \ref{fig:ILQ_wo_img}, the absence of the imagined value leads to a significant performance decrease. Meanwhile, as shown in Fig. \ref{fig:ILQ_wo_lmt}, unconstrained imagining can result in false optimistic estimates, potentially causing the policy to collapse. Notably, in the scenario depicted in Fig. \ref{fig:ILQ_wo_lmt}\subref{fig:ILQ_wolmt_hopper-medium-v2}, this false optimistic estimation even grows exponentially, reaching a Q-value of $10^{13}$, as shown in Fig. \ref{fig:ILQ_deep_demo_in_Q}\subref{fig:Q_value_wolmt}.

\begin{figure}[!tb]
    \centering
    \subfloat[ ]{
        \includegraphics[width=0.22\textwidth]{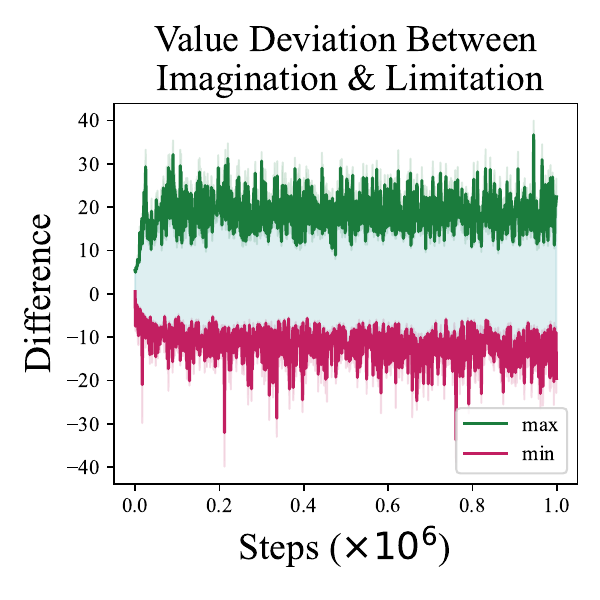}
        \label{fig:Q_deviation}
    }
    \subfloat[ ]{
        \includegraphics[width=0.22\textwidth]{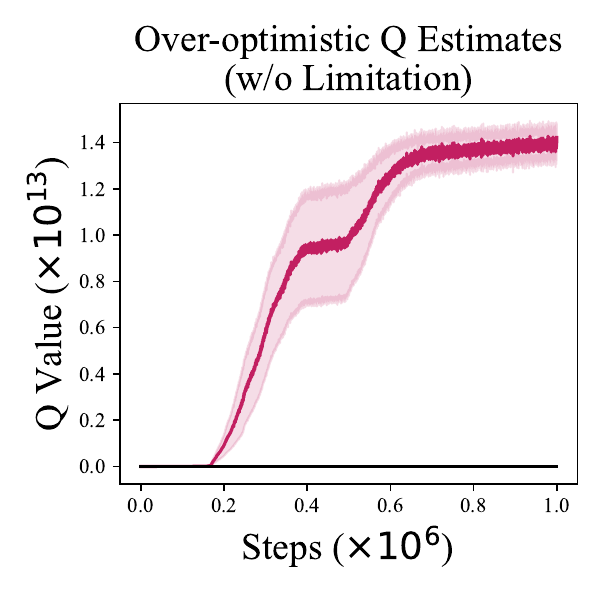}
        \label{fig:Q_value_wolmt}
    }
    \caption{(a) illustrates the evolving range of the difference between the imagined value and the limiting value during training. (b) shows the exponential growth of false optimistic value estimations in the w/o limitation scenario on the hopper-medium-v2 task.}
    \label{fig:ILQ_deep_demo_in_Q}
\end{figure}

\subsection{Limitations of Theoretical Results}
Our theoretical analyses - consistent with most theoretical works in both online and offline RL - assumes tabular MDPs, as formal guarantees under neural network function approximation remain challenging. We will note it as a direction for future research.

\subsection{Computational Cost} 
Regarding computational efficiency, we provide detailed comparisons of computation costs across different methods (measured on a Tesla V100 server) in Table \ref{tab:computation_cost}, which shows ILQ achieves competitive efficiency relative to baselines.
\begin{table}[!ht]
  \caption{Training time per 100 steps on hopper-medium task.}
  \label{tab:computation_cost}
  \small
  \centering
  \begin{tabular}{ccccccc}
    \toprule
    & BEAR & CQL & IQL & MCQ & DTQL & ILQ\\
    \midrule
    Time(s)    & 3.06 & 2.14  & 1.01 & 3.05       & 2.24       & 2.21 \\
        \bottomrule
  \end{tabular}
\end{table}

\subsection{Discussion of Lipschitz Continuity Assumption} 
The Lipschitz condition on reward functions is commonly adopted in offline RL theoretical analyses \cite{huang2024OAC-BVR}, though it represents a strong practical assumption.
Mathematically, any continuously differentiable function on a compact set satisfies the Lipschitz condition. In practice, we can verify this by checking: (1) whether the real-world reward function is sufficiently smooth (continuously differentiable), and (2) whether the action space is bounded (compact).
For our experimental environments: (1) In MuJoCo and Adroit tasks, the reward functions are continuous and actions are bounded within $[-1,1]^{|\mathcal{A}|}$, so the condition typically holds. (2) For Maze tasks with sparse rewards, the assumption theoretically fails.

\end{document}